\documentclass{article}

\usepackage{microtype}
\usepackage{graphicx}
\usepackage{subcaption}
\usepackage{booktabs} 

\usepackage{hyperref}

 \usepackage[preprint]{icml2026}

\usepackage{amsmath}
\usepackage{amssymb}
\usepackage{mathtools}
\usepackage{amsthm}

\usepackage[capitalize,noabbrev]{cleveref}

\theoremstyle{plain}
\usepackage{hyperref}
\usepackage{url}
\usepackage{wrapfig}
\usepackage{makecell}
\usepackage{amsthm}
\usepackage{comment}
\usepackage{refcount}
\usepackage{graphicx}
\usepackage{multirow}
\usepackage{amsmath, amssymb}
\usepackage[utf8]{inputenc} 
\usepackage[T1]{fontenc}    
\usepackage{hyperref}       
\usepackage{url}            
\usepackage{booktabs}       
\usepackage{amsfonts}       
\usepackage{nicefrac}       
\usepackage{microtype}      
\usepackage{xcolor}         
\usepackage{bbm}

\newcolumntype{?}{!{\vrule width 1pt}}
\newcommand{\argmin}{\mathop{\mathrm{argmin}}} 
\newcommand{\argmax}{\mathop{\mathrm{argmax}}} 
\newtheorem{theorem}{Theorem}[section]
\newtheorem{proposition}[theorem]{Proposition}
\newtheorem{lemma}[theorem]{Lemma}
\newtheorem{corollary}[theorem]{Corollary}
\newtheorem{conjecture}[theorem]{Conjecture}
\newtheorem{fact}[theorem]{Fact}
\newtheorem{definition}[theorem]{Definition}
\newtheorem{assumption}[theorem]{Assumption}
\usepackage[textsize=tiny]{todonotes}

\icmltitlerunning{Why the Counterintuitive Phenomenon of Likelihood Rarely Appears in Tabular Anomaly Detection?}

\begin{document}

\twocolumn[
  \icmltitle{Why the Counterintuitive Phenomenon of Likelihood \\ Rarely Appears in Tabular Anomaly Detection with Deep Generative Models?}



  \icmlsetsymbol{equal}{*}

  \begin{icmlauthorlist}
    \icmlauthor{Donghwan Kim}{yonsei}
    \icmlauthor{Junghun Phee}{georgia}
    \icmlauthor{Hyunsoo Yoon}{yonsei}
  \end{icmlauthorlist}

  \icmlaffiliation{yonsei}{Department of Industrial Engineering, University of Yonsei, Republic of Korea, Seoul}
    \icmlaffiliation{georgia}{H. Milton Stewart School of Industrial and Systems Engineering (ISyE), Georgia Institute of Technology, USA, GA}
    \icmlcorrespondingauthor{Hyunsoo Yoon}{hs.yoon@yonsei.ac.kr}

  \icmlkeywords{Machine Learning, ICML}

  \vskip 0.3in
]



\printAffiliationsAndNotice{}  

\begin{abstract}
Deep generative models with tractable and analytically computable likelihoods, exemplified by normalizing flows, offer an effective basis for anomaly detection through likelihood-based scoring. We demonstrate that, unlike in the image domain where deep generative models frequently assign higher likelihoods to anomalous data, such counterintuitive behavior occurs far less often in tabular settings. We first introduce a domain-agnostic formulation that enables consistent detection and evaluation of the counterintuitive phenomenon, addressing the absence of precise definition. Through extensive experiments on 47 tabular datasets and 10 CV/NLP embedding datasets in ADBench, benchmarked against 13 baseline models, we demonstrate that the phenomenon, as defined, is consistently rare in general tabular data. We further investigate this phenomenon from both theoretical and empirical perspectives, focusing on the roles of data dimensionality and difference in feature correlation. Our results suggest that likelihood-only detection with normalizing flows offers a practical and reliable approach for anomaly detection in tabular domains.
\end{abstract}

\setcounter{footnote}{0} 
\section{Introduction}
\label{Intro}
Generative models, including variational autoencoders (VAEs) \citep{kingma2014vae}, normalizing flows (NFs) \citep{dinh2015nice}, and generative adversarial networks (GANs) \citep{goodfellow2014generative}, are widely used to model complex data distributions across diverse applications such as industrial diagnostics, medical imaging, and financial risk assessment. Among these, normalizing flows are particularly well-suited for anomaly detection due to their ability to compute estimated likelihoods, providing a straightforward mechanism for detecting out-of-distribution (OOD) samples. \interfootnotelinepenalty=10000 \footnote{
Although the two tasks slightly differ, we consider OOD detection and anomaly detection to be the same task, and we will utilize the term anomaly detection. Task definitions are presented in Appendix~\ref{def_ood_ano}.}  The simplest anomaly detection approach with normalizing flows is to assume that normal data $\bold x \in \mathbb{R}^d$ follows the distribution $ P$ of normal data, and anomalous data $\bold x' \in \mathbb{R}^d$ follows a distribution $Q\neq P$, and to determine that a given data $\bold x_{\text{test}} \in \mathbb{R}^d$ is an anomaly if its likelihood $ \phi_P(\bold x_{\text{test}})$ is lower than a predefined threshold $ \alpha$ when tested. We refer to this method as NF-SLT (Normalizing Flow with Simple Likelihood Test).

This methodology is based on the intuition that anomalous data are less likely to be observed in the distribution of normal data. However, in the image domain, models such as normalizing flows often assign similar or even lower likelihoods to in-distribution data than to OOD data. In other words, data used for training can receive lower likelihoods than anomalous samples. \citet{nalisnick2018do} demonstrate that when CIFAR-10 \citep{alex2009learning} is used as training data (In-distribution) and SVHN \citep{netzer2011reading} is used as the test data (Out-of-distribution) of a model that can obtain the likelihood of input data, SVHN has a higher likelihood than CIFAR-10. This is counterintuitive because the likelihood of OOD data is higher than that of in-distribution data. Therefore, it can be inferred that if anomaly detection is performed using only the likelihood of the input data, detection may fail in certain cases (i.e., occurrence of a counterintuitive phenomenon). Refer to Section~\ref{related_counter} for more details about counterintuitive phenomenon.

However, the following question arises: does this phenomenon also occur in tabular data anomaly detection? \citet{kirichenko2020normalizing} demonstrates that although the likelihood of in-distribution/OOD data overlaps for the normalizing flow in the tabular anomaly detection, it is limited by the fact that only two datasets are shown by setting each as in-distribution data/OOD data. In addition, there is no comparison with other tabular AD baseline models. A common argument is that assigning likelihoods higher than that of normal data to anomalies is sufficient to demonstrate a counterintuitive phenomenon. Regardless, the interpretation has its limitations. First, the view is contradictory since the argument would consider any result outside 100\% AUROC as counterintuitive. Second, likelihood inversion can arise from intrinsic dataset difficulty, for example, when normal and abnormal samples are hard to distinguish, rather than from the phenomenon itself. 


This calls for a more sophisticated way to determine whether a counterintuitive phenomenon occurs, for example by comparing the generative model’s performance with that of other models (e.g., DeepSVDD, OCSVM), since simple approaches such as directly comparing the likelihoods of normal and abnormal data have inherent limitations. Hence, it remains unclear whether the counterintuitive likelihood phenomenon occurs in the tabular domain. To address this gap, we first propose a clearer definition of the currently vaguely defined counterintuitive phenomenon based on the observation in likelihood-based tests for models with estimated likelihoods, allowing the concept to be applied across different domains. Building on this definition, we conduct an extensive set of experiments to examine whether the simple likelihood test, previously criticized for its limitations in image anomaly detection, remains effective in the context of tabular anomaly detection. Consequently, we empirically demonstrate that almost all datasets in ADBench \citep{han2022adbench}, a tabular AD benchmark dataset, do not exhibit counterintuitive phenomena in the tabular domain, and even NF-SLT outperforms comparison models in simple likelihood tests. Furthermore, we demonstrate its success in the tabular domain theoretically and empirically from the perspective of dimensionality and feature correlation.

To explain why this counterintuitive phenomenon does not occur in the tabular domain, we use the following two facts:
\begin{fact}[Lower Dimension]
\label{fact_lower_dim}
Images typically have three dimensions: height, width, and channel, while tabular data generally have lower dimensionality, consisting of a single feature vector without spatial structure.
\end{fact}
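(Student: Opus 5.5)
Since Fact~\ref{fact_lower_dim} is a descriptive statement about data formats rather than a deductive claim, I will justify it in two parts: a structural argument about how each modality is represented, and an empirical check of actual dimensionalities against the benchmarks used in this paper. For the structural part, I will formalize an image as a tensor $\bold x \in \mathbb{R}^{H \times W \times C}$, indexed by height, width and channel. It becomes a flat vector in $\mathbb{R}^{d}$ with $d = HWC$ only after vectorization, and the pixel grid imposes a neighborhood structure on the indices. A tabular sample, by contrast, is natively a single feature vector $\bold x \in \mathbb{R}^d$, and its coordinates carry no canonical ordering or adjacency. This gives the ``no spatial structure'' half of the statement directly: permuting the columns of a table leaves the data semantically unchanged, whereas permuting pixels does not.

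For the ``lower dimensionality'' half, I will compare the effective $d$ on the two sides. For the image benchmarks in which the counterintuitive phenomenon was reported, I will compute $d = HWC$. For example, CIFAR-10 and SVHN give $32 \cdot 32 \cdot 3 = 3072$, and this is the setting of \citet{nalisnick2018do}. For the tabular side, I will tabulate the feature counts of the 47 ADBench tabular datasets and report their median and maximum, which I expect to lie mostly in the tens to low hundreds. I will then state the fact quantitatively, namely that the typical tabular $d$ is one to two orders of magnitude smaller than the image $d$.

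The main obstacle is that the word ``generally'' has to carry real weight, because a few tabular datasets have comparatively many features. The 10 CV/NLP embedding datasets also occupy an intermediate regime, with embedding dimensions in the hundreds. I will handle this by phrasing the fact distributionally, in terms of the median and most of the datasets, and by reporting the outliers explicitly rather than claiming a uniform bound. I will also note that the embedding datasets, although higher-dimensional than raw tables, still lack spatial structure and are far below raw-pixel dimensionality. With this framing, the fact holds as an empirical regularity that can serve as a premise for the later theoretical analysis of dimensionality.
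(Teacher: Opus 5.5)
Your proposal is correct and follows the paper's own support for this fact, which is also an empirical comparison: most ADBench tabular datasets have fewer than 100 features, whereas CIFAR-10 already has $32\cdot 32\cdot 3=3072$, and ``generally'' covers exceptions such as InternetAds ($d=1555$) and genomics-type tables with tens of thousands of features, which your ADBench-only tabulation would not surface and which the paper explicitly sets aside as atypical. One small correction to a side remark: the paper reports the ADBench CIFAR-10/SVHN embeddings as 1000-dimensional, only about a third of the 3072 raw pixels, so ``smaller than'' is accurate but ``in the hundreds'' and ``far below raw-pixel dimensionality'' overstate the gap.
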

\begin{fact}[Correlation of Features]
\label{fact_hetero}
Images exhibit strong local pixel correlations, which allows models like CNN to effectively capture spatial relationships between neighboring pixels. In contrast, tabular data does not assume any specific structural relationship between features.
\end{fact}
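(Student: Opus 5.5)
Since the statement is an informal, descriptive claim rather than a theorem derived from earlier results, I would support it by turning each of its two halves into a measurable quantity and checking that quantity against the data used later in the paper (the ADBench tabular datasets and standard image benchmarks such as CIFAR-10 and SVHN). For the image half, I would take a normalized image $\bold x \in \mathbb{R}^{H \times W \times C}$. For every spatial offset $\delta$, I would compute the average absolute Pearson correlation
\[
\rho_{\text{img}}(\delta)=\frac{1}{|S_\delta|}\sum_{(u,v)\in S_\delta}\bigl|\mathrm{corr}(x_u,x_v)\bigr|,
\]
where $S_\delta$ is the set of pixel pairs at Chebyshev distance $\delta$. The expected outcome is that $\rho_{\text{img}}(1)$ is close to $1$ and that $\rho_{\text{img}}(\delta)$ decays monotonically in $\delta$. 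This decay is exactly the local-correlation structure that convolutional inductive biases exploit, which connects the first sentence of Fact~\ref{fact_hetero} to the behavior of CNN-based flows.

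For the tabular half, I would make precise what ``no specific structural relationship'' means. The key observation is that the columns of a tabular dataset have no canonical ordering, so any notion of ``neighboring features'' is arbitrary. Concretely, I would compute the analogous profile $\rho_{\text{tab}}(k)$, the mean absolute correlation between features whose column indices differ by $k$. I would then show that this profile is flat in $k$ up to sampling noise, and that it is statistically indistinguishable from the same profile computed after a uniformly random permutation of the columns. I would test this with a permutation test on the slope of $\rho_{\text{tab}}(k)$ against $k$. If the test fails to reject, correlation does not depend on the arrangement of features, in sharp contrast to the image case. As a complementary summary, I would report the overall mean absolute off-diagonal correlation for each dataset. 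This shows that tabular correlations are generally weaker and heterogeneous, concentrated in a few feature pairs rather than spread along a spatial lattice.

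The main obstacle is that Fact~\ref{fact_hetero} is a statement about typical data, not a universal truth. Some tabular datasets, for instance those built from sensor time series or from flattened CV/NLP embeddings, may show ordered or strong correlations, and some images with little texture may show weak local correlation. To handle this, I would present the argument distributionally. I would report the fraction of ADBench datasets whose correlation profile is permutation-invariant, and I would treat the exceptions explicitly as edge cases. This also sets up the later theoretical and empirical analysis relating the difference in correlation between normal and anomalous data to the counterintuitive phenomenon.
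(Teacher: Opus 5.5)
Your route is genuinely different from the paper's. The paper does not prove Fact~\ref{fact_hetero}. It takes the image half from the literature: local pixel correlations from \citet{kirichenko2020normalizing}, and the locality bias of CNNs from \citet{battaglia2018relational}. It then supports the image--tabular contrast in Section~\ref{feature_hetero} with a \emph{global} proxy, the $d$ Ratio, defined as the TwoNN/MLE intrinsic dimension divided by the ambient dimension. It validates this proxy on the autoregressive Gaussian model of Equation~\ref{eq:corr_sigma}, where larger $\rho$ gives a smaller estimated ID. It then shows image benchmarks at roughly $0.002$--$0.02$, against much larger values for ADBench (Table~\ref{tab:id_estimate}, Figure~\ref{fig:amb_id}). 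It explicitly sets aside pairwise Pearson correlation, which is your main tool, as blind to global structure.

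The trade-off runs both ways:
\begin{itemize}
\item The $d$ Ratio captures collective and nonlinear dependence, and it is the quantity used downstream (bottom of Table~\ref{tab:id_estimate}). However, TwoNN and MLE depend only on Euclidean nearest-neighbour distances, so the $d$ Ratio is invariant under permuting coordinates. It therefore says nothing about \emph{locality} or ordering.
\item Your offset and lag profiles target exactly the ``local'' and ``structural'' wording of the Fact, which the ID analysis cannot see. But pairwise linear correlations can be near zero for data on a curved low-dimensional manifold. So your ``tabular correlations are weaker'' summary need not agree with the ID-based notion that the paper ties to likelihood failures.
\end{itemize}

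The tabular half of your plan has concrete problems.
\begin{itemize}
\item Concluding from a non-rejection that ``correlation does not depend on the arrangement of features'' is not a valid inference.
\item Here the test is also close to vacuous. Many ADBench datasets have $d\le 10$; http, smtp and skin have $d=3$. The slope of $\rho_{\text{tab}}(k)$ is then fitted on a handful of lags and the test has almost no power. Your reported fraction of permutation-invariant datasets would largely measure that power.
\item A slope test also misses non-monotone (for example periodic) dependence on $k$.
\item Several ADBench datasets do have an intrinsic column order, and an empirical-flatness criterion would likely reject there. Satellite, landsat and satimage-2 encode a $3\times 3$ pixel neighbourhood in four spectral bands, optdigits an $8\times 8$ grid, and waveform and pendigits consecutive samples.
\end{itemize}

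The second sentence of the Fact is about what the tabular format \emph{assumes}, not what every dataset exhibits. Your observation that columns have no canonical ordering already carries that half. The lag statistics should be presented as corroboration, not as the argument.

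Finally, the paper's later correlation analysis is not about a difference in correlation between normal and anomalous data. It concerns the overall strength of correlation within the data: image versus tabular, and across tabular datasets via the $d$ Ratio.
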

Taking ADBench as an example, most datasets have fewer than 100 features. However, CIFAR-10, one of the image datasets with small dimensions, has a dimension of 3072. This shows that the curse of dimensionality may be more severe in the image domain than in the tabular domain, and we analyze how this affects likelihood tests using normalizing flows. Additionally, \citet{kirichenko2020normalizing}  argued that in image OOD detection, normalizing flows fail to capture semantic information effectively because images exhibit local pixel correlations. Based on Fact \ref{fact_hetero},  we extend this discussion to the tabular domain and claim that normalizing flows are less affected by feature correlation in this setting. To justify this claim,  we quantify overall feature correlation by measuring the reduction of intrinsic dimension (ID) relative to the ambient dimension. We then explain why this reduction reflects the effect of correlation, and compare the degree of ID reduction observed in tabular and image data. Although there are datasets in the tabular domain that have higher dimensions than images or strong correlation (e.g., genomics, see Appendix~\ref{real_experiment_dimension_appendix}), these have very different characteristics from typical tabular datasets, so it is reasonable to assume that the overall trends in the two domains are well represented by the examples above.

In conclusion, the contribution of our study can be described as threefold. 
\begin{itemize}
  \item We provide a \textbf{domain-agnostic definition of the counterintuitive phenomenon} in simple likelihood tests and empirically show that simple likelihood testing with normalizing flows in the tabular domain rarely leads to this phenomenon, outperforms comparison models.
  \item We verify our results using all \textbf{47 tabular datasets and 10 CV/NLP embedding datasets} from ADBench without selection bias \citep{shwartz2022tabular} and compare against 13 anomaly detection baselines.
  \item We provide \textbf{a theoretical and empirical analysis} of why the counterintuitive phenomenon does not occur in the tabular domain, unlike in images, by linking it to the difference in dimension and feature correlation.
\end{itemize}

\section{Related Work}
\label{realted_work}
\subsection{Normalizing Flow}
\label{related_nf}
Normalizing flow is one of the generative models that converts input data $ \bold x\in \mathbb{R}^d$, which follows an unknown distribution with density $ p_{\bold x}$, into $ \bold z\in \mathbb{R}^d$; in addition, it follows a simple distribution $ p_{\bold z}$ that is typically chosen as standard Gaussian $ \mathcal{N}(0,I_d)$ \citep{dinh2017density}, using an invertible function $ f: \mathbb{R}^d \rightarrow \mathbb{R}^d$ that consists of complex functions such as neural networks \citep{dinh2015nice}, such that $ p_{\bold x}$ can be written as a formula in terms of $ p_{\bold z}$.  At this point, $p_{\bold x}$ can be written in terms of $ p_{\bold z}$ and the Jacobian determinant via the change-of-variables formula, as in Equation \ref{eq:nf}.

\begin{equation}
\label{eq:nf}
    \begin{split}
        \log p_{\bold x}(\bold x) = \log p_{\bold z}(\bold z) + \log |J|, J = \det\frac{\partial \bold z}{\partial \bold x} 
    \end{split}
\end{equation}

In general, the model is trained to maximize the log-likelihood $ \log p_{\bold x}(\bold x)$ of the training data, and approximates the distribution of the input data \citep{caterini2022entropic}. Normalizing flows can be categorized by whether the determinant of the Jacobian (i.e., the volume term) is fixed \citep{dinh2015nice}  or varies with the input \citep{rezende2015variational,dinh2017density,kingma2018glow, behrmann2019invertible, chen2019residual, durkan2019neural}. When sampling new data, sampling is performed by extracting it from the pre-defined $ p_\bold z$ and inputting it as the input of $ f^{-1}$. The normalizing flow has the advantage of being able to obtain the estimated likelihood of the input data, unlike models such as variational autoencoder and generative adversarial network. Additionally, normalizing flows have the advantage of not requiring approximate likelihood inference techniques \citep{nalisnick2018do}. However, normalizing flow has two constraints: (1) the computational cost of the Jacobian determinant must remain reasonable, and (2) the inverse of $ f$ must exist. Therefore, the following methodologies were utilized to ensure the ease of Jacobian calculation and the existence of the inverse $ f^{-1}$: methods such as a coupling layer \citep{dinh2015nice, dinh2017density,kingma2018glow}, special-form transformations \citep{rezende2015variational}, and power-series approximations with Lipschitz constraint \citep{behrmann2019invertible, chen2019residual} are commonly used.

\subsection{Counterintuitive Phenomenon of Likelihood}
\label{related_counter}
\citet{nalisnick2018do} reported that a counterintuitive phenomenon regarding likelihood assignment occurs in models that can obtain estimated likelihood, such as normalizing flow, in the image domain. This study lays the foundation for identifying the cause of this phenomenon or suggesting solutions. \citet{kirichenko2020normalizing, schirrmeister2020understanding} improved anomaly detection performance by modifying flow architectures. In particular, the latter introduced an approach that reflects the hierarchical data structure, thereby improving detection performance. \citet{Serrà2020Input} quantified complexity through a general compression algorithm such as PNG, based on experimental results, demonstrating that simple images exhibit higher likelihood, and presented an anomaly score combining the likelihood and complexity terms. \citet{kamkari2024a} used Local Intrinsic Dimension (LID) to measure an image's simplicity and proposed a dual thresholding method for LID and likelihood to improve anomaly detection performance. \citet{morningstar2021density,osada2024understanding,ahmadian2021likelihood} mitigated the drawback of using only a single likelihood score by estimating the density of a vector that combines the likelihood with several auxiliary statistics (e.g., complexity, the log-determinant of the Jacobian). \citet{nalisnick2019detecting} demonstrated the perspective that detection may fail because in-distribution data are located in the typicality set \citep{cover1999elements} and OOD data is in the high density set. \citet{zhang2021understanding} presented the view that the counterintuitive phenomenon occurs due to misestimation of the model. \citet{le2021perfect} demonstrated that even with a perfect model, simple likelihood-based methods can fail due to variants in the representation. \citet{ren2019likelihood} improved detection performance by using the likelihood ratio between the background and semantic models and \citet{caterini2022entropic} explained the cause of this phenomenon from an entropic perspective and why the likelihood ratio model works well. 

However, these works primarily focus on image-domain mechanisms or remedies and do not establish a domain-agnostic criterion for counterintuitive failures, nor do they quantify how often such failures occur across a broad suite of tabular benchmarks. Our work fills this gap with Definition \ref{def:counterintuitive} and large-scale tabular evaluation, complemented by a dimension- and correlation-based analysis of when likelihood-only testing becomes fragile.

\section{Definition of Counterintuitive Phenomenon}
\label{def_counter}
Earlier research \citep{kirichenko2020normalizing} noted instances where in-distribution and OOD data had overlapping likelihoods in tabular datasets, but these findings were limited to only a few datasets and lacked comprehensive comparisons with other anomaly detection models. To address limitations in prior work’s explanations of the counterintuitive phenomenon, we propose a generalized definition of the counterintuitive phenomenon that applies to diverse domains. To formalize this phenomenon, we begin by establishing two core assumptions:
\begin{assumption}[Relatively Low Performance]
\label{assump:counter_1}
If a counterintuitive phenomenon occurs, most comparison models should outperform the generative model on an anomaly detection task.
\end{assumption}
\begin{assumption}[High Performance Gap]
\label{assump:counter_2}
Even if the above condition is satisfied, the performance gap between the generative model and comparison models must be significant to qualify as a counterintuitive phenomenon. If the gap is small, it cannot be considered counterintuitive.
\end{assumption}
We now formalize this phenomenon using these assumptions.

\begin{definition}[Occurrence of Counterintuitive Phenomenon, Informal]
\label{def:counterintuitive}
Let $\textsc{AUROC}_0$ denote the AUROC of the likelihood-only test using the generative model $P_{\theta_0}$ on a normal/abnormal dataset pair $(P,Q)$, and let $\textsc{AUROC}_i$ denote that of the $i$-th comparison model for $i=1,\dots,k$.  
We say that a counterintuitive phenomenon occurs if both conditions hold:
\begin{align}
\frac{1}{k}\sum_{i=1}^k \mathbbm{1}\{\textsc{AUROC}_i > \textsc{AUROC}_0\} &> \beta, \\
\min_{i:\textsc{AUROC}_i > \textsc{AUROC}_0}(\textsc{AUROC}_i-\textsc{AUROC}_0) &> \gamma .
\end{align}
\end{definition}

Definition~\ref{def:counterintuitive} states that a counterintuitive phenomenon occurs when the proportion of comparison models whose AUROC exceeds that of the generative model $P_{\theta_0}$ is greater than $\beta$, and the minimum AUROC difference between $P_{\theta_0}$ and the models that outperform $P_{\theta_0}$ is greater than $\gamma$. Consequently, Definition \ref{def:counterintuitive} enables performance comparisons using relative AUROC, allowing us to determine whether a counterintuitive phenomenon has occurred, rather than merely inferring its presence from a low AUROC. Importantly, Definition 3.3 is modality-agnostic: it does not encode domain-specific patterns, but operationalizes counterintuitiveness as a relative failure (a non-trivial performance gap against baselines), separating it from intrinsic dataset difficulty. The fully rigorous formulation of Definition \ref{def:counterintuitive} is provided in Appendix \ref{rigorous_def}.

Consider the CIFAR-10 (in-distribution) vs. SVHN (out-of-distribution). According to \citet{morningstar2021density}, a simple likelihood test using the Glow \citep{kingma2018glow} yielded an AUROC of 6.4\%. In contrast, \citet{sun2022out} achieved AUROC scores exceeding 90\% with their proposed method and comparison models. Based on Definition~\ref{def:counterintuitive}, this case clearly demonstrates a counterintuitive phenomenon, as the generative model performs significantly worse than the comparison models. To explore whether this phenomenon occurs in tabular data, we conducted experiments to test if a counterintuitive phenomenon, as defined in Definition~\ref{def:counterintuitive}, appears in tabular anomaly detection datasets.

\section{Experiment}
\label{experiment}
\textbf{Dataset and Preprocessing} The experiment was conducted using the data split protocol in \citet{zong2018deep}. Under this protocol, 50\% of the normal data is used for training, and the remaining 50\% of the normal and abnormal data are used for testing. We used \textbf{all 47 tabular and 10 CV/NLP embedding datasets} presented in ADBench. Using the entire dataset was motivated by \citet{shwartz2022tabular}, who criticized that researchers often introduce selection bias by choosing specific datasets to inflate performance. To address this, we included all proposed benchmark datasets without exclusion. All models utilized the RobustScaler provided by the Python library Scikit-learn \citep{pedregosa2011scikit} to standardize the input data, except for NeuTraLAD and DRL. NeuTraLAD was trained without feature scaling, as applying RobustScaler led to a noticeable degradation in performance in our preliminary experiments. In contrast, DRL employed the StandardScaler, following the preprocessing configuration specified in the official implementation. This choice was made to ensure consistency with the original experimental setup and to enable a fair comparison with the reported results.

\textbf{Models} We compared the performance of 6 shallow AD models and 7 deep AD models. We implemented the shallow models using PyOD \citep{zhao2019pyod} and Scikit-learn \citep{pedregosa2011scikit}. The compared shallow models are PCA \citep{shyu2003novel}, LOF \citep{breunig2000lof}, IF \citep{liu2008isolation}, OCSVM \citep{scholkopf1999support}, COPOD \citep{li2020copod}, and ECOD \citep{li2022ecod}. The compared deep models are DAGMM \citep{zong2018deep}, DeepSVDD \citep{ruff2018deep}, GOAD \citep{Bergman2020Classification-Based}, NeuTraLAD \citep{qiu2021neural}, ICL \citep{shenkar2022anomaly}, MCM \citep{yin2024mcm}, DRL \citep{ye2025drl} and NF-SLT with NICE \citep{dinh2015nice}. For NF-SLT with NICE, we used 10 coupling layers and trained the model for 200 epochs with weight decay 1e-4. We optimized the negative log-likelihood of the latent variables using AdamW \citep{loshchilov2018decoupled} with a CosineAnnealingWarmRestarts learning rate scheduler \citep{loshchilov2017sgdr}. The batch size was set to 512, and the latent prior was fixed to $\mathcal{N}(0, I_d)$. Overall hyperparameter settings and implementation details are provided in Appendix \ref{model_detail_hyp}.

\textbf{Evaluation}
We evaluate all AD models using AUROC and AUPRC. For tabular datasets, we run 10 repeated trials and report the mean AUROC and the relative rank of each method in Table~\ref{tab:auroc_fair}. For the main results in Table~\ref{tab:auroc_fair}, we adopt a single global hyperparameter configuration per model, selected to maximize the AUROC averaged across all tabular datasets. Since default hyperparameters are not directly comparable across heterogeneous AD models, we tune each method over its predefined search space and fix one global configuration to avoid per-dataset oracle tuning. Specifically, each candidate configuration is evaluated with 10 trials, and we select the one that achieves the best aggregated mean AUROC. We additionally report (i) the standard deviation across the 10 trials and (ii) per-dataset performance under the selected global configurations in Appendix~\ref{add_experiment}. Appendix~\ref{model_detail_hyp} provides the hyperparameter search spaces for all models and includes a sensitivity analysis. Finally, additional flow variants for NF-SLT are reported in Appendix~\ref{nice_realnvp_comp}.

\begin{table}[!t]
\centering
\caption{(Top): Evaluation performance on 47 tabular datasets. (Bottom): Evaluation performance on 10 CV/NLP embedding datasets. Top2 Ratio indicates the proportion of datasets where a method ranks within the top-2 in AUROC, and Fail Ratio indicates the proportion where it ranks 9th or lower.}
\label{tab:auroc_fair}

\small
\setlength{\tabcolsep}{6pt}

\resizebox{\linewidth}{!}{
\begin{tabular}{lccccc}
\toprule
Method & AUROC $\uparrow$ & AUPRC $\uparrow$ & Avg.\ Rank $\downarrow$ & Top2 Ratio $\uparrow$ & Fail Ratio $\downarrow$ \\
\midrule
PCA       & 0.7715          & 0.5209          & 7.13                    & 0.15                  & 0.45 \\
LOF       & 0.8169          & 0.5606          & 6.11                    & 0.17                  & 0.26 \\
IF        & 0.8014          & 0.5060          & 6.23                    & 0.19                  & 0.21 \\
OCSVM     & 0.6562          & 0.3833          & 10.34                   & 0.06                  & 0.77 \\
COPOD     & 0.7471          & 0.4419          & 8.40                    & 0.11                  & 0.57 \\
ECOD      & 0.7425          & 0.4530          & 8.74                    & 0.06                  & 0.68 \\
DAGMM     & 0.6467          & 0.3468          & 11.45                   & 0.00                  & 0.89 \\
DeepSVDD  & 0.7687          & 0.5388          & 7.74                    & 0.02                  & 0.49 \\
GOAD      & 0.6086          & 0.4114          & 10.60                   & 0.04                  & 0.62 \\
NeuTraLAD & 0.8081          & 0.5694          & 6.17                    & 0.26                  & 0.30 \\
ICL       & 0.8208          & 0.6170          & 5.70                    & 0.30                  & 0.26 \\
MCM       & 0.7864          & 0.5383          & 7.34                    & 0.11                  & 0.36 \\
DRL       & 0.8363          & 0.6124          & 5.04                    & 0.21                  & 0.09 \\
NF-SLT    & \textbf{0.8575} & \textbf{0.6398} & \textbf{3.74}           & \textbf{0.40}         & \textbf{0.06} \\
\bottomrule
\end{tabular}
}\\[0.2cm]

\resizebox{\linewidth}{!}{
\begin{tabular}{lccccccc}
\toprule
Dataset & DeepSVDD & GOAD & NeuTraLAD & ICL & MCM & DRL & NF-SLT \\
\midrule
CIFAR-10      & 0.9103 & 0.9334          & 0.9405 & 0.9254 & 0.9381 & 0.8416 & \textbf{0.9527} \\
FashionMNIST  & 0.9117 & 0.9060          & 0.9360 & 0.9266 & 0.9380 & 0.8240 & \textbf{0.9455} \\
MNIST-C       & 0.8348 & 0.7741          & 0.8519 & 0.8257 & 0.8836 & 0.7312 & \textbf{0.8950} \\
MVTecAD       & 0.7542 & 0.7960          & 0.8874 & 0.8874 & 0.8408 & 0.7093 & \textbf{0.9100} \\
SVHN          & 0.5466 & 0.5366          & 0.5774 & 0.5626 & 0.5771 & 0.5251 & \textbf{0.5842} \\
20news        & 0.5547 & 0.5438          & 0.6001 & 0.6087 & 0.5995 & 0.5457 & \textbf{0.6547} \\
agnews        & 0.6630 & 0.5857          & 0.6509 & 0.6697 & 0.7252 & 0.5817 & \textbf{0.7591} \\
amazon        & 0.5833 & 0.5613          & 0.6010 & 0.6022 & 0.6049 & 0.5424 & \textbf{0.6194} \\
imdb          & 0.5090 & \textbf{0.5398} & 0.5393 & 0.5098 & 0.5090 & 0.4666 & 0.5013 \\
yelp          & 0.6490 & 0.6138          & 0.6620 & 0.6690 & 0.6750 & 0.5705 & \textbf{0.6971} \\
\bottomrule
\end{tabular}
}

\vspace{-0.7cm}
\end{table}

\textbf{Experiment Result}
  Consider Definition~\ref{def:counterintuitive}. If the counterintuitive phenomenon were also prevalent in the tabular domain, we would expect a method to exhibit a high fail ratio even if it performs well on some datasets, and the combination of a high top-2 ratio with many failures would indicate such behavior. Moreover, for any failed dataset, the minimum performance gap relative to competing models should be large. However, as shown in Table~\ref{tab:auroc_fair}, NF-SLT exhibits a lower fail ratio than both shallow and deep baselines and outperforms the other metrics overall. On the 'yeast' dataset, where NF-SLT shows relatively low performance, the minimum performance difference between MCM and NF-SLT in AUROC is only 0.02. Thus, this case does not satisfy the second condition in Definition~\ref{def:counterintuitive} and cannot be attributed to a counterintuitive phenomenon. NF-SLT also outperforms deep models on ADBench's CV/NLP embedding datasets, except for the 'imdb' dataset. Although NF-SLT underperforms the best baseline on 'imdb', the performance gap is marginal and again fails to satisfy the second condition of Definition~\ref{def:counterintuitive}, so we do not regard it as a counterintuitive case. To assess the sensitivity of Definition~\ref{def:counterintuitive}, we set the competitor pool to the full set of comparison models and swept thresholds over $\beta \in \left\{\frac{8}{13}, \frac{9}{13}, \frac{10}{13}, \frac{11}{13}, \frac{12}{13}, 1\right\}$ and $\gamma \in \{0.3, 0.4, 0.5, 0.6\}$, evaluating all possible $(\beta,\gamma)$ combinations. Across this entire grid, no tabular dataset was classified as counterintuitive, so no decision flips occurred; the flip frequency is therefore 0\% within the tested ranges. We further repeated this analysis with alternative competitor pools: (i) shallow-model competitors only and (ii) deep-model competitors only, adjusting $\beta$ appropriately to reflect the pool size. Under both alternative pools and the same $\gamma$ sweep, we again observed no counterintuitive cases and hence no decision flips. Additionally, we report detection performance on datasets dominated by categorical features and diverse anomaly types in Appendix~\ref{detection_performance_ano_types}, where NF-SLT again shows superior performance. For an additional consistency check, we also compared NF-SLT against other test methodologies such as the typicality test \citep{nalisnick2019detecting}; the corresponding results are summarized in Appendix~\ref{typicality_test_perf}.

\section{Why Is The Simple Likelihood Test Successful in Tabular Data?}
\label{why_simple}

\subsection{High Dimension Perspective}
\label{high_dim_perspective}
Based on Fact \ref{fact_lower_dim}, we explain why likelihood testing can succeed on tabular data due to their lower dimensionality. It has been reported that the case where the likelihood of normal and anomaly data is inverted in the image domain usually occurs when the normal data has a more complex texture than the anomaly data, that is, when the complexity of the normal data is higher than that of the anomaly data \citep{Serrà2020Input}. Additionally, we can interpret high data complexity as indicating high entropy of the underlying distribution. Hence, to explain why the counterintuitive phenomenon rarely occurs in the tabular domain, we extend the likelihood-gap expression of \citet{caterini2022entropic}, which characterizes the expected likelihood difference between normal and abnormal data in terms of entropy, and link it to Fact \ref{fact_lower_dim}.

Let the distribution of normal data be $P$, the distribution of abnormal data be $Q$, and let $P_\theta$ be a model such as normalizing flow that estimates the density of $P$. Then, the gap of the likelihood of each distribution estimated by $P_\theta$ can be expressed as follows:
\begin{align}
\label{eq:likelihood_gap}
\begin{split}
        &\mathbb{E}_{\textbf{x}\sim P}[\log P_{\theta}(\textbf{x})] - \mathbb{E}_{\textbf{x}\sim Q}[\log P_{\theta}(\textbf{x})]   \\
        &=  D_{KL}(Q||P_{\theta}) -D_{KL}(P||P_{\theta}) +\mathbb{H}(Q) -\mathbb{H}(P)
\end{split}
\end{align}
where $\mathbb{H}(P)$ is entropy of distribution $P$, and $D_{KL}(Q||P_{\theta})$ is the KL-divergence of distribution $Q$ and density estimation model like normalizing flow $P_\theta$. 
In Equation \ref{eq:likelihood_gap}, if the difference in entropy between the two distributions $\mathbb{H}(Q) -\mathbb{H}(P)$ is a very small negative number, the expectation gap of the likelihood can become negative. 
However, in the previous study by \citet{caterini2022entropic}, the effect of the dimension in expectation of likelihood gap was not analyzed, so we analyzed how the dimension can affect the expectation of likelihood gap and included it in Theorem \ref{therorem:gap_expectation_likelihood_indep} and the proof of this is reported in Appendix \ref{proof_theorem}.

\setcounter{theorem}{3}
\begin{theorem}[Impact of Dimensionality on Likelihood Gap]
\label{therorem:gap_expectation_likelihood_indep}
Let $P=\prod^d_{i=1}p_i(x_i)$ and $Q=\prod^d_{i=1}q_i(x_i)$ be independent $d$-dimensional continuous probability density models in $\mathbb{R}^d$ with same conditions as \text{Lemma \ref{lemma:kldivergence_approx}}. Let $P_{\theta}$ be a well-trained density estimation model approximates $P$ (i.e., $p_{\theta}(x) \rightarrow p(x) $ pointwisely as $\theta \rightarrow \theta_0$). If $\mathbb{H}(P)-\mathbb{H}(Q) > D_{KL}(Q||P)$, the lower bound of gap between the expectation of the likelihood for $P$ and $Q$ decreases linearly with respect to $d$.
\end{theorem}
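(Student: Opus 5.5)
The plan is to start from the identity in Equation \ref{eq:likelihood_gap} and use the product structure of $P$ and $Q$ to write every term as a sum over coordinates. For independent factors, entropy is additive: $\mathbb{H}(P)=\sum_{i=1}^d \mathbb{H}(p_i)$ and $\mathbb{H}(Q)=\sum_{i=1}^d \mathbb{H}(q_i)$. KL divergence between product measures is also additive: $D_{KL}(Q\|P)=\sum_{i=1}^d D_{KL}(q_i\|p_i)$. The model term $D_{KL}(\cdot\|P_\theta)$ is not a product a priori, so I handle it with Lemma \ref{lemma:kldivergence_approx}. My reading is that this lemma gives $D_{KL}(Q\|P_\theta)\to D_{KL}(Q\|P)$ and $D_{KL}(P\|P_\theta)\to 0$ as $\theta\to\theta_0$, under integrability or dominated-convergence conditions. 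Pointwise convergence $p_\theta\to p$, combined with those conditions, lets $\log p_\theta$ pass to the limit inside the expectations.

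Taking the limit, the gap in Equation \ref{eq:likelihood_gap} becomes
\begin{equation*}
G_d := D_{KL}(Q\|P) + \mathbb{H}(Q) - \mathbb{H}(P) = \sum_{i=1}^d \Bigl( D_{KL}(q_i\|p_i) + \mathbb{H}(q_i) - \mathbb{H}(p_i) \Bigr),
\end{equation*}
up to an error $\varepsilon_\theta$ that vanishes as $\theta\to\theta_0$. The hypothesis $\mathbb{H}(P)-\mathbb{H}(Q)>D_{KL}(Q\|P)$ says exactly that $G_d<0$. To obtain linear dependence on $d$, I bound each summand $g_i := D_{KL}(q_i\|p_i)+\mathbb{H}(q_i)-\mathbb{H}(p_i)$ uniformly. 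I will assume, or read off from the conditions of Lemma \ref{lemma:kldivergence_approx}, that the marginal entropies and KL terms are bounded, so that $m\le g_i$ for some constant $m<0$. Then $G_d\ge m\,d$, and the lower bound of the gap, $\mathbb{E}_P[\log P_\theta]-\mathbb{E}_Q[\log P_\theta]\ge m d-|\varepsilon_\theta|$, decreases linearly in $d$.

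The cleanest way to present this is the identically distributed (homogeneous) case $p_i=p_1$ and $q_i=q_1$, where $G_d=d\,g_1$ holds exactly. Here $g_1<0$ by the hypothesis, so the gap is negative and scales linearly with $d$. The heterogeneous case then follows by the per-coordinate bound above.

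The main obstacle is controlling the non-product model $P_\theta$: the limit must be uniform enough that $\varepsilon_\theta$ does not grow with $d$ faster than linearly. I would try to get this from the conditions in Lemma \ref{lemma:kldivergence_approx} by stating the result for fixed $d$ in the limit $\theta\to\theta_0$, where the error vanishes entirely. A second, lesser point is making sure the per-coordinate summands admit a common negative bound. Here I would appeal to the bounded-moment or bounded-support conditions of the lemma; failing that, I would state the result in the i.i.d. case.
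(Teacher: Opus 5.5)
Your proposal is correct and follows essentially the same route as the paper. Both arguments use the entropic decomposition of the gap, treat $D_{KL}(P\|P_\theta)$ as vanishing, replace $D_{KL}(Q\|P_\theta)$ by $D_{KL}(Q\|P)$ via Lemma~\ref{lemma:kldivergence_approx}, use coordinatewise additivity, and bound the resulting sum below by $d$ times the worst coordinate. The only differences are minor: the paper discards the nonnegative $D_{KL}(Q\|P)$ and bounds by $d\min_{i\in[d]}(\mathbb{H}(q_i)-\mathbb{H}(p_i))$, whereas you keep the per-coordinate KL terms (a slightly tighter bound) and make explicit the $d$-independence of the constant that the paper's $\min_{i\in[d]}$ leaves implicit.
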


According to Theorem \ref{therorem:gap_expectation_likelihood_indep}, even when $P_\theta$ is an almost perfect model, if $P$ and $Q$ are $d$-dimensional independent distributions and the difference in entropy between the two distributions is greater than $D_{KL}(Q||P)$, it can be verified that the lower bound of gap between the expectation of the likelihood for $P$ and $Q$ is negative and decreases linearly with the dimension. This shows that as the dimension increases, the phenomenon of inversion of the likelihood expectation of data sampled from each distribution can become more severe. Additionally, we show in Corollary \ref{cor:upper_bound_auroc} that under additional assumptions on the entropy of the distribution, not only the likelihood gap but also the upper bound of the AUROC, which is a practical and widely used evaluation metric, is inversely proportional to the dimension. The proof of this result is provided in Appendix \ref{proof_theorem}.

\setcounter{theorem}{5}
\begin{corollary}[Dimensionality and AUROC Upper Bound]
\label{cor:upper_bound_auroc}
Building on the assumptions of Corollary \ref{cor:special_case_gap_indep}, suppose the $n$-th absolute central moment of the log-likelihood difference, $\log p_\theta(Y) - \log p_\theta(X)$, scales as $\mathcal{O}(d^k)$ for some $n > 1$ and $k < n$. In this case, if the average log-likelihood gap becomes negative, the maximum achievable AUROC for distinguishing samples from $P$ and $Q$ is inversely related to the dimensionality $d$. This indicates that as the dimension increases, the likelihood test becomes fundamentally less effective at separating normal and abnormal samples. 
\end{corollary}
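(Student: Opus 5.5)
We reduce the AUROC statement to a tail bound on the log-likelihood difference, controlled by the linear drift from Corollary~\ref{cor:special_case_gap_indep} against the moment growth assumed here. Write $X\sim P$ (normal), $Y\sim Q$ (abnormal), independent, and set
\[
D=\log p_\theta(Y)-\log p_\theta(X),
\]
so that the likelihood test ranks $Y$ as anomalous exactly when $D<0$. Then, up to ties,
\[
\textsc{AUROC}=\Pr(\log p_\theta(X)>\log p_\theta(Y))=1-\Pr(D\ge 0).
\]
AUROC is high only if $D$ is rarely nonnegative. In the counterintuitive regime we want to show that $\Pr(D\ge 0)$ is close to one, i.e.\ the AUROC is small, with a quantitative rate in $d$.

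First I would invoke Corollary~\ref{cor:special_case_gap_indep}, which I take to be the special case of Theorem~\ref{therorem:gap_expectation_likelihood_indep} with identical per-coordinate marginals. Under its assumptions, when the gap is negative, $\mathbb{E}[D]=\mathbb{E}_Q\log p_\theta-\mathbb{E}_P\log p_\theta$ is positive and grows at least linearly, say $\mathbb{E}[D]\ge c\,d$ for some $c>0$ and large $d$. The constant is $c=\mathbb{H}(p_1)-\mathbb{H}(q_1)-D_{KL}(q_1\|p_1)$ up to the vanishing model error, which goes to zero as $\theta\to\theta_0$. Write $\mu_d=\mathbb{E}[D]$.

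Next I would bound the failure event by a generalized Markov (Chebyshev) inequality on the $n$-th absolute central moment. Since $\{D<0\}\subseteq\{|D-\mu_d|\ge \mu_d\}$,
\[
\textsc{AUROC}\le \Pr(|D-\mu_d|\ge \mu_d)\le \frac{\mathbb{E}|D-\mu_d|^n}{\mu_d^{\,n}}\le \frac{C d^k}{(c d)^n}=\mathcal{O}(d^{k-n}).
\]
Because $k<n$, this upper bound decays as a negative power of $d$. That is the sense in which the maximal achievable AUROC is inversely related to dimension, and it tends to $0$ as $d\to\infty$. For $n=2$ and independent coordinates, $k=1$ comes for free and gives an $\mathcal{O}(1/d)$ rate. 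I would remark on this to show the assumption is natural.

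The main obstacle is making the drift bound $\mu_d\ge cd$ rigorous for the trained model $p_\theta$ rather than for $P$ itself. The moment assumption takes care of the fluctuations, but the mean needs the pointwise convergence $p_\theta\to p$ to transfer to the expectations. My first attempt would be to use the uniform integrability conditions assumed in Lemma~\ref{lemma:kldivergence_approx} to pass the limit inside $\mathbb{E}_Q$ and $\mathbb{E}_P$. I would fix $\theta$ close enough to $\theta_0$ that the model error per coordinate is below $c/2$, which yields $\mu_d\ge (c/2)d$. A secondary point is that the AUROC identity must handle ties in $D$. Ties only add at most $\Pr(D=0)$, and that is already included in the event bounded above.
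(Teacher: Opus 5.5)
Your proposal is correct and follows the paper's proof essentially step for step: the polynomial Markov inequality applied to $|D-\mu|$ at threshold $\mu$, combined with the linear lower bound $\mu\ge cd$ from Corollary~\ref{cor:special_case_gap_indep}, gives $\textsc{AUROC}=\mathcal{O}(d^{k-n})$. Your inclusion $\{D<0\}\subseteq\{|D-\mu|\ge\mu\}$ is just the complementary form of the paper's $\Pr(D>0)\ge\Pr(|\mu-D|<\mu)$.

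One small correction: that corollary assumes $\mathbb{H}(p_i)-\mathbb{H}(q_i)>D_{KL}(q_i\|p_i)$ for every $i$, not identical marginals. So the constant should be $c=\min_{i\in[d]}\bigl(\mathbb{H}(p_i)-\mathbb{H}(q_i)-D_{KL}(q_i\|p_i)\bigr)$, as in the paper. Like the paper's argument, yours then implicitly needs this minimum to stay bounded away from zero as $d$ grows.
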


According to Corollary \ref{cor:upper_bound_auroc}, the upper bound on the achievable AUROC decreases as the dimensionality increases.
In turn, a tighter AUROC upper bound for NF-SLT increases the chance that it underperforms other models by a non-trivial margin, making the counterintuitive criterion in Definition \ref{def:counterintuitive} more likely to be satisfied. To validate this prediction, we conducted dimensionality-reduction experiments. Specifically, we applied ICA \citep{hyvarinen2000independent} to high-dimensional image data and retained a varying number of independent components. Using RealNVP, we then measured the AUROC as a function of the retained dimension, as reported in Table \ref{tab:ica_auroc}. This setup isolates the effect of dimensionality on likelihood ranking and provides empirical support for our theoretical claims.

\begin{table}[!h]
\caption{AUROC scores for likelihood tests as a function of dimensionality (number of PCs) using RealNVP with MLP (image preprocessed by ICA). The upper block corresponds to cases where $\mathbb{H}(P) > \mathbb{H}(Q)$, and the lower block corresponds to $\mathbb{H}(P) < \mathbb{H}(Q)$.}
\centering
\label{tab:ica_auroc}
\small
\setlength{\tabcolsep}{8pt}
\resizebox{\linewidth}{!}{
\begin{tabular}{lcccc}
\toprule
In-dist ($P$) / Out-dist ($Q$) & 1024 & 512 & 256 & 30 \\
\midrule
CIFAR-10 / SVHN  & 0.2311 & 0.2924 & 0.2984 & 0.3143 \\
CIFAR-100 / SVHN & 0.0843 & 0.1160 & 0.2036 & 0.3490 \\
CelebA / SVHN    & 0.1207 & 0.1782 & 0.2745 & 0.4711 \\
\midrule
SVHN / CIFAR-10  & 0.9917 & 0.9843 & 0.9486 & 0.8520 \\
SVHN / CIFAR-100 & 0.9933 & 0.9536 & 0.9137 & 0.8622 \\
SVHN / CelebA    & 0.9976 & 0.9811 & 0.9722 & 0.9481 \\
\bottomrule
\end{tabular}
}

\end{table}

The results in Table \ref{tab:ica_auroc} show that, when $\mathbb{H}(P) > \mathbb{H}(Q)$ holds, the AUROC increases as the dimensionality decreases. Notably, the improvement remains substantial even when the dimensionality is reduced to almost 1\% of the original dimension. In contrast, the cases to the right of the bold vertical line show decreasing AUROC as the dimension decreases. This matches the trivial behavior obtained by reversing the entropy condition, i.e., $\mathbb{H}(P) < \mathbb{H}(Q)$, in Theorem \ref{therorem:gap_expectation_likelihood_indep} and Corollary \ref{cor:upper_bound_auroc}. Therefore, even if the $\mathbb{H}(P) > \mathbb{H}(Q)$ condition is satisfied, tabular data can be considered more advantageous in the simple likelihood test because they are less exposed to the problems that arise in high dimensions, as indicated by Fact \ref{fact_lower_dim}.

We reported the experimental setting of Table~\ref{tab:ica_auroc} and examined the impact of dimensionality by simply applying PCA to assess how AUROC behaves when the independence assumption is violated, observing a trend consistent with Table~\ref{tab:ica_auroc}, with results reported in Appendix~\ref{real_experiment_dimension_appendix}. Furthermore, in the same appendix we provide additional results on the effect of dimension in real tabular data. Specifically, we find that applying PCA to reduce dimension can improve NF-SLT on the highest-dimensional tabular dataset in ADBench (InternetAds), consistent with our dimensionality analysis. In addition, we conducted an experiment to distinguish between two Gaussian distributions with different means using a likelihood test using a NICE and RealNVP consisting of ReLU-like functions, and found that as the dimension increases, the AUROC approaches 0.5. Since this phenomenon is also a case where AUROC seriously decreases simply as the dimension increases, we present experiments and our theoretical analysis about flow's latent vector in high dimensional space in Appendix \ref{additional_description}.

\subsection{Feature Correlation Perspective}
\label{feature_hetero}

In prior work, \citet{kirichenko2020normalizing} showed that likelihood inversion can arise in normalizing flows on image data due to strong local pixel correlations. Meanwhile, \citet{schirrmeister2020understanding} reported that, in the image domain, OOD detection performance can improve when flow architectures use multi-layer perceptrons (MLP) rather than convolutional neural network (CNN) in certain settings. As noted by \citet{battaglia2018relational}, CNNs exhibit an inductive bias known as locality, making them particularly effective for image data where local pixel correlations are strong. In contrast, MLPs have a weaker inductive bias and are therefore more suitable for tabular data, where strong structural correlations between features are not assumed.

Leveraging Fact \ref{fact_hetero} and noting that architectural inductive bias is shaped by domain-specific characteristics, we argue that the counterintuitive phenomenon is rare in tabular data. This is because tabular features are inherently heterogeneous, unlike the homogeneous features of image data. Images are homogeneous because all features are of the same type (pixels), take values in the same range (e.g., 0–255), and exhibit strong local correlations. By contrast, tabular features are heterogeneous: they include many types (e.g., continuous, discrete, categorical), their ranges or categories are defined independently, and no specific structural relationships between features are assumed.

\begin{figure*}[t!]
\begin{center}
    \begin{subfigure}{0.32\textwidth}
        \includegraphics[width=5cm]{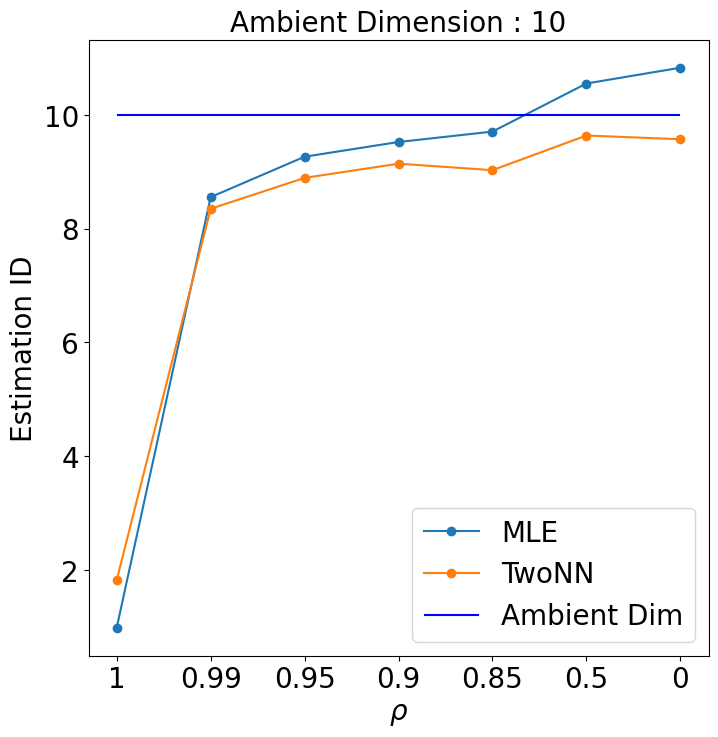}
    \end{subfigure}
    \begin{subfigure}{0.32\textwidth}
        \includegraphics[width=5cm]{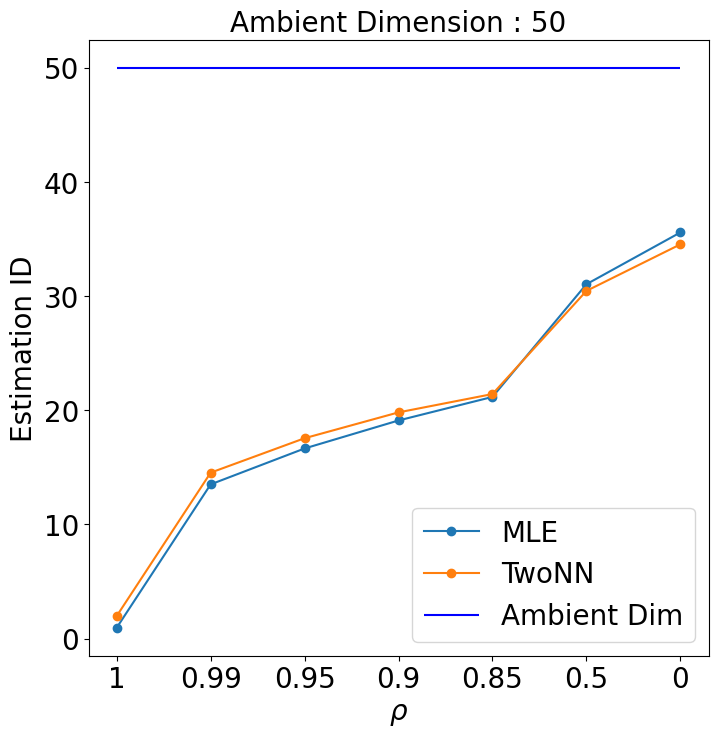}
    \end{subfigure}
    \begin{subfigure}{0.32\textwidth}
        \includegraphics[width=5cm]{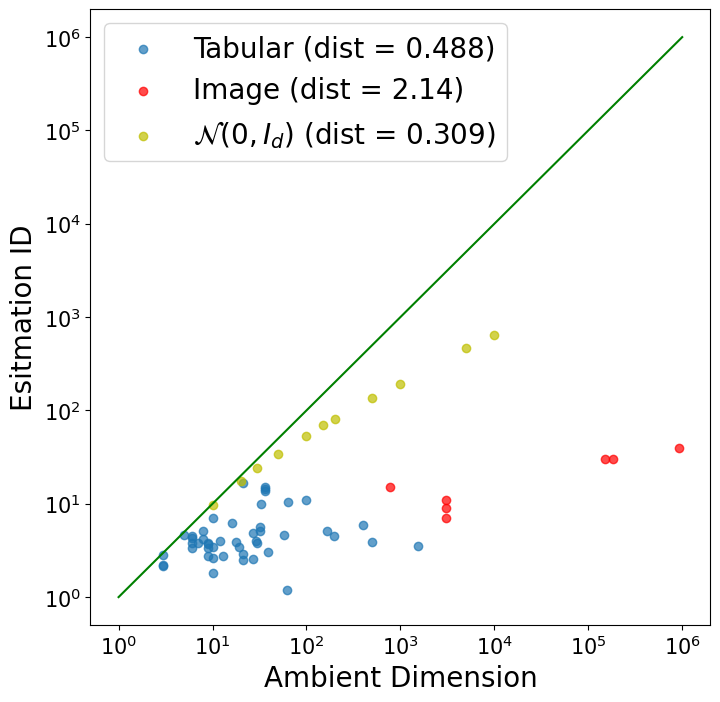}
    \end{subfigure}
    \caption{(Left, Center) : Estimation of ID according to changes in $\rho$. The x-axis represents the value of $\rho$ and the y-axis represents dimension. The horizontal line represents the ambient dimension. (Right) : Log-scale scatter plot for ID estimate and ambient dimension using the TwoNN method of real dataset and synthetic dataset sampled by $\mathcal{N}(0,I_d)$. The estimated ID for the image dataset are from Table 1 in \citet{pope2021the}. The distance recorded in the legend represents the average value of the distance between each point and the green line indicating when the ID estimate and ambient dimension are the same. Each blue/red point corresponds to one ADBench/image dataset.}
    \label{fig:amb_id}
\end{center}
\end{figure*}

Hence, we use overall feature correlation as a proxy to interpret the differences between the two domains in terms of heterogeneity versus homogeneity. To quantify feature heterogeneity and homogeneity, we measure overall feature correlation, which captures the strength of relationships between features. This is because correlation indicates the strength of the relationship between features, and if this strength is high, it can be interpreted as a strong tendency to follow a specific pattern (e.g., non-linear relationship), making it possible to determine whether the features are heterogeneous or homogeneous. However, common correlation measures (e.g., the Pearson correlation coefficient) are based on pairwise relationships and do not capture the global correlation structure. Hence, we quantify correlation indirectly by considering data $\bold x \in \mathbb{R}^d$ and showing that, as the strength of the correlation increases, the intrinsic dimension (ID) $d'$ becomes smaller relative to the ambient dimension $d$. Please refer to \citet{camastra2016intrinsic} for a definition of ID. Since the exact ID is unknown, we estimate it using MLE \citep{levina2004maximum} and TwoNN \citep{facco2017estimating}, two popular estimators based on fractal-theoretic arguments.

First, we illustrate the relationship between correlation and ID using a toy example based on Gaussian random variables. Let $X\sim\mathcal{N}(0,\Sigma)$, where $\Sigma$ has a $d$-dimensional autoregressive covariance structure given by
\begin{equation}
\small
\label{eq:corr_sigma}
    \begin{split}
        \Sigma =    \begin{bmatrix} 
   1 & \rho & \rho^2 & \cdots & \rho^{d-1}  \\
   \rho & 1 & \rho & \cdots & \rho^{d-2}  \\
   &&\vdots&& \\
   \rho^{d-1} & \rho^{d-2} & \rho^{d-3} & \cdots & 1  \\
   \end{bmatrix}, \: \rho \in [0,1]. 
    \end{split}
\end{equation}

We set the covariance $\Sigma$ as in Equation \ref{eq:corr_sigma} so that adjusting $\rho$ controls the strength of correlations among variables. As $\rho$ approaches 1, the correlations among variables become stronger. We then use the TwoNN and MLE ($k=10$) ID estimators to compute the ID of $X$ for ambient dimensions 10 and 50 and plot the resulting values as a function of $\rho$ in the left and center subplots of Figure \ref{fig:amb_id} . Through left and center subplot of Figure \ref{fig:amb_id}, we can interpret that both ID estimators estimate smaller ID values when $\rho$ increases. Therefore, it can be seen that stronger correlation between variables leads the ID to take values considerably smaller than the ambient dimension. The center plot of Figure \ref{fig:amb_id} shows that, even when $\rho=0$, the estimators underestimate the ID. This is consistent with prior observations that these estimators tend to underestimate when the true ID is large \citep{ansuini2019intrinsic,sharma2022scaling}. In such cases, it is reasonable to interpret the estimate as a lower bound on the true ID. In addition, we report in Appendix~\ref{sec:id_robustness} a robustness study of ID estimates on tabular datasets, comparing TwoNN and MLE under various experimental settings. Our analysis places particular emphasis on TwoNN, for which we observe strong robustness to common experimental choices, including sub-sampling and feature scaling.

Based on these results, we define the ratio of the intrinsic dimension to the ambient dimension, which we call the $d$ Ratio, as a measure of overall feature correlation. A higher degree of feature correlation results in a lower intrinsic dimension estimate, and thus a smaller $d$ Ratio, whereas weaker correlation yields a larger $d$ Ratio.

Subsequently, to validate the findings from synthetic data experiments on real datasets, we estimate the ID of real-world image and tabular datasets and compare these estimates to their corresponding ambient dimensions. To compare real-world datasets, we report ID estimates in Table \ref{tab:id_estimate} for four standard image benchmarks (MNIST, CIFAR-10, CIFAR-100, SVHN) and for the ADBench tabular datasets, using MLE ($k=20$) and TwoNN. According to Table \ref{tab:id_estimate}, all four image datasets have a $d$ Ratio of about 1\%, whereas the tabular datasets exhibit substantially higher $d$ Ratio values compared to the images. 

Additionally, we recorded a log-scale plot with each dimension as the axis in Figure \ref{fig:amb_id} to check the tendency of the ID estimation values and ambient dimensions of the tabular and image datasets. In Figure \ref{fig:amb_id}, the average distance from each point to the green line (where the ID equals the ambient dimension) is larger for image datasets than for tabular datasets.
Visually, the blue points (tabular datasets) lie much closer to the green line than the red points (image datasets). The yellow points, corresponding to samples from $\mathcal{N}(0,I_d)$, represent theoretically uncorrelated data and also cluster near the green line. These results indicate that tabular datasets have IDs closer to their ambient dimensions than image datasets, implying that tabular features exhibit weaker overall correlation.

\begin{table}[!h]
\centering
\caption{(Top): ID estimates for real datasets. (Bottom): The ratio of datasets whose AUROC rank is $\geq 3$ as a function of the $d$-Ratio threshold, where the $d$ Ratio denotes the ratio of the intrinsic dimension (estimated by TwoNN) to the ambient dimension. Image-dataset results are reported with reference to \citet{pope2021the}.}
\label{tab:id_estimate}

\small
\setlength{\tabcolsep}{10pt}

\resizebox{\linewidth}{!}{
\begin{tabular}{lcccc}
\toprule
Dataset & MNIST & CIFAR-10 & CIFAR-100 & SVHN \\
\midrule
MLE   & 13 & 26 & 23 & 19 \\
TwoNN & 15 & 11 &  9 &  7 \\
\midrule
$d$ Ratio & 0.019 & 0.003 & 0.002 & 0.002 \\
\midrule[0.8pt]
Dataset & magicgamma & satellite & landsat & waveform \\
\midrule
MLE   & 7 & 12 & 11 & 16 \\
TwoNN & 7 & 15 & 14 & 17 \\
\midrule
$d$ Ratio & 0.700 & 0.417 & 0.389 & 0.810 \\
\bottomrule
\end{tabular}
}\\[0.25cm]

\resizebox{\linewidth}{!}{
\begin{tabular}{lcccc}
\toprule
$d$ Ratio Threshold & 0.1 & 0.2 & 0.3 & 0.4 \\
\midrule
Rank $\geq 3$ Dataset Ratio & 0.160 & 0.440 & 0.640 & 0.760 \\
\midrule[0.8pt]
$d$ Ratio Threshold & 0.5 & 0.6 & 0.7 & 0.8 \\
\midrule
Rank $\geq 3$ Dataset Ratio & 0.840 & 0.840 & 0.92 & 1.000 \\
\bottomrule
\end{tabular}
}

\end{table}

Furthermore, for the 25 datasets on which NF-SLT does not achieve top performance (rank $\ge$ 3), Table \ref{tab:id_estimate} reports the fraction whose $d$ Ratio falls below a given threshold. These results indicate that NF-SLT fails to achieve high performance on most datasets with a low $d$ Ratio, even within the tabular domain. Therefore, we conclude that one factor behind the high detection performance of tabular data is the heterogeneous nature of its features. We further argue that this effect may act in combination with the improvement in anomaly detection performance obtained using MLPs, as reported in \citet{schirrmeister2020understanding}. 

Moreover, to account for the absence of counterintuitive phenomena on CV/NLP embeddings in Section \ref{experiment}, we estimate the intrinsic dimension of the ADBench CIFAR-10 and SVHN embedding representations using TwoNN. The estimated intrinsic dimensions are 23 and 18, respectively, whereas the ambient embedding dimension is 1000 (smaller than the 3072-dimensional raw pixel space). Despite the reduced ambient dimension, the embeddings exhibit higher intrinsic dimensionality than the original images, implying a larger $d$ Ratio. This suggests that the embedding features are less strongly correlated and, according to the TwoNN ID estimates, behave as if they lay on a higher-dimensional manifold than the raw pixels. Consequently, the effects of high dimensionality and strong feature correlations that degrade likelihood-based ranking are substantially reduced, allowing NF-SLT to perform effectively on these embeddings. This explanation is consistent with \citet{kirichenko2020normalizing}, which reported that the counterintuitive phenomenon is alleviated when using semantic embedding representations instead of raw pixels. Hence, unlike images, tabular data generally exhibit low feature correlation, which contributes to their heterogeneous nature and makes it difficult to satisfy the criterion in Definition \ref{def:counterintuitive} that the density model attains low relative AUROC, being worse than a large fraction of baselines by a non-trivial margin.

\section{Conclusion}
\label{conclusion} 
This paper examined whether the counterintuitive phenomenon in image anomaly detection also appears in tabular data. We first provided a domain-agnostic definition of this phenomenon, allowing it to be analyzed consistently across different data types. Using theoretical and empirical analyses with extensive experiments, we showed that this phenomenon rarely occurs in tabular data when using simple likelihood tests with normalizing flows. Our results show that flow-based likelihood tests effectively detect tabular anomalies, outperforming traditional models without facing the challenges observed in the image domain. For future work, we hope to see the development of flow architectures that better capture semantic information in tabular data, as well as theoretical and empirical studies that extend these methods to high-dimensional tabular datasets with correlation structures comparable to those in image data.

\section*{Impact Statement}
This work contributes to a clearer understanding of when likelihood-based anomaly detection with deep generative models is reliable. By demonstrating that the counterintuitive likelihood phenomenon observed in image domains rarely arises in general tabular settings, our findings may help practitioners better assess the applicability of simple likelihood tests in real-world anomaly detection tasks, such as monitoring and risk detection in structured data. We do not anticipate direct negative societal impacts from this work; however, as with other anomaly detection methods, care should be taken when deploying likelihood-based models in high-stakes settings, particularly when data distributions exhibit high dimensionality or strong feature correlations.

\nocite{langley00}

\bibliography{example_paper}

@inproceedings{yin2024mcm,
  title={MCM: Masked Cell Modeling for Anomaly Detection in Tabular Data},
  author={Yin, Jiaxin and Qiao, Yuanyuan and Zhou, Zitang and Wang, Xiangchao and Yang, Jie},
  booktitle={The Twelfth International Conference on Learning Representations},
  year={2024}
}

@article{fleury2007stability,
  title={A stability result for mean width of Lp-centroid bodies},
  author={Fleury, Bruno and Gu{\'e}don, Olivier and Paouris, Grigoris},
  journal={Advances in Mathematics},
  volume={214},
  number={2},
  pages={865--877},
  year={2007},
  publisher={Elsevier}
}

@inproceedings{guedon2014concentration,
  title={Concentration phenomena in high dimensional geometry},
  author={Gu{\'e}don, Olivier},
  booktitle={ESAIM: Proceedings},
  volume={44},
  pages={47--60},
  year={2014},
  organization={EDP Sciences}
}

@article{eldan2013thin,
  title={Thin shell implies spectral gap up to polylog via a stochastic localization scheme},
  author={Eldan, Ronen},
  journal={Geometric and Functional Analysis},
  volume={23},
  number={2},
  pages={532--569},
  year={2013},
  publisher={Springer}
}

@article{anttila2003central,
  title={The central limit problem for convex bodies},
  author={Anttila, Milla and Ball, Keith and Perissinaki, Irini},
  journal={Transactions of the American Mathematical Society},
  volume={355},
  number={12},
  pages={4723--4735},
  year={2003}
}

@article{klartag2007central,
  title={A central limit theorem for convex sets},
  author={Klartag, Bo'az},
  journal={Inventiones mathematicae},
  volume={168},
  number={1},
  pages={91--131},
  year={2007},
  publisher={Springer}
}

@article{chen2021almost,
  title={An almost constant lower bound of the isoperimetric coefficient in the KLS conjecture},
  author={Chen, Yuansi},
  journal={Geometric and Functional Analysis},
  volume={31},
  pages={34--61},
  year={2021},
  publisher={Springer}
}

@article{guedon2011interpolating,
  title={Interpolating thin-shell and sharp large-deviation estimates for lsotropic log-concave measures},
  author={Gu{\'e}don, Olivier and Milman, Emanuel},
  journal={Geometric and Functional Analysis},
  volume={21},
  number={5},
  pages={1043--1068},
  year={2011},
  publisher={Springer}
}

@article{zhao2019pyod,
  title={Pyod: A python toolbox for scalable outlier detection},
  author={Zhao, Yue and Nasrullah, Zain and Li, Zheng},
  journal={Journal of machine learning research},
  volume={20},
  number={96},
  pages={1--7},
  year={2019}
}

@inproceedings{shyu2003novel,
  title={A novel anomaly detection scheme based on principal component classifier},
  author={Shyu, Mei-Ling and Chen, Shu-Ching and Sarinnapakorn, Kanoksri and Chang, LiWu},
  booktitle={Proceedings of the IEEE foundations and new directions of data mining workshop},
  pages={172--179},
  year={2003},
  organization={IEEE Press Piscataway, NJ, USA}
}

@inproceedings{breunig2000lof,
  title={LOF: identifying density-based local outliers},
  author={Breunig, Markus M and Kriegel, Hans-Peter and Ng, Raymond T and Sander, J{\"o}rg},
  booktitle={Proceedings of the 2000 ACM SIGMOD international conference on Management of data},
  pages={93--104},
  year={2000}
}

@inproceedings{liu2008isolation,
  title={Isolation forest},
  author={Liu, Fei Tony and Ting, Kai Ming and Zhou, Zhi-Hua},
  booktitle={2008 eighth ieee international conference on data mining},
  pages={413--422},
  year={2008},
  organization={IEEE}
}

@article{scholkopf1999support,
  title={Support vector method for novelty detection},
  author={Sch{\"o}lkopf, Bernhard and Williamson, Robert C and Smola, Alex and Shawe-Taylor, John and Platt, John},
  journal={Advances in neural information processing systems},
  volume={12},
  year={1999}
}

@inproceedings{li2020copod,
  title={COPOD: copula-based outlier detection},
  author={Li, Zheng and Zhao, Yue and Botta, Nicola and Ionescu, Cezar and Hu, Xiyang},
  booktitle={2020 IEEE international conference on data mining (ICDM)},
  pages={1118--1123},
  year={2020},
  organization={IEEE}
}

@article{li2022ecod,
  title={Ecod: Unsupervised outlier detection using empirical cumulative distribution functions},
  author={Li, Zheng and Zhao, Yue and Hu, Xiyang and Botta, Nicola and Ionescu, Cezar and Chen, George H},
  journal={IEEE Transactions on Knowledge and Data Engineering},
  volume={35},
  number={12},
  pages={12181--12193},
  year={2022},
  publisher={IEEE}
}

@inproceedings{zong2018deep,
  title={Deep autoencoding gaussian mixture model for unsupervised anomaly detection},
  author={Zong, Bo and Song, Qi and Min, Martin Renqiang and Cheng, Wei and Lumezanu, Cristian and Cho, Daeki and Chen, Haifeng},
  booktitle={International conference on learning representations},
  year={2018}
}

@inproceedings{ruff2018deep,
  title={Deep one-class classification},
  author={Ruff, Lukas and Vandermeulen, Robert and Goernitz, Nico and Deecke, Lucas and Siddiqui, Shoaib Ahmed and Binder, Alexander and M{\"u}ller, Emmanuel and Kloft, Marius},
  booktitle={International conference on machine learning},
  pages={4393--4402},
  year={2018},
  organization={PMLR}
}

@inproceedings{
Bergman2020Classification-Based,
title={Classification-Based Anomaly Detection for General Data},
author={Liron Bergman and Yedid Hoshen},
booktitle={International Conference on Learning Representations},
year={2020},
url={https://openreview.net/forum?id=H1lK_lBtvS}
}

@inproceedings{qiu2021neural,
  title={Neural transformation learning for deep anomaly detection beyond images},
  author={Qiu, Chen and Pfrommer, Timo and Kloft, Marius and Mandt, Stephan and Rudolph, Maja},
  booktitle={International conference on machine learning},
  pages={8703--8714},
  year={2021},
  organization={PMLR}
}

@inproceedings{shenkar2022anomaly,
  title={Anomaly detection for tabular data with internal contrastive learning},
  author={Shenkar, Tom and Wolf, Lior},
  booktitle={International conference on learning representations},
  year={2022}
}

@inproceedings{dinh2015nice,
  title     = {NICE: Non-linear Independent Components Estimation},
  author    = {Dinh, Laurent and Krueger, David and Bengio, Yoshua},
  booktitle = {International Conference on Learning Representations (ICLR) Workshop},
  year      = {2015},
  url       = {https://arxiv.org/abs/1410.8516}
}

@inproceedings{dinh2017density,
  title={Density estimation using Real NVP},
  author={Dinh, Laurent and Sohl-Dickstein, Jascha and Bengio, Samy},
  booktitle={International Conference on Learning Representations},
  year={2017}
}

@article{han2022adbench,
  title={Adbench: Anomaly detection benchmark},
  author={Han, Songqiao and Hu, Xiyang and Huang, Hailiang and Jiang, Minqi and Zhao, Yue},
  journal={Advances in Neural Information Processing Systems},
  volume={35},
  pages={32142--32159},
  year={2022}
}

@article{shwartz2022tabular,
  title={Tabular data: Deep learning is not all you need},
  author={Shwartz-Ziv, Ravid and Armon, Amitai},
  journal={Information Fusion},
  volume={81},
  pages={84--90},
  year={2022},
  publisher={Elsevier}
}

@inproceedings{kingma2014vae,
  title     = {Auto-Encoding Variational Bayes},
  author    = {Diederik P. Kingma and Max Welling},
  booktitle = {International Conference on Learning Representations (ICLR), Conference Track Proceedings},
  year      = {2014},
  url       = {https://arxiv.org/abs/1312.6114}
}

@article{goodfellow2014generative,
  title={Generative adversarial nets},
  author={Goodfellow, Ian J and Pouget-Abadie, Jean and Mirza, Mehdi and Xu, Bing and Warde-Farley, David and Ozair, Sherjil and Courville, Aaron and Bengio, Yoshua},
  journal={Advances in neural information processing systems},
  volume={27},
  year={2014}
}

@inproceedings{
nalisnick2018do,
title={Do Deep Generative Models Know What They Don't Know? },
author={Eric Nalisnick and Akihiro Matsukawa and Yee Whye Teh and Dilan Gorur and Balaji Lakshminarayanan},
booktitle={International Conference on Learning Representations},
year={2019},
url={https://openreview.net/forum?id=H1xwNhCcYm},
}

@article{kirichenko2020normalizing,
  title={Why normalizing flows fail to detect out-of-distribution data},
  author={Kirichenko, Polina and Izmailov, Pavel and Wilson, Andrew G},
  journal={Advances in neural information processing systems},
  volume={33},
  pages={20578--20589},
  year={2020}
}

@inproceedings{
Serrà2020Input,
title={Input Complexity and Out-of-distribution Detection with Likelihood-based Generative Models},
author={Joan Serrà and David Álvarez and Vicenç Gómez and Olga Slizovskaia and José F. Núñez and Jordi Luque},
booktitle={International Conference on Learning Representations},
year={2020},
url={https://openreview.net/forum?id=SyxIWpVYvr}
}

@inproceedings{osada2024understanding,
  title={Understanding Likelihood of Normalizing Flow and Image Complexity through the Lens of Out-of-Distribution Detection},
  author={Osada, Genki and Takahashi, Tsubasa and Nishide, Takashi},
  booktitle={Proceedings of the AAAI Conference on Artificial Intelligence},
  volume={38},
  number={19},
  pages={21492--21500},
  year={2024}
}

@article{ren2019likelihood,
  title={Likelihood ratios for out-of-distribution detection},
  author={Ren, Jie and Liu, Peter J and Fertig, Emily and Snoek, Jasper and Poplin, Ryan and Depristo, Mark and Dillon, Joshua and Lakshminarayanan, Balaji},
  journal={Advances in neural information processing systems},
  volume={32},
  year={2019}
}

@article{nalisnick2019detecting,
  title={Detecting out-of-distribution inputs to deep generative models using typicality},
  author={Nalisnick, Eric and Matsukawa, Akihiro and Teh, Yee Whye and Lakshminarayanan, Balaji},
  journal={arXiv preprint arXiv:1906.02994},
  year={2019}
}

@inproceedings{
kamkari2024a,
title={A Geometric Explanation of the Likelihood {OOD} Detection Paradox},
author={Hamidreza Kamkari and Brendan Leigh Ross and Jesse C. Cresswell and Anthony L. Caterini and Rahul Krishnan and Gabriel Loaiza-Ganem},
booktitle={Forty-first International Conference on Machine Learning},
year={2024},
url={https://openreview.net/forum?id=EVMzCKLpdD}
}

@inproceedings{caterini2022entropic,
  title={Entropic issues in likelihood-based ood detection},
  author={Caterini, Anthony L and Loaiza-Ganem, Gabriel},
  booktitle={I (Still) Can't Believe It's Not Better! Workshop at NeurIPS 2021},
  pages={21--26},
  year={2022},
  organization={PMLR}
}

@article{kingma2018glow,
  title={Glow: Generative flow with invertible 1x1 convolutions},
  author={Kingma, Durk P and Dhariwal, Prafulla},
  journal={Advances in neural information processing systems},
  volume={31},
  year={2018}
}

@article{chen2019residual,
  title={Residual flows for invertible generative modeling},
  author={Chen, Ricky TQ and Behrmann, Jens and Duvenaud, David K and Jacobsen, J{\"o}rn-Henrik},
  journal={Advances in Neural Information Processing Systems},
  volume={32},
  year={2019}
}

@article{durkan2019neural,
  title={Neural spline flows},
  author={Durkan, Conor and Bekasov, Artur and Murray, Iain and Papamakarios, George},
  journal={Advances in neural information processing systems},
  volume={32},
  year={2019}
}

@inproceedings{rezende2015variational,
  title={Variational inference with normalizing flows},
  author={Rezende, Danilo and Mohamed, Shakir},
  booktitle={International conference on machine learning},
  pages={1530--1538},
  year={2015},
  organization={PMLR}
}

@article{alex2009learning,
  title={Learning multiple layers of features from tiny images},
  author={Alex, Krizhevsky},
  journal={https://www. cs. toronto. edu/kriz/learning-features-2009-TR. pdf},
  year={2009}
}

@inproceedings{netzer2011reading,
  title={Reading digits in natural images with unsupervised feature learning},
  author={Netzer, Yuval and Wang, Tao and Coates, Adam and Bissacco, Alessandro and Wu, Baolin and Ng, Andrew Y and others},
  booktitle={NIPS workshop on deep learning and unsupervised feature learning},
  volume={2011},
  number={2},
  pages={4},
  year={2011},
  organization={Granada}
}

@inproceedings{behrmann2019invertible,
  title={Invertible residual networks},
  author={Behrmann, Jens and Grathwohl, Will and Chen, Ricky TQ and Duvenaud, David and Jacobsen, J{\"o}rn-Henrik},
  booktitle={International conference on machine learning},
  pages={573--582},
  year={2019},
  organization={PMLR}
}

@article{wold1987principal,
  title={Principal component analysis},
  author={Wold, Svante and Esbensen, Kim and Geladi, Paul},
  journal={Chemometrics and intelligent laboratory systems},
  volume={2},
  number={1-3},
  pages={37--52},
  year={1987},
  publisher={Elsevier}
}

@inproceedings{zhang2024deep,
  title={Deep orthogonal hypersphere compression for anomaly detection},
  author={Zhang, Yunhe and Sun, Yan and Cai, Jinyu and Fan, Jicong},
  booktitle={The Twelfth International Conference on Learning Representations},
  year={2024}
}

@article{pedregosa2011scikit,
  title={Scikit-learn: Machine learning in Python},
  author={Pedregosa, Fabian and Varoquaux, Ga{\"e}l and Gramfort, Alexandre and Michel, Vincent and Thirion, Bertrand and Grisel, Olivier and Blondel, Mathieu and Prettenhofer, Peter and Weiss, Ron and Dubourg, Vincent and others},
  journal={the Journal of machine Learning research},
  volume={12},
  pages={2825--2830},
  year={2011},
  publisher={JMLR. org}
}

@article{schirrmeister2020understanding,
  title={Understanding anomaly detection with deep invertible networks through hierarchies of distributions and features},
  author={Schirrmeister, Robin and Zhou, Yuxuan and Ball, Tonio and Zhang, Dan},
  journal={Advances in Neural Information Processing Systems},
  volume={33},
  pages={21038--21049},
  year={2020}
}

@inproceedings{morningstar2021density,
  title={Density of states estimation for out of distribution detection},
  author={Morningstar, Warren and Ham, Cusuh and Gallagher, Andrew and Lakshminarayanan, Balaji and Alemi, Alex and Dillon, Joshua},
  booktitle={International Conference on Artificial Intelligence and Statistics},
  pages={3232--3240},
  year={2021},
  organization={PMLR}
}

@inproceedings{ahmadian2021likelihood,
  title={Likelihood-free Out-of-Distribution Detection with Invertible Generative Models.},
  author={Ahmadian, Amirhossein and Lindsten, Fredrik and Zhou, Zhi-Hua},
  booktitle={IJCAI},
  pages={2119--2125},
  year={2021}
}

@inproceedings{zhang2021understanding,
  title={Understanding failures in out-of-distribution detection with deep generative models},
  author={Zhang, Lily and Goldstein, Mark and Ranganath, Rajesh},
  booktitle={International Conference on Machine Learning},
  pages={12427--12436},
  year={2021},
  organization={PMLR}
}

@article{le2021perfect,
  title={Perfect density models cannot guarantee anomaly detection},
  author={Le Lan, Charline and Dinh, Laurent},
  journal={Entropy},
  volume={23},
  number={12},
  pages={1690},
  year={2021},
  publisher={MDPI}
}

@inproceedings{sun2022out,
  title={Out-of-distribution detection with deep nearest neighbors},
  author={Sun, Yiyou and Ming, Yifei and Zhu, Xiaojin and Li, Yixuan},
  booktitle={International Conference on Machine Learning},
  pages={20827--20840},
  year={2022},
  organization={PMLR}
}

@article{golan2018deep,
  title={Deep anomaly detection using geometric transformations},
  author={Golan, Izhak and El-Yaniv, Ran},
  journal={Advances in neural information processing systems},
  volume={31},
  year={2018}
}

@book{cover1999elements,
  title={Elements of information theory},
  author={Cover, Thomas M},
  year={1999},
  publisher={John Wiley \& Sons}
}

@inproceedings{
loshchilov2017sgdr,
title={{SGDR}: Stochastic Gradient Descent with Warm Restarts},
author={Ilya Loshchilov and Frank Hutter},
booktitle={International Conference on Learning Representations},
year={2017},
url={https://openreview.net/forum?id=Skq89Scxx}
}

@inproceedings{
loshchilov2018decoupled,
title={Decoupled Weight Decay Regularization},
author={Ilya Loshchilov and Frank Hutter},
booktitle={International Conference on Learning Representations},
year={2019},
url={https://openreview.net/forum?id=Bkg6RiCqY7},
}

@article{facco2017estimating,
  title={Estimating the intrinsic dimension of datasets by a minimal neighborhood information},
  author={Facco, Elena and d’Errico, Maria and Rodriguez, Alex and Laio, Alessandro},
  journal={Scientific reports},
  volume={7},
  number={1},
  pages={12140},
  year={2017},
  publisher={Nature Publishing Group UK London}
}

@article{levina2004maximum,
  title={Maximum likelihood estimation of intrinsic dimension},
  author={Levina, Elizaveta and Bickel, Peter},
  journal={Advances in neural information processing systems},
  volume={17},
  year={2004}
}

@article{battaglia2018relational,
  title={Relational inductive biases, deep learning, and graph networks. arXiv 2018},
  author={Battaglia, Peter W and Hamrick, Jessica B and Bapst, Victor and Sanchez-Gonzalez, Alvaro and Zambaldi, Vinicius and Malinowski, Mateusz and Tacchetti, Andrea and Raposo, David and Santoro, Adam and Faulkner, Ryan and others},
  journal={arXiv preprint arXiv:1806.01261},
  year={2018}
}

@article{camastra2016intrinsic,
  title={Intrinsic dimension estimation: Advances and open problems},
  author={Camastra, Francesco and Staiano, Antonino},
  journal={Information Sciences},
  volume={328},
  pages={26--41},
  year={2016},
  publisher={Elsevier}
}

@inproceedings{
pope2021the,
title={The Intrinsic Dimension of Images and Its Impact on Learning},
author={Phil Pope and Chen Zhu and Ahmed Abdelkader and Micah Goldblum and Tom Goldstein},
booktitle={International Conference on Learning Representations},
year={2021},
url={https://openreview.net/forum?id=XJk19XzGq2J}
}

@article{ansuini2019intrinsic,
  title={Intrinsic dimension of data representations in deep neural networks},
  author={Ansuini, Alessio and Laio, Alessandro and Macke, Jakob H and Zoccolan, Davide},
  journal={Advances in Neural Information Processing Systems},
  volume={32},
  year={2019}
}

@article{sharma2022scaling,
  title={Scaling laws from the data manifold dimension},
  author={Sharma, Utkarsh and Kaplan, Jared},
  journal={Journal of Machine Learning Research},
  volume={23},
  number={9},
  pages={1--34},
  year={2022}
}

@inproceedings{
thimonier2024beyond,
title={Beyond Individual Input for Deep Anomaly Detection on Tabular Data},
author={Hugo Thimonier and Fabrice Popineau and Arpad Rimmel and Bich-Li{\^e}n DOAN},
booktitle={Forty-first International Conference on Machine Learning},
year={2024},
url={https://openreview.net/forum?id=chDpBp2P6b}
}

@article{milligan1985algorithm,
  title={An algorithm for generating artificial test clusters},
  author={Milligan, Glenn W},
  journal={Psychometrika},
  volume={50},
  pages={123--127},
  year={1985},
  publisher={Springer}
}

@article{steinbuss2021benchmarking,
  title={Benchmarking unsupervised outlier detection with realistic synthetic data},
  author={Steinbuss, Georg and B{\"o}hm, Klemens},
  journal={ACM Transactions on Knowledge Discovery from Data (TKDD)},
  volume={15},
  number={4},
  pages={1--20},
  year={2021},
  publisher={ACM New York, NY, USA}
}

@article{aas2009pair,
  title={Pair-copula constructions of multiple dependence},
  author={Aas, Kjersti and Czado, Claudia and Frigessi, Arnoldo and Bakken, Henrik},
  journal={Insurance: Mathematics and economics},
  volume={44},
  number={2},
  pages={182--198},
  year={2009},
  publisher={Elsevier}
}

@misc{hastie2009elements,
  title={The elements of statistical learning: data mining, inference, and prediction},
  author={Hastie, Trevor},
  year={2009},
  publisher={Springer}
}

@article{rayana2016odds,
  title={ODDS library},
  author={Rayana, Shebuti},
  journal={Stony Brook University, Department of Computer Sciences},
  year={2016}
}

@inproceedings{draxler2024universality,
  title={On the Universality of Volume-Preserving and Coupling-Based Normalizing Flows},
  author={Draxler, Felix and Wahl, Stefan and Schnoerr, Christoph and Koethe, Ullrich},
  booktitle={International Conference on Machine Learning},
  pages={11613--11641},
  year={2024},
  organization={PMLR}
}

@inproceedings{marcel2010torchvision,
  title={Torchvision the machine-vision package of torch},
  author={Marcel, S{\'e}bastien and Rodriguez, Yann},
  booktitle={Proceedings of the 18th ACM international conference on Multimedia},
  pages={1485--1488},
  year={2010}
}

@article{hyvarinen2000independent,
  title={Independent component analysis: algorithms and applications},
  author={Hyv{\"a}rinen, Aapo and Oja, Erkki},
  journal={Neural networks},
  volume={13},
  number={4-5},
  pages={411--430},
  year={2000},
  publisher={Elsevier}
}

@article{Stimper2023, 
  author = {Vincent Stimper and David Liu and Andrew Campbell and Vincent Berenz and Lukas Ryll and Bernhard Schölkopf and José Miguel Hernández-Lobato}, 
  title = {normflows: A PyTorch Package for Normalizing Flows}, 
  journal = {Journal of Open Source Software}, 
  volume = {8},
  number = {86}, 
  pages = {5361}, 
  publisher = {The Open Journal}, 
  doi = {10.21105/joss.05361}, 
  url = {https://doi.org/10.21105/joss.05361}, 
  year = {2023}
}

@article{witten2009penalized,
  title={A penalized matrix decomposition, with applications to sparse principal components and canonical correlation analysis},
  author={Witten, Daniela M and Tibshirani, Robert and Hastie, Trevor},
  journal={Biostatistics},
  volume={10},
  number={3},
  pages={515--534},
  year={2009},
  publisher={Oxford University Press}
}

@article{hanzelmann2013gsva,
  title={GSVA: gene set variation analysis for microarray and RNA-seq data},
  author={H{\"a}nzelmann, Sonja and Castelo, Robert and Guinney, Justin},
  journal={BMC bioinformatics},
  volume={14},
  pages={1--15},
  year={2013},
  publisher={Springer}
}

@article{wang2024biologically,
  title={Biologically informed deep neural networks provide quantitative assessment of intratumoral heterogeneity in post treatment glioblastoma},
  author={Wang, Hairong and Argenziano, Michael G and Yoon, Hyunsoo and Boyett, Deborah and Save, Akshay and Petridis, Petros and Savage, William and Jackson, Pamela and Hawkins-Daarud, Andrea and Tran, Nhan and others},
  journal={npj Digital Medicine},
  volume={7},
  number={1},
  pages={292},
  year={2024},
  publisher={Nature Publishing Group UK London}
}

@article{papamakarios2017masked,
  title={Masked autoregressive flow for density estimation},
  author={Papamakarios, George and Pavlakou, Theo and Murray, Iain},
  journal={Advances in neural information processing systems},
  volume={30},
  year={2017}
}

@article{stimper2023normflows,
  title={normflows: A PyTorch Package for Normalizing Flows},
  author={Stimper, Vincent and Liu, David and Campbell, Andrew and Berenz, Vincent and Ryll, Lukas and Sch{\"o}lkopf, Bernhard and Hern{\'a}ndez-Lobato, Jos{\'e} Miguel},
  journal={Journal of Open Source Software},
  volume={8},
  number={86},
  pages={5361},
  year={2023}
}

@inproceedings{ye2025drl,
  title={DRL: Decomposed Representation Learning for Tabular Anomaly Detection},
  author={Ye, Hangting and Zhao, He and Fan, Wei and Zhou, Mingyuan and dan Guo, Dan and Chang, Yi},
  booktitle={The Thirteenth International Conference on Learning Representations},
  year={2025}
}

@article{fan2010intrinsic,
  title={Intrinsic dimension estimation of data by principal component analysis},
  author={Fan, Mingyu and Gu, Nannan and Qiao, Hong and Zhang, Bo},
  journal={arXiv preprint arXiv:1002.2050},
  year={2010}
}

@article{yeh2009comparisons,
  title={The comparisons of data mining techniques for the predictive accuracy of probability of default of credit card clients},
  author={Yeh, I-Cheng and Lien, Che-hui},
  journal={Expert systems with applications},
  volume={36},
  number={2},
  pages={2473--2480},
  year={2009},
  publisher={Elsevier}
}
\bibliographystyle{icml2026}

\newpage
\appendix
\onecolumn
\section{Definition of OOD Detection and Anomaly Detection}
\label{def_ood_ano}
The goal of anomaly detection is to build a classifier that detects abnormal, out-of-distribution instances \citep{golan2018deep}. In this sense, we follow the common view that anomaly detection and OOD detection share the same objective. While OOD detection assumes that anomalies follow a distribution different from the in-distribution, anomaly detection typically does not impose a specific distributional form. Since this distinction does not affect the fundamental objective we study, we treat the two tasks as equivalent in this work.

\section{Rigorous Formulation of Definition \ref{def:counterintuitive}}
\label{rigorous_def}

\begin{definition}[Occurrence of Counterintuitive Phenomenon]
\label{def:counterintuitive_rig}
Let \(\bold x \sim P \), and let \( P_{\theta_0} \) denote a generative model that provides an approximately accurate likelihood estimate for \(\bold x \). Let \( P_{\theta_k} \), the comparison models, which do not necessarily provide likelihood estimates. We assume that all models are well trained.

Let \( \varphi_{P_{\theta_0}}(\bold x) \) denote the likelihood estimate from the generative model \( P_{\theta_0} \), and let \( \varphi_{P_{\theta_k}}(\bold x) \) denote the test statistic (e.g., anomaly score) from the \( k \)-th comparison model. 
We then define the index set of comparison models that outperform $P_{\theta_0}$
    \begin{align}
    \resizebox{.70\linewidth}{!}{$
    \label{eq:def1_set}
R = \left\{ i \in [k] \mid \Pr(\varphi_{P_{\theta_0}}(\bold x) > \varphi_{P_{\theta_0}}(\bold y)) < \Pr(\varphi_{P_{\theta_i}}(\bold x) > \varphi_{P_{\theta_i}}(\bold y)) \right\}.\
    $}
    \end{align}
    
We say that a counterintuitive phenomenon occurs for $\bold y \sim Q$ if the following two conditions hold:

\begin{enumerate}
    \item A sufficiently large fraction of comparison models outperform the generative model:
    \begin{align}
    \resizebox{.70\linewidth}{!}{$
    \label{eq:def1_condition1}
    \frac{1}{k} \sum_{i=1}^{k} \mathbbm{1}\left\{ \Pr(\varphi_{P_{\theta_0}}(\bold x) > \varphi_{P_{\theta_0}}(\bold y)) < \Pr(\varphi_{P_{\theta_i}}(\bold x) > \varphi_{P_{\theta_i}}(\bold y)) \right\} > \beta.
    $}
    \end{align}
    
    \item The minimum performance gap between the generative model and the outperforming comparison models exceeds $\gamma$:
    \begin{align}
    \resizebox{.70\linewidth}{!}{$
    \label{eq:def1_condition2}
    \min_{i \in R} \left( \Pr(\varphi_{P_{\theta_i}}(\bold x) > \varphi_{P_{\theta_i}}(\bold y)) - \Pr(\varphi_{P_{\theta_0}}(\bold x) > \varphi_{P_{\theta_0}}(\bold y)) \right) > \gamma.
    $}
    \end{align}
\end{enumerate}
\end{definition}

\section{Additional Description from Section \ref{high_dim_perspective}}
\label{additional_description}
In the following, we examine from a theoretical perspective why anomaly detection with normalizing flows can fail as dimensionality increases. To this end, we review concentration behavior of the Euclidean norm in high dimensions, which play a key role in understanding how normalizing flows distinguish between normal and anomalous data. This is because the normalizing flow model $f$ is trained on normal data $X$ such that $f(X)$ follows a standard Gaussian distribution, i.e., $f(X) \sim N(0, I)$ and the training objective is to maximize the log-likelihood, $\log P_{N(0,I)}(f(X) \mid \Theta ) \propto -\|f(X)\|_2^2$, where $\Theta$ represents the parameters of the normalizing flow and $\|\cdot\|_2$ denotes the Euclidean norm. Our analysis in Section \ref{smaller_dim} \& \ref{identical_norm} cannot directly analyze the volume term in models that do not preserve volume, such as RealNVP and Glow, but since \citet{osada2024understanding} shows that the volume term and the latent likelihood are positively correlated, we indirectly study the relationship with the volume term as a behavior of the latent vector. Additionally, in Section \ref{detection_performance_dimension} we show that dimension size has a negative impact on the likelihood test, regardless of the volume term.

\renewcommand{\theproposition}{B.\arabic{proposition}}

\subsection{Euclidean Norm Gets More Concentrated as Dimension Increases}
\label{smaller_dim}

If $Z \sim \mathcal{N}(0,I_d)$, then $||Z||^2_{2}$ concentrates near $d$. This means that the test statistic $||f(X)||^2_{2}$ is close to $d$ if X is normal data and the normalizing flow model $f$ is well trained so that $f(X)$ follows $\mathcal{N}(0,I_d)$. Thus, if the normalizing flow is well-trained, the transformed normal data $f(X)$ concentrate on a sphere of radius $\sqrt{d}$ (where $d$ is the dimensionality of $X$) due to the tail bound  properties of sum of independent normal distributions. First, we prove Proposition \ref{prop1}, which states that as the dimension increases, the Euclidean norm tends to concentrate near $\sqrt{d}$.

\begin{proposition}
\label{prop1}

If $Z \sim \mathcal{N}(0,I_d)$, then for all $ 0<t<d$ :
\begin{align}
    \Pr\left( \left| ||Z||^{2}_{2}- d\right| \geq t \right) \leq 2e^{-\frac{t^{2}}{8d}}
\end{align}

\end{proposition}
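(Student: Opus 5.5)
We treat $\|Z\|_2^2-d=\sum_{i=1}^d (Z_i^2-1)$ as a sum of $d$ i.i.d.\ centered chi-square terms and use a Chernoff (Laplace-transform) argument on each tail separately, then combine the two tails with a union bound to get the factor $2$. The moment generating function is explicit: for $\lambda<1/2$, $\mathbb{E}[e^{\lambda(Z_i^2-1)}]=e^{-\lambda}(1-2\lambda)^{-1/2}$. Independence gives
\begin{align*}
\Pr\left(\|Z\|_2^2-d\geq t\right)\leq \exp\left(-\lambda t-d\lambda-\tfrac{d}{2}\log(1-2\lambda)\right),
\end{align*}
and the same with $\lambda\mapsto-\lambda$ for the lower tail.

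The key step is a sub-exponential bound on the log-MGF. We show $-\lambda-\tfrac12\log(1-2\lambda)\leq 2\lambda^2$ for $|\lambda|\leq 1/4$. Expanding $-\tfrac12\log(1-2\lambda)=\sum_{k\geq1}(2\lambda)^k/(2k)$, the left side equals $\sum_{k\geq2}(2\lambda)^k/(2k)$. For $|2\lambda|\leq 1/2$ this is at most $\lambda^2\sum_{j\geq0}|2\lambda|^j\leq 2\lambda^2$. For negative $\lambda$ the series alternates, so the bound holds for all $\lambda\le 0$. Hence both tails are bounded by $\exp(-\lambda t+2d\lambda^2)$ for $0\leq\lambda\leq1/4$.

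Next we optimize. The unconstrained minimizer is $\lambda^*=t/(4d)$, which lies in $[0,1/4]$ exactly when $t\leq d$. This is where the hypothesis $0<t<d$ enters. Substituting gives $\exp(-t^2/(8d))$ for each tail, and the union bound yields $2e^{-t^2/(8d)}$.

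The main obstacle is the elementary inequality on the log-MGF with the right constant ($2\lambda^2$ on $|\lambda|\le 1/4$). The constant must be checked carefully so that the resulting exponent is exactly $t^2/(8d)$ and the admissible range of $\lambda^*$ matches $t<d$. If the series argument proves awkward, the fallback is to cite the standard sub-exponential parameters $(\nu^2,b)=(4,4)$ for $\chi^2_1$ and apply Bernstein's inequality in its sub-Gaussian regime, which gives the same bound.
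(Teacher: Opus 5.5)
Your proposal is correct and takes essentially the paper's route. The paper likewise bounds $\mathbb{E}[e^{\lambda(Z_i^2-1)}]$ by $e^{2\lambda^2}$ for $|\lambda|<1/4$, concludes that $\|Z\|_2^2$ is sub-exponential with parameters $(2\sqrt d,4)$, and reads off $2e^{-t^2/(8d)}$ for $0<t<d$; you additionally prove the MGF bound via the series expansion and carry out the Chernoff optimization at $\lambda^*=t/(4d)$ explicitly.

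One small aside: the alternating-series remark justifies the bound only for $-1/2\le\lambda\le 0$, since the series diverges for $\lambda<-1/2$. This is harmless, because your absolute-value estimate already covers $|\lambda|\le 1/4$, which is all you use.
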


\begin{proof}
Take a random variable $Z \sim \mathcal{N}(0,I_d)$ in $\mathbb{R}^{d}$. Then for each $Z_i \sim N(0,1)$, $\mathbb{E}\left[ e^{\lambda (Z_{i}^{2}-1)} \right] = \frac{e^{-\lambda}}{\sqrt{1-2\lambda}} \leq e^{4\lambda^{2}/2}$ for all $|\lambda|<1/4$. 
Thus, $||Z||^{2}_{2}$ is sub-exponential with parameters $(2\sqrt{d}, 4)$ and by the properties of sub-exponential random variables, we obtain the concentration bound :
$\Pr\left(\left| ||Z||^{2}_{2} - d \right| \geq t \right) \leq 2e^{-\frac{t^{2}}{8d}}$ for $0<t<d$.
\end{proof}

Since we can show by Proposition \ref{prop1} that $|Z|^2_2$ is a sub-exponential random variable with parameter containing $d$, we can verify that the probability that the Euclidean norm is observed far from $\sqrt{d}$ has a sub-exponential tail bound. Then, we prove that the variance of the Euclidean norm tends to concentrate more shrinks relative to the increase in dimensionality.

\begin{theorem}[\citet{klartag2007central, fleury2007stability,guedon2014concentration}]
\label{theorem1}

    $X$ is a log-concave isotropic random variable in $\mathbb{R}^{d}$.
    If $\exists \epsilon_{d}(\epsilon_{d}\rightarrow0)$ such that $\Pr \left( \left| \frac{||X||_{2}}{\sqrt{d}} - 1 \right| \geq \epsilon_{d} \right) \leq \epsilon_{d}$, then $\lim_{d\rightarrow \infty}\frac{\textnormal{Var}||X||_2}{d} = 0$
    
\end{theorem}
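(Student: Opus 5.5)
The plan is to bound the variance by a second moment about the natural centre $\sqrt{d}$ and then split according to the thin-shell event in the hypothesis. Since the variance minimizes $\mathbb{E}(\|X\|_2-c)^2$ over $c$, we have
\begin{align*}
\textnormal{Var}\|X\|_2 \le \mathbb{E}\bigl(\|X\|_2-\sqrt{d}\bigr)^2 .
\end{align*}
Write $A_d=\{ |\|X\|_2/\sqrt{d}-1| \ge \epsilon_d\}$, so that $\Pr(A_d)\le\epsilon_d$ by assumption. Then split
\begin{align*}
\mathbb{E}\bigl(\|X\|_2-\sqrt{d}\bigr)^2 = \mathbb{E}\bigl[(\|X\|_2-\sqrt{d})^2\mathbbm{1}_{A_d^c}\bigr] + \mathbb{E}\bigl[(\|X\|_2-\sqrt{d})^2\mathbbm{1}_{A_d}\bigr].
\end{align*}

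On the good event $A_d^c$ we have $|\|X\|_2-\sqrt{d}|<\epsilon_d\sqrt{d}$. The first term is therefore at most $\epsilon_d^2 d$.

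For the bad event, apply Cauchy--Schwarz:
\begin{align*}
\mathbb{E}\bigl[(\|X\|_2-\sqrt{d})^2\mathbbm{1}_{A_d}\bigr]\le \Bigl(\mathbb{E}(\|X\|_2-\sqrt{d})^4\Bigr)^{1/2}\Pr(A_d)^{1/2}.
\end{align*}
It remains to show $\mathbb{E}(\|X\|_2-\sqrt{d})^4\le C d^2$ for an absolute constant $C$. By $(a-b)^4\le 8(a^4+b^4)$, this reduces to showing $\mathbb{E}\|X\|_2^4\le C d^2$. Isotropy gives $\mathbb{E}\|X\|_2^2=d$. The needed step is then a reverse H\"older (Khintchine--Kahane type) inequality for norms of log-concave vectors:
\begin{align*}
\bigl(\mathbb{E}\|X\|_2^4\bigr)^{1/4}\le C_0\bigl(\mathbb{E}\|X\|_2^2\bigr)^{1/2}=C_0\sqrt{d}.
\end{align*}
This follows from Borell's lemma: log-concavity gives exponential tails for $\|X\|_2$ at the scale of its median or second moment, and hence comparability of all moments with constants independent of $d$.

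Combining the two terms gives
\begin{align*}
\frac{\textnormal{Var}\|X\|_2}{d}\le \epsilon_d^2 + C_1\sqrt{\epsilon_d},
\end{align*}
and the right-hand side tends to $0$ as $d\to\infty$ because $\epsilon_d\to0$.

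The main obstacle is the dimension-free moment comparison. This is the only place log-concavity enters, and without it the tail of $\|X\|_2$ on $A_d$ could carry variance of order $d$. I would invoke Borell's lemma directly, as in the cited works of \citet{guedon2014concentration}, rather than reprove it.
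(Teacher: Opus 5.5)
The paper gives no proof of this statement; it imports it from the cited works. Your argument is therefore an independent derivation, and it is correct. The bound $\textnormal{Var}\|X\|_2\le\mathbb{E}(\|X\|_2-\sqrt d)^2$, the split over $A_d$, Cauchy--Schwarz, and the Borell-lemma estimate $\mathbb{E}\|X\|_2^4\le Cd^2$ together give $\textnormal{Var}\|X\|_2/d\le\epsilon_d^2+C_1\sqrt{\epsilon_d}\to0$.

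What is wrong is your closing remark that log-concavity is needed to stop the bad event from carrying variance of order $d$. Isotropy alone fixes $\mathbb{E}\|X\|_2^2=d$, and that suffices. Once $\epsilon_d<1$, we have $\|X\|_2>(1-\epsilon_d)\sqrt d$ on $A_d^c$ and $\Pr(A_d^c)\ge1-\epsilon_d$, hence
\[
\mathbb{E}\|X\|_2\ge(1-\epsilon_d)^2\sqrt d
\quad\Longrightarrow\quad
\frac{\textnormal{Var}\|X\|_2}{d}=1-\Bigl(\frac{\mathbb{E}\|X\|_2}{\sqrt d}\Bigr)^2\le1-(1-\epsilon_d)^4\le4\epsilon_d .
\]
Equivalently, $\|X\|_2/\sqrt d$ is bounded in $L^2$, hence uniformly integrable, and it tends to $1$ in probability.

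The same fact handles the bad event in your own decomposition without any fourth moment:
\[
\mathbb{E}\bigl[(\|X\|_2-\sqrt d)^2\mathbbm{1}_{A_d}\bigr]\le\mathbb{E}\bigl[\|X\|_2^2\mathbbm{1}_{A_d}\bigr]+d\Pr(A_d)\le d\bigl(1-(1-\epsilon_d)^3\bigr)+\epsilon_d d\le4\epsilon_d d .
\]
So the elementary route is shorter and gives the better rate $O(\epsilon_d)$ rather than $O(\sqrt{\epsilon_d})$. It also shows the implication holds for every isotropic random vector.

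In the cited literature, log-concavity is what makes the \emph{hypothesis} true. That is Klartag's deep theorem that such $\epsilon_d$ exist for every isotropic log-concave vector; log-concavity is not what drives this implication. Your Borell-lemma argument remains valid; it just spends a heavy tool on a step that does not need it.
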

\begin{proposition}\
\label{prop2}
If $Z \sim \mathcal{N}(0,I_d)$, 
\begin{align}
   \lim_{d \rightarrow \infty} \frac{\textnormal{Var}||Z||_{2}}{d} = 0
\end{align}
\end{proposition}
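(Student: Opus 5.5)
The plan is to derive Proposition~\ref{prop2} as a direct instance of Theorem~\ref{theorem1}, and to back this up with an elementary direct bound.

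\textbf{Step 1: the hypotheses of Theorem~\ref{theorem1} hold.} Take $Z \sim \mathcal{N}(0,I_d)$.
\begin{itemize}
\item $Z$ is log-concave, since its density is proportional to $e^{-\|z\|_2^2/2}$, whose logarithm is concave.
\item $Z$ is isotropic, since $\mathbb{E}Z=0$ and $\mathbb{E}ZZ^\top = I_d$.
\end{itemize}

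\textbf{Step 2: the thin-shell condition.} We need a sequence $\epsilon_d \to 0$ such that $\Pr\bigl(\bigl|\|Z\|_2/\sqrt{d}-1\bigr|\geq \epsilon_d\bigr)\leq \epsilon_d$. Use Proposition~\ref{prop1}. Since $\|Z\|_2\ge 0$,
\[
\bigl|\|Z\|_2-\sqrt d\bigr| \le \frac{\bigl|\|Z\|_2^2-d\bigr|}{\sqrt d}.
\]
Hence the event $\bigl|\|Z\|_2/\sqrt d-1\bigr|\ge\epsilon$ is contained in the event $\bigl|\|Z\|_2^2-d\bigr|\ge \epsilon d$. Applying Proposition~\ref{prop1} with $t=\epsilon d$, which is admissible for $0<\epsilon<1$, gives the bound $2e^{-\epsilon^2 d/8}$.

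Choose $\epsilon_d = d^{-1/4}$. The bound becomes $2e^{-\sqrt d/8}$, and this is at most $d^{-1/4}$ for all large $d$. For the finitely many small $d$ where it is not, we may redefine $\epsilon_d$ as $1$, which does not affect the limit. Theorem~\ref{theorem1} then yields $\mathrm{Var}\|Z\|_2/d\to 0$.

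\textbf{Step 3: a self-contained check.} Because invoking a deep result for the Gaussian is heavy, I would also verify the conclusion directly.
\begin{itemize}
\item We have $\mathrm{Var}\|Z\|_2 = \mathbb{E}\|Z\|_2^2 - (\mathbb{E}\|Z\|_2)^2 = d - (\mathbb{E}\|Z\|_2)^2$.
\item Write $R=\|Z\|_2$. Then $\mathbb{E}(R-\sqrt d)^2 \le \mathbb{E}(R^2-d)^2/d = \mathrm{Var}(\|Z\|_2^2)/d = 2d/d = 2$, using the same inequality $|R-\sqrt d|\le|R^2-d|/\sqrt d$.
\item Since the variance minimizes the mean squared deviation over all centers, $\mathrm{Var}\,R \le \mathbb{E}(R-\sqrt d)^2 \le 2$.
\end{itemize}
Therefore $\mathrm{Var}\|Z\|_2/d \le 2/d \to 0$. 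This bound is actually stronger than the statement requires.

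\textbf{Main obstacle.} The only delicate point is matching the exact form of the thin-shell hypothesis in Theorem~\ref{theorem1}, in particular requiring that the same $\epsilon_d$ bound both the deviation and the probability. The choice $\epsilon_d=d^{-1/4}$, together with an adjustment for small $d$, handles this. The direct computation in Step 3 serves as a fallback that avoids the issue entirely.
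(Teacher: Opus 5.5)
Your proposal is correct, and Steps 1--2 follow the paper's own route: derive the thin-shell hypothesis of Theorem~\ref{theorem1} from the $\chi^2$ tail bound in Proposition~\ref{prop1}, then invoke that theorem.

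Your version of this route is cleaner than the paper's in two places:
\begin{itemize}
\item The paper asserts the equality $\Pr(|\|Z\|_2/\sqrt d-1|\ge t)=\Pr(|\|Z\|_2^2/d-1|\ge t)$. This is false, because the second event strictly contains the first. You instead use the correct inclusion coming from $|R-\sqrt d|\le |R^2-d|/\sqrt d$. The inclusion gives the inequality in the direction actually needed.
\item The paper leaves the dependence of $t$ on $d$ implicit. You fix $\epsilon_d=d^{-1/4}$ explicitly and patch the finitely many small $d$.
\end{itemize}

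Step~3 is a genuinely different and more elementary argument. The chain $\mathrm{Var}\,R\le\mathbb{E}(R-\sqrt d)^2\le\mathrm{Var}(\chi^2_d)/d=2$ uses only the variance of a chi-square variable. It avoids the deep log-concave result behind Theorem~\ref{theorem1} entirely. It also gives the quantitative bound $\mathrm{Var}\|Z\|_2/d\le 2/d$ rather than mere $o(1)$, showing that $\mathrm{Var}\|Z\|_2$ stays bounded. This is consistent with the known limit $1/2$.

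The paper's route buys only narrative continuity with the general isotropic log-concave thin-shell results used later in that appendix. For the Gaussian statement itself, your Step~3 is the stronger proof and can stand alone.
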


\begin{proof}
    From the Proposition \ref{prop1}, take $t = dt$.
    Then $\Pr\left( \left| ||Z||^{2}_{2}- d\right| \geq dt \right) \leq 2e^{-\frac{dt^{2}}{8}}$ for $0<t<1$. Since there exists $\epsilon_{d}$ such that $\max \{t, 2e^{-\frac{dt^{2}}{8}} \} < \epsilon_{d} $ and $\epsilon_{d} \rightarrow 0$ and 
       $ \Pr \left( \left| \frac{||Z||_{2}}{\sqrt{d}} - 1 \right| \geq t \right) = \Pr \left( \left| \frac{||Z||^{2}_{2}}{d} - 1 \right| \geq t \right) $,
     there exists $\epsilon_{d} (\epsilon_{d} \rightarrow 0)$ such that $ \Pr \left( \left| \frac{||Z||_{2}}{\sqrt{d}} - 1 \right| \geq \epsilon_{d} \right) \leq \epsilon_{d}$.
    By Theorem \ref{theorem1}, $\lim_{d\rightarrow \infty} \frac{\textnormal{Var}||Z||_{2}}{d} = 0$
\end{proof}

Based on the Proposition \ref{prop2}, as dimensionality increases, the Euclidean norm of a normal random variable tends to concentrate more quickly relative to the increase in dimensionality. The propositions show that the latent variable of normalizing flow following $\mathcal{N}(0,I_d)$ has a variance that grows more slowly than the increase in dimension, and that its norm deviates from $\sqrt{d}$. This infers that as dimension increases, the latent variable of a normalizing flow corresponding to the normal data is concentrated in a sphere with radius $\sqrt{d}$. 

Therefore, the model becomes more vulnerable to slight misestimation or error as dimension increases. Also, if the norm of the latent variable corresponding to anomalous data, after being passed through the flow model trained on high-dimensional data, is smaller than $\sqrt{d}$, the detection performance may degrade when using a simple likelihood-based test. As a result, outcomes may emerge that align with assumptions about counterintuitive phenomena.

\subsection{Euclidean Norm Is Almost Identical in High-dimensional Space}
\label{identical_norm}
In high-dimensional spaces, the Euclidean norm becomes a less effective statistical measure, as data points from distinct distributions exhibit nearly identical norms. 
We summarize and demonstrate that all the isotropic and log-concave random variables become indistinguishable from normal distribution in terms of Euclidean norm as the dimensionality increases. Before starting the analysis, we assume that the distribution of latent vectors obtained when passing the anomaly data through the flow is not $N(0,I_d)$ but is isotropic and log-concave, as the histograms shown in Appendix \ref{detection_performance_dimension} are almost similar.

\begin{theorem}[\citet{guedon2011interpolating}]
    For a log-concave and isotropic random variable $X$ in $\mathbb{R}^{d}$, there exists a constant $C$ such that for any $t>0$, \[
    \Pr \left(  ||X||_{2} - \sqrt{d}  \geq t\sqrt{d} \right) \leq C e^{-c\sqrt{d}\min \{t^{3},t\} } 
    \] 
\end{theorem}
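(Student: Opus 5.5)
The plan is to reduce the tail bound to a \emph{moment} estimate for $\|X\|_2$ and then apply Markov's inequality with an optimized exponent. This is the route of \citet{guedon2011interpolating}, building on \citet{fleury2007stability} and \citet{klartag2007central}. Since the statement only concerns the upper deviation $\|X\|_2-\sqrt d\ge t\sqrt d$, I treat two regimes of $t$ separately, and the constants $C,c$ are absolute.

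\textbf{Large deviations ($t\ge t_0$ for an absolute $t_0$).} Here I would invoke Paouris' inequality for isotropic log-concave vectors, which I take as a black box:
\[
\Pr(\|X\|_2\ge s\sqrt d)\le e^{-cs\sqrt d}\quad\text{for } s\ge C_0 .
\]
For $t\ge t_0$ with $t_0$ large enough, $1+t\ge C_0$ and $1+t\asymp t$. This gives a bound $e^{-c' t\sqrt d}=e^{-c'\sqrt d\min\{t^3,t\}}$. Paouris' inequality itself comes from comparing the $p$-th moment $(\mathbb E\|X\|_2^p)^{1/p}$ with $\sqrt d$ for $p\le c\sqrt d$, via $L_p$-centroid bodies.

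\textbf{Thin-shell regime ($0<t<t_0$).} The key intermediate estimate I would aim for is the moment bound
\[
\Bigl(\mathbb E\|X\|_2^{p}\Bigr)^{1/p}\le \sqrt d\Bigl(1+C\bigl(p/\sqrt d\bigr)^{1/2}\Bigr),\qquad 1\le p\le c\sqrt d .
\]
Granting this, Markov's inequality and $1+t\ge e^{t/2}$ on bounded $t$ give
\[
\Pr\bigl(\|X\|_2\ge (1+t)\sqrt d\bigr)\le \exp\Bigl(p\bigl(C\sqrt{p/\sqrt d}-t/2\bigr)\Bigr).
\]
Choosing $p=c_1t^2\sqrt d$ with $c_1$ small makes the bracket at most $-t/4$. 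The exponent is then $-c\,t^3\sqrt d$, as claimed. If $t$ is so small that $p<1$, the bound is trivial after enlarging $C$.

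\textbf{Main obstacle: proving the moment bound.} The second moment is exact, $\mathbb E\|X\|_2^2=d$ by isotropy, so the task is to control how fast $p\mapsto(\mathbb E\|X\|_2^p)^{1/p}$ grows beyond $p=2$. My first approach follows Fleury and Guédon--Milman. Write $\|X\|_2$ through its one-dimensional marginals: $\mathbb E\|X\|_2^p$ is comparable to $\mathbb E_\theta\,\mathbb E|\langle X,\theta\rangle|^p$ times a $\sqrt d$ normalization, by averaging over the sphere. Then use that $h_{Z_p}(\theta)=(\mathbb E|\langle X,\theta\rangle|^p)^{1/p}$ defines the $L_p$-centroid body $Z_p(X)$, and apply Paouris' mean-width comparison $w(Z_p)\lesssim \sqrt p\,w(Z_2)$ for $p\le\sqrt d$. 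This yields a first-order correction of size $p/\sqrt d$. To sharpen it to the interpolating exponent $(p/\sqrt d)^{1/2}$, I would use log-concavity of $p\mapsto \mathbb E\|X\|_2^p/\Gamma(p+1)$ (Borell's lemma) to interpolate between $p=2$ and $p\approx\sqrt d$. I expect this interpolation step to require the most care. If it fails, a cruder estimate still gives a valid but weaker exponent $t^{k}\sqrt d$ with some $k>3$, so the overall structure of the argument would survive with a worse power of $t$.
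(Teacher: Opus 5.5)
The paper gives no proof of this statement; it only cites it from \citet{guedon2011interpolating}, so your proposal has to stand on its own. It does not, because everything hinges on the moment bound $I_p:=(\mathbb{E}\|X\|_2^p)^{1/p}\le\sqrt d\,(1+C(p/\sqrt d)^{1/2})$, which you leave unproved. The reduction around it is fine: Paouris handles $t\ge t_0$, and Markov with $p=c_1t^2\sqrt d$ handles the rest. But that reduction is essentially a reformulation. Your choice $p=c_1t^2\sqrt d$ only gives a nontrivial bound when $p\gtrsim d^{1/6}$, and in that range integrating the claimed tail gives the moment bound back up to constants.

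Your route to the moment bound breaks at its first step. Paouris' machinery is accurate only up to absolute multiplicative constants. The identity $I_p=w_p(Z_p)\,c_{p,d}^{-1/p}$ with $c_{p,d}=\int_{S^{d-1}}|\theta_1|^p\,d\sigma(\theta)$ is exact. However, replacing $w_p(Z_p)$ by $w(Z_p)$ and then using $w(Z_p)\lesssim\sqrt p$ both cost constants. So the output is $I_p\le C\sqrt d$, which is Paouris' large-deviation bound itself, and never a factor $1+o(1)$. There is also an inconsistency in the plan: a correction of size $p/\sqrt d$ would be \emph{stronger} than your target, since $p/\sqrt d\le(p/\sqrt d)^{1/2}$ for $p\le\sqrt d$ (it would give $e^{-ct^2\sqrt d}$). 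So ``sharpening'' it to $(p/\sqrt d)^{1/2}$ is backwards.

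Interpolation in $p$ cannot supply the missing precision either. Borell's log-concavity of $p\mapsto\mathbb{E}Y^p/\Gamma(p+1)$ requires $Y$ to have a log-concave density, and $\|X\|_2$ in general does not (it already fails for the uniform measure on a square). Even where it holds, it cannot help. Take $Y$ uniform on $[0,\sqrt{3d}]$: it has a log-concave density, $\mathbb{E}Y^2=d$, and $I_q(Y)\le\sqrt{3d}$ for all $q$. Yet $I_4(Y)/I_2(Y)=3^{1/2}5^{-1/4}\approx1.16$ for every $d$. Hence no argument built only from isotropy, Paouris' bound at $p\approx\sqrt d$, and one-dimensional convexity in $p$ can give $I_4\le\sqrt d\,(1+Cd^{-1/4})$. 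For the same reason your fallback does not give a weaker power of $t$; it gives nothing at all for $t$ below an absolute constant.

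The missing ingredient is a genuinely high-dimensional concentration mechanism that forces $\|X\|_2$ into a thin shell. Roughly, Gu\'edon--Milman, building on Klartag and Fleury, obtain it as follows:
\begin{enumerate}
\item project $X$ onto random $k$-dimensional subspaces $E$;
\item control the variation of $E\mapsto I_q(P_EX)$ by concentration of measure on the Grassmannian, with Lipschitz constants bounded through Paouris' estimates for $Z_q$;
\item use log-concavity of the marginals $P_EX$ along rays to pin down their radial behaviour;
\item transfer back to $X$ via $\mathbb{E}_E\|P_Ex\|_2^q=c_{q,k,d}\|x\|_2^q$.
\end{enumerate}
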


\begin{conjecture}[Thin-Shell Conjecture]
   For a log-concave and isotropic random variable $X$ in $\mathbb{R}^{d}$, there exists a constant $C$ such that for any $t>0$,
   \[
   \Pr \left( \left| ||X||_{2} - \sqrt{d}  \right| \geq t\sqrt{d} \right) \leq 2e^{-Ct\sqrt{d}}
   \]
\end{conjecture}

Although the Thin-Shell conjecture has not yet been proven, there have been several breakthroughs by the works including \citet{eldan2013thin} and \citet{chen2021almost}.
As the Thin-Shell Conjecture and the results of \citet{guedon2011interpolating} show, all the log-concave and isotropic random variables have their Euclidean norm near $\sqrt{d}$. This makes it hard for normalizing flow model to distinguish various other log-concave distributions from normal distribution as dimension increases.

\begin{theorem}[\citet{anttila2003central}]
    $X$ is log-concave isotropic random variable in $\mathbb{R}^{d}$.
    If there exists $\epsilon_{d} \rightarrow 0 $ as $d \rightarrow \infty$ such that $\Pr \left( \left| \frac{||X||_{2}}{\sqrt{d}} - 1  \right| \geq \epsilon_{d} \right) \leq \epsilon_{d}$, then there exists $\theta \in S^{d-1}$
    \[
    \sup_{t > 0 } \left| \Pr \left( \sum_{i=1}^{d} \theta_{i}X_{i} \leq t \right) - \frac{1}{\sqrt{2\pi}}\int_{-\infty}^{t} e^{-v^{2}/2}dv \right| \leq \eta_{d}
    \],
    where $\eta_{d} \rightarrow 0$
\end{theorem}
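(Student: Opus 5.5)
The plan is to prove a stronger statement than the one asserted: \emph{most} directions $\theta\in S^{d-1}$, measured by the uniform measure $\sigma$, give approximately Gaussian marginals. The existence of one such $\theta$ then follows. There are three steps: (i) show that the $\sigma$-averaged marginal is close to Gaussian; (ii) show that each marginal distribution function is concentrated around this average as a function of $\theta$; (iii) pass from finitely many values of $t$ to the supremum.

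\textbf{Step 1 (averaged marginal).} Write $F_\theta(t)=\Pr(\sum_i\theta_iX_i\le t)$ and $\bar F(t)=\int F_\theta(t)\,d\sigma(\theta)$. Take $\theta$ uniform on $S^{d-1}$ and independent of $X$. By rotation invariance of $\sigma$,
\[
\bar F(t)=\Pr\big(\|X\|_2\,\theta_1\le t\big)=\Pr\Big(\tfrac{\|X\|_2}{\sqrt d}\cdot\sqrt d\,\theta_1\le t\Big).
\]
Two facts then combine.
\begin{itemize}
\item The classical Maxwell--Poincar\'e lemma says that $\sqrt d\,\theta_1$ is within $O(1/d)$ of $\mathcal N(0,1)$ in Kolmogorov distance.
\item The thin-shell hypothesis says $\|X\|_2/\sqrt d\in[1-\epsilon_d,1+\epsilon_d]$ except on an event of probability at most $\epsilon_d$.
\end{itemize}
Using the uniform continuity of the Gaussian distribution function $\Phi$, this gives $\sup_t|\bar F(t)-\Phi(t)|\le C\epsilon_d+O(1/d)$.

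\textbf{Step 2 (regularity and concentration on the sphere).} Here log-concavity and isotropy enter. Every marginal $\langle\phi,X\rangle$ with $\phi\in S^{d-1}$ is a one-dimensional isotropic log-concave variable, so its density is bounded by an absolute constant $C_0$. For $\theta,\phi\in S^{d-1}$ and any $\delta>0$,
\[
|F_\theta(t)-F_\phi(t)|\le \Pr\big(|\langle\theta-\phi,X\rangle|>\delta\big)+\Pr\big(|\langle\phi,X\rangle-t|\le\delta\big)\le \frac{\|\theta-\phi\|_2}{\delta}+2C_0\delta .
\]
The first term uses Markov's inequality together with $\mathbb E|\langle v,X\rangle|\le\|v\|_2$, which holds by isotropy. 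Optimizing over $\delta$ shows that $\theta\mapsto F_\theta(t)$ is H\"older-$\tfrac12$ with a constant independent of $d$. Feeding this modulus of continuity into L\'evy's concentration inequality on $S^{d-1}$ gives
\[
\sigma\big(|F_\theta(t)-\bar F(t)|>s\big)\le C e^{-c d s^4}
\]
for each fixed $t$. If a Lipschitz rather than H\"older bound is preferred, I would apply the same argument to a Gaussian-smoothed $F_\theta$ and then remove the smoothing at cost $O(\delta)$.

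\textbf{Step 3 (uniformity in $t$).} Fix a grid $t_1<\dots<t_N$ with $N\approx s^{-1}$ such that consecutive values of $\Phi$ differ by at most $s$; the tails are handled by the endpoints. A union bound shows that, outside a set of $\sigma$-measure at most $CNe^{-cds^4}$, we have $|F_\theta(t_j)-\bar F(t_j)|\le s$ for every $j$. Because both $F_\theta$ and $\Phi$ are monotone, closeness on the grid extends to all $t$ with an extra error of $O(s)$. Combining with Step 1, for such $\theta$,
\[
\sup_t|F_\theta(t)-\Phi(t)|\le \eta_d:=C\big(\epsilon_d+s_d+1/d\big).
\]
Choosing $s_d=d^{-1/8}$ makes the exceptional set have measure less than $1$ for large $d$, so some $\theta$ exists and $\eta_d\to 0$.

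I expect the main obstacle to be Step 2. The map $\theta\mapsto F_\theta(t)$ is defined through indicator functions and is not Lipschitz a priori. The argument hinges on the uniform density bound for one-dimensional log-concave isotropic marginals, and this is the only place where log-concavity is genuinely used; Step 1 needs only the thin-shell hypothesis.
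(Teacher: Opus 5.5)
The paper does not prove this statement. It quotes it from Anttila--Ball--Perissinaki as background, so there is no in-paper argument to compare with.

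Your proposal is a correct, self-contained proof. Its architecture is the standard route behind the cited result: thin shell plus Maxwell--Poincar\'e for the $\sigma$-averaged marginal, L\'evy concentration of $\theta\mapsto F_\theta(t)$ on $S^{d-1}$, then a finite $t$-grid plus monotonicity.

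Two small repairs, neither affecting the conclusion:
\begin{itemize}
\item In Step~1, uniform continuity of $\Phi$ is not the property you need, because $|t/r-t|$ is unbounded in $t$. Use instead the scale estimate $\sup_t|\Phi(t/r)-\Phi(t)|\le C|r-1|$ for $|r-1|\le\tfrac12$. This follows from $\sup_u|u\,\Phi'(u)|=(2\pi e)^{-1/2}$, and with it your bound $C\epsilon_d+O(1/d)$ holds as stated.
\item In Step~2, L\'evy's inequality concentrates around the median, not around $\bar F(t)$. The conversion costs only $O(d^{-1/4})$, which your choice $s_d=d^{-1/8}$ absorbs.
\end{itemize}

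Your route is economical: log-concavity enters only through the one-dimensional density bound. It also gives a stronger conclusion than the paper's ``there exists $\theta$'': every $\theta$ outside a set of $\sigma$-measure $O(d^{1/8}e^{-c\sqrt d})$ works. To my recollection, this ``most directions'' form is what the cited result actually asserts. It is also what the paper's subsequent discussion around Theorem~\ref{twolinearmeasurementssimilar} relies on, since there the directions are drawn uniformly at random.

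The price is the rate. Your H\"older-$\tfrac12$ modulus, obtained from Markov's inequality, gives only the exponent $ds^4$. A sharper option is to bound $\Pr(|\langle\phi,X\rangle-t|\le|\langle\theta-\phi,X\rangle|)$ through the two-dimensional log-concave marginal of $(\langle\phi,X\rangle,\langle\theta-\phi,X\rangle/\|\theta-\phi\|_2)$, whose density is at most $Ce^{-c|y|}$ when $\|\theta-\phi\|_2\le 1$. This yields a dimension-free Lipschitz modulus and the exponent $ds^2$, but that improvement is not needed for the qualitative statement.
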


This theorem by \citet{anttila2003central} demonstrates that if the Euclidean norm of a random variable in $\mathbb{R}^{d}$ concentrates near $\sqrt{d}$, there exists a linear functional of $X$ that closely approximates a normal distribution. \citet{klartag2007central} extended this result, showing that almost every linear functional of $X$ becomes approximately normally distributed as $d \rightarrow \infty$. These results imply that in high-dimensional spaces, the concentration of the Euclidean norm is nearly identical across distributions, which reduces the effectiveness of hypothesis tests based on the Euclidean norm in distinguishing between distributions. In fact, as shown in the experimental results in Appendix \ref{detection_performance_dimension}, the likelihood histograms reveal that although the normal and anomaly data are clearly derived from different distributions, the distributions of their likelihoods overlap as the dimensionality increases.

\begin{theorem}
\label{twolinearmeasurementssimilar}
    Let $X,Y$ be random vectors with isotropic and log-concave density in $\mathbb{R}^{d}$ and $\sigma$  be the uniform probability measure on sphere $\mathbb{S}^{d-1}$. Then there exist $\epsilon_{d}\rightarrow0, \delta_{d}\rightarrow0$ and subset $\Theta$ in sphere $\mathbb{S}^{d-1}$ such that $\sigma(\Theta) > 1-\delta_d$ and for any $\theta, \phi \in \Theta$,
    \[
    d_{TV}(\langle X,\theta\rangle , \langle Y, \phi \rangle) < \epsilon_{d}
    \]
\end{theorem}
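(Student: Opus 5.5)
Since $X$ and $Y$ are isotropic and log-concave in $\mathbb{R}^d$, the natural route is Klartag's central limit theorem for convex bodies \citep{klartag2007central}, combined with the triangle inequality for total variation through a standard Gaussian. Let $G\sim\mathcal{N}(0,1)$. The target is: for each of $X$ and $Y$ separately, there is a set of directions of nearly full measure on which the one-dimensional marginal is close to $G$ in total variation. The set $\Theta$ will be the intersection of these two sets.

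First, apply Klartag's theorem (in its total-variation form, which he proves for marginals of isotropic log-concave vectors) to $X$. This gives sequences $\epsilon'_d\to 0$ and $\delta'_d\to 0$, depending only on $d$ and not on the particular law of $X$, and a set $\Theta_X\subset\mathbb{S}^{d-1}$ with $\sigma(\Theta_X)>1-\delta'_d$ such that $d_{TV}(\langle X,\theta\rangle,G)<\epsilon'_d$ for every $\theta\in\Theta_X$. The thin-shell hypothesis needed in the \citet{anttila2003central} framework is available without extra assumptions: the \citet{guedon2011interpolating} bound quoted above gives $\Pr(|\|X\|_2/\sqrt d-1|\ge t)\le Ce^{-c\sqrt d\min\{t^3,t\}}$, so one can take $\epsilon_d$ of order $d^{-1/6}$ up to logarithmic factors. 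Klartag's argument then upgrades the Kolmogorov-distance statement of \citet{anttila2003central} to total variation for most $\theta$, using log-concavity of the one-dimensional marginals (Prékopa–Leindler) and concentration on the sphere for the map $\theta\mapsto$ law of $\langle X,\theta\rangle$. Repeating this for $Y$ gives $\Theta_Y$ with the same universal sequences.

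Next, set $\Theta=\Theta_X\cap\Theta_Y$. Then $\sigma(\Theta)>1-2\delta'_d$, so we take $\delta_d=2\delta'_d$. For any $\theta,\phi\in\Theta$,
\[
d_{TV}(\langle X,\theta\rangle,\langle Y,\phi\rangle)\le d_{TV}(\langle X,\theta\rangle,G)+d_{TV}(G,\langle Y,\phi\rangle)<2\epsilon'_d=:\epsilon_d .
\]
Note that $\theta$ and $\phi$ need not coincide. This works because both marginals are compared to the same fixed Gaussian, and isotropy fixes their variance at $1$.

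The main obstacle is the first step: obtaining total-variation closeness, not merely Kolmogorov closeness, on a set of directions of nearly full measure, with rates that are uniform over all isotropic log-concave laws. I would cite Klartag's TV version directly. If a self-contained route is preferred, the plan would be to pass from Kolmogorov to TV via the following fact: for one-dimensional log-concave densities with unit variance, densities are uniformly bounded and have controlled tails, so uniform CDF closeness implies $L^1$ closeness of densities with a polynomial loss in the rate. The measure bound on $\Theta_X$ would come from Lipschitz concentration on $\mathbb{S}^{d-1}$ of $\theta\mapsto\Pr(\langle X,\theta\rangle\le t)$, after averaging over a finite grid of values of $t$.
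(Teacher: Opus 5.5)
Your proposal is correct and follows the paper's route: the paper only says the result is a straightforward consequence of Klartag's central limit theorem for isotropic log-concave vectors (total-variation form). Your steps make that omitted derivation explicit: apply it to $X$ and $Y$ separately, intersect the two direction sets with a union bound, and compare both marginals to the same standard Gaussian via the triangle inequality (Klartag's bound is non-strict, so enlarge $\epsilon'_d$ slightly to get the strict inequality).
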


It is straightforward to derive Theorem \ref{twolinearmeasurementssimilar} from the theorem in \citet{klartag2007central}. This theorem shows that two random vectors from isotropic and log-concave distributions are almost indistinguishable when we only compare their linear measurements. Specifically, since the directions $\theta$ and $\phi$ in the Theorem \ref{twolinearmeasurementssimilar} are drawn uniformly at random and belong to a subset $\Theta$ with measure approaching 1, the likelihood ordering based on projections strongly correlates with that based on norms. So, 
\begin{align}
    d_{TV}(||X||_{2}, ||Y||_{2}) = d_{TV}( \langle X, \frac{X}{||X||_{2}} \rangle, \langle Y, \frac{Y}{||Y||_{2}} \rangle) < \epsilon_{d}
\end{align}
which holds with high probability under the assumptions of Theorem \ref{twolinearmeasurementssimilar}. Since normalizing flow model uses Euclidean norm of latent vector, this explains the degraded performance of the normalizing flow model based anomaly detection for high-dimensional data.

\subsection{Experiments for Sections \ref{smaller_dim} \& \ref{identical_norm} Using Synthetic Data}
\label{detection_performance_dimension}

\begin{figure*}[h!]
\begin{center}
    \begin{subfigure}{0.24\textwidth}
        \centering
        \includegraphics[width=3.9cm]{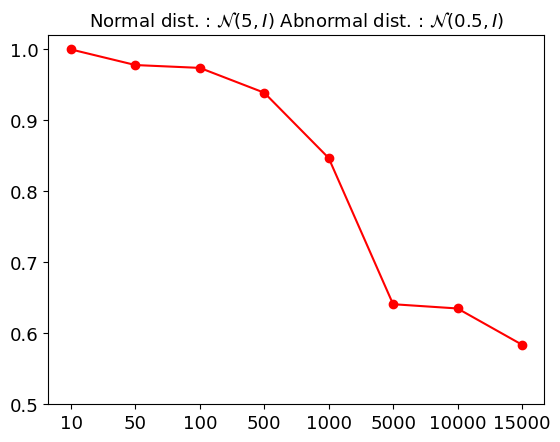}
    \end{subfigure}%
    \begin{subfigure}{0.24\textwidth}
        \centering
        \includegraphics[width=3.9cm]{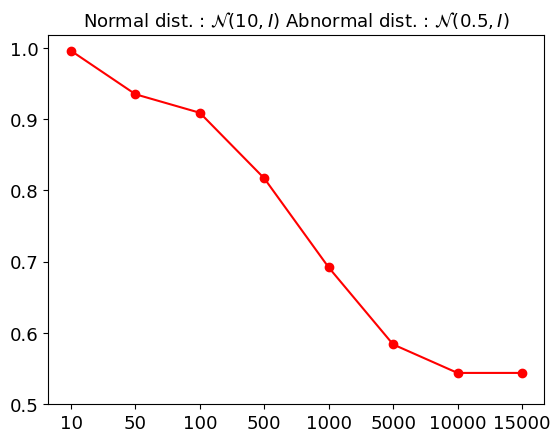}
    \end{subfigure}
    \begin{subfigure}{0.24\textwidth}
        \centering
        \includegraphics[width=3.9cm]{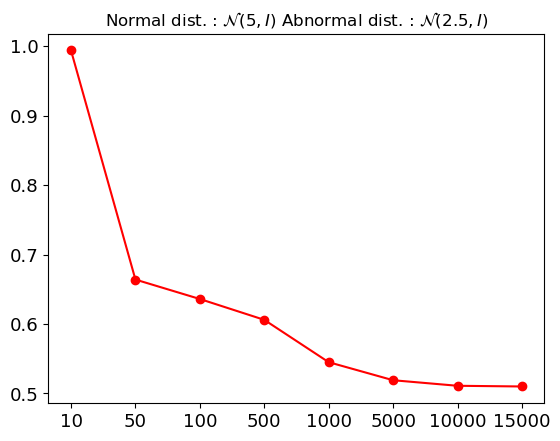}
    \end{subfigure}%
    \begin{subfigure}{0.24\textwidth}
        \centering
        \includegraphics[width=3.9cm]{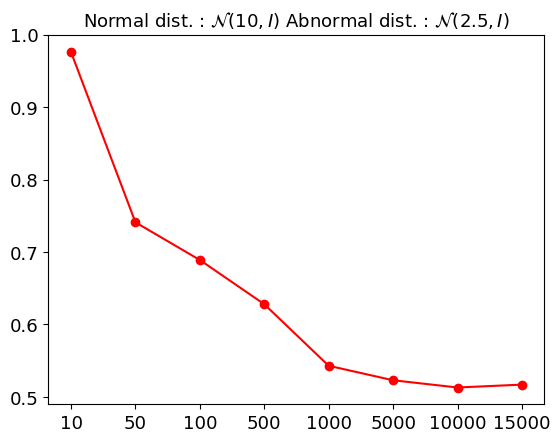}
    \end{subfigure}
    \caption{Performance of NF-SLT implemented by NICE across different dimensions. The y-axis represents AUROC, and the x-axis indicates the dimensionality of the data. The titles of each subfigure specify distributions' parameters. Subplots 1 and 2 have a larger In/Out of Distribution difference than Subplots 3 and 4, and the speed of performance decay as the dimension increases becomes faster as the distribution difference becomes smaller.}
    \label{fig:dimsionality_test}
    \end{center}
\end{figure*}

In this section, we present experiments on synthetic data that exhibit behavior consistent with the results in Sections \ref{smaller_dim} \& \ref{identical_norm}. Although those results hold trivially for zero-centered distributions, our analysis goes beyond this case by confirming that the Euclidean norms remain nearly identical even when the distributions are not centered at zero.

The normal and anomaly distributions are both Gaussian but have different parameters $\mu$ and $\Sigma$. We first sample $10^4$ data points from the normal Gaussian distribution and use them as the training set, training NICE and RealNVP networks with LeakyReLU activations. We then construct a test set by sampling $10^4$ points from the normal distribution and $10^4$ points from an abnormal Gaussian distribution whose mean differs from that of the normal distribution. A simple likelihood test is performed, and the resulting AUROC as a function of dimensionality is reported for NICE in Figure~\ref{fig:dimsionality_test} and for RealNVP in Figure~\ref{fig:dimsionality_test_realnvp}. The dimensionalities considered in the experiments are $[10, 50, 100, 500, 1000, 5000, 10000, 15000]$. Due to numerical stability issues, the RealNVP experiments were run with the in-distribution mean set to $\mu = 7.5$ instead of $\mu = 10$, and for $\mu = 7.5$ we did not run the experiment at dimension $15000$.

As shown in Figure~\ref{fig:dimsionality_test}, which is the experimental result using NICE, the performance in the 1st and 2nd subfigures degrades significantly between dimensions 1000 and 5000. The 3rd and 4th subfigures begin to degrade in performance at a dimension of 50, smaller than the first two figures, and continue to degrade as the dimension increases. Also we conducted an experiment by changing the means of the in-distribution and out-of-distribution, and confirmed that the AUROC was measured as 1 in most cases. Hence, we found that this phenomenon does not occur symmetrically, similar to the OOD detection experiments on CIFAR-10 and SVHN. In addition, Figure \ref{fig:dimsionality_test_realnvp}, an experiment using RealNVP, shows a similar pattern to Figure \ref{fig:dimsionality_test} in that AUROC decreases rapidly as the dimension increases, except for the second subplot. In Figure \ref{fig:dimsionality_test_realnvp}, the second subplot exhibits an inconsistent trend as dimensionality increases. This instability is likely due to the significant mean difference between in-distribution and out-of-distribution, which introduces numerical stability challenges. This observation underscores the critical impact of distribution alignment on performance scaling.

\begin{figure*}[t!]
\begin{center}
    \begin{subfigure}{0.24\textwidth}
        \centering
        \includegraphics[width=3.9cm]{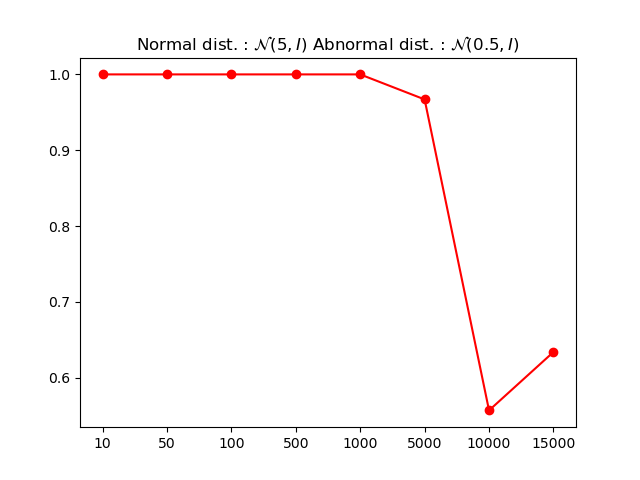}
    \end{subfigure}%
    \begin{subfigure}{0.24\textwidth}
        \centering
        \includegraphics[width=3.9cm]{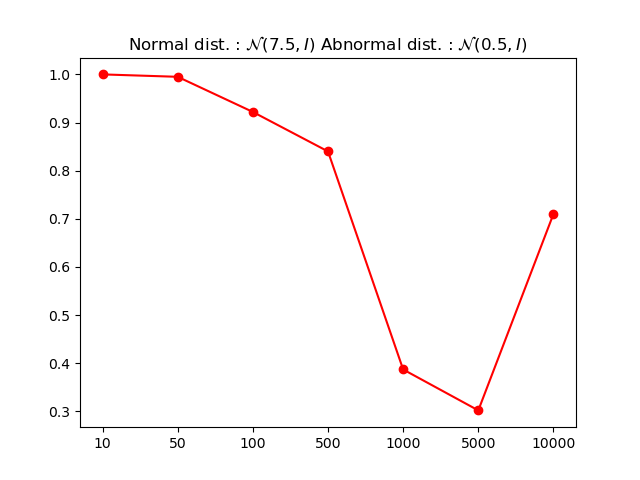}
    \end{subfigure}
    \begin{subfigure}{0.24\textwidth}
        \centering
        \includegraphics[width=3.9cm]{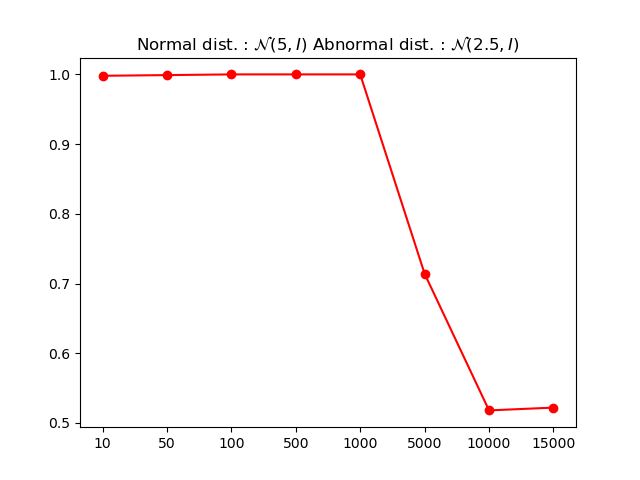}
    \end{subfigure}%
    \begin{subfigure}{0.24\textwidth}
        \centering
        \includegraphics[width=3.9cm]{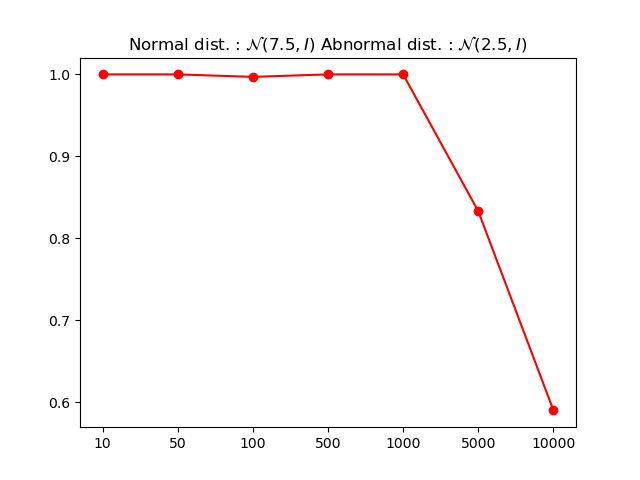}
    \end{subfigure}
    \caption{Performance of NF-SLT implemented by RealNVP across different dimensions. The y-axis represents AUROC, and the x-axis indicates the dimensionality of the data. The titles of each subfigure specify distributions' parameters.}
    \label{fig:dimsionality_test_realnvp}
    \end{center}
\end{figure*}

Additionally, we present a histogram of the log-likelihood of normal and abnormal data along the dimensions from each experiment corresponding to Figure~\ref{fig:dimsionality_test} in Figure \ref{fig:dim_histogram_1} to \ref{fig:dim_histogram_4}. The title of each figure indicates the dimension, the x-axis indicates the log-likelihood, and the y-axis indicates the number of data corresponding to the bin. In addition, the orange and blue histograms represent abnormal data and normal data, respectively. In experiments on RealNVP, to see how the volume term affects the likelihood, we visualized the latent norm and volume as histograms in the experiments for the first and third subplots of Figure 3 in Figures \ref{fig:realnvp_normhist_1} to \ref{fig:realnvp_volumehist_2}. 

According to Equation \ref{eq:nf} for input data $\textbf{x}$, the log-likelihood $\log p(\textbf{x})$ can be calculated with the log-likelihood of the latent $\textbf{z}$ corresponding to $\textbf{x}$ and the volume term for the distribution set as the prior of the flow. Through the histogram results, we can see that when the AUROC approaches 0.5 (i.e., dimension approaches to 10000), the latent norm becomes identical and the size of the volume term is reversed. This means that when calculating the log-likelihood with Equation \ref{eq:nf}, the volume term rather has a negative effect on the likelihood test, which is consistent with the results of Figure 4(a) and 4(b) of \citet{nalisnick2018do}. However, in Figures \ref{fig:realnvp_normhist_1} to \ref{fig:realnvp_volumehist_2} , we can see that the scale of the volume term is much smaller than the latent norm, which can be considered as the log-likelihood scale of the latent because $\mathcal{N}(0,I_d)$ is set as the prior of the flow in general and the log-likelihood is proportional to $-||\textbf{z}||^2_2$. Therefore, the phenomenon that AUROC approaches 0.5 when the dimension increases can be interpreted as the fact that in the case of a volume-preserving model such as NICE, the latent norms become identical, and in the case of a non-volume-preserving model such as RealNVP, the latent norms become identical and the volume term shows a behavior that actually hinders the performance, but the effect of the volume term is so small.

From another perspective, let's express this case using Equation \ref{eq:likelihood_gap}:
\begin{align}
    \begin{split}
        &\mathbb{E}_{\textbf{x}\sim P}[\log P_{\theta}(\textbf{x})] - \mathbb{E}_{\textbf{x}\sim Q}[\log P_{\theta}(\textbf{x})]\\
        &=  D_{KL}(Q||P_{\theta}) -D_{KL}(P||P_{\theta}) +\mathbb{H}(Q) -\mathbb{H}(P) \\
        &=  D_{KL}(Q||P_{\theta}) -D_{KL}(P||P_{\theta}) \: (\because \mathbb{H}(P)=\mathbb{H}(Q)) \\
    \end{split}
\end{align}
In this case, if $P_\theta$ is a perfect model, the likelihood gap is guaranteed to be positive, so it can be argued that the reason the term becomes negative is because $D_{KL}(P||P_\theta)$ eventually increases for high-dimensional data (i.e., $P_\theta$ is not a perfect model). Therefore, it can be argued that this problem occurs because it is difficult to approximate $P_\theta$ as a perfect model, such as in optimization problems in high-dimensional space. However, the important thing is that even this phenomenon does not occur symmetrically. Therefore, we argue that it is difficult to think that the reason for this phenomenon is simply an optimization problem that occurs in high dimensions.

Consequently, these results support the claim that the norm shown in Theorem \ref{twolinearmeasurementssimilar} of Section \ref{identical_norm} becomes almost identical in high dimensional space even when the distributions are not zero-centered. In addition, it can be confirmed that the phenomenon occurs regardless of whether the model has a constant volume term or not (i.e., whether the model is volume-preserving) so it can be thought that the presence or absence of a volume term does not help to alleviate the phenomenon. However, we did not observe this phenomenon in real data, and we confirmed that this phenomenon did not occur when the flow activation function was configured as a hyperbolic tangent rather than a ReLU-like function in flow. Therefore, we propose future research to investigate why norms become identical in certain cases when using ReLU-like functions in flow (or why anomalous data are mapped to isotropic and log-concave distributions based on Section \ref{high_dim_perspective}) and to explore whether this phenomenon is reproduced in real data.

\begin{figure}[h!]
    \begin{subfigure}{0.48\textwidth}
        \centering
        \includegraphics[width=8cm]{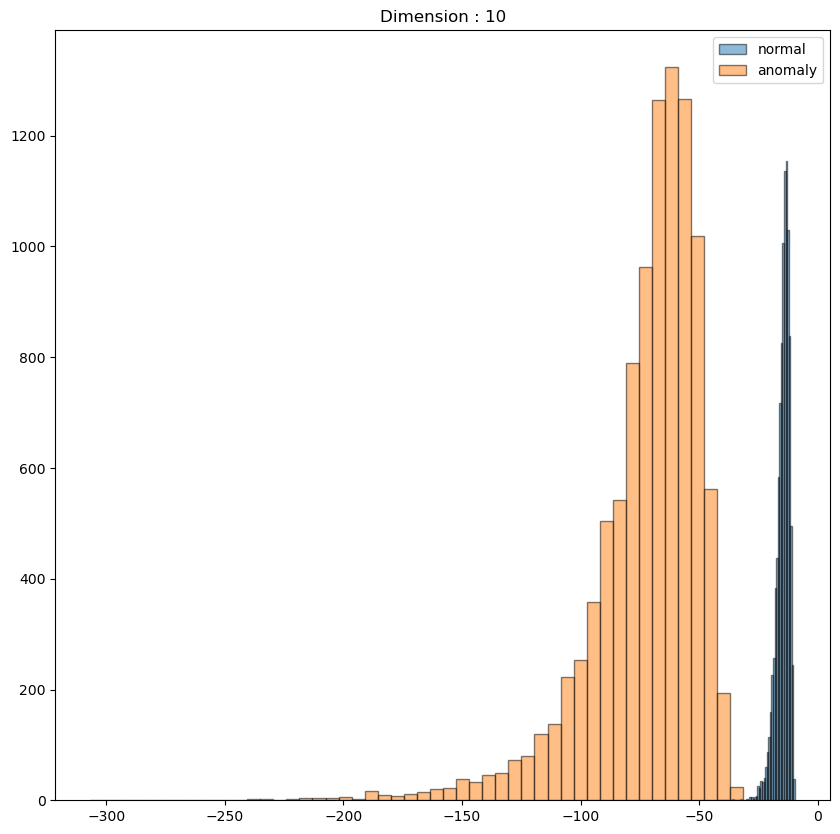}
    \end{subfigure}
    \begin{subfigure}{0.48\textwidth}
        \centering
        \includegraphics[width=8cm]{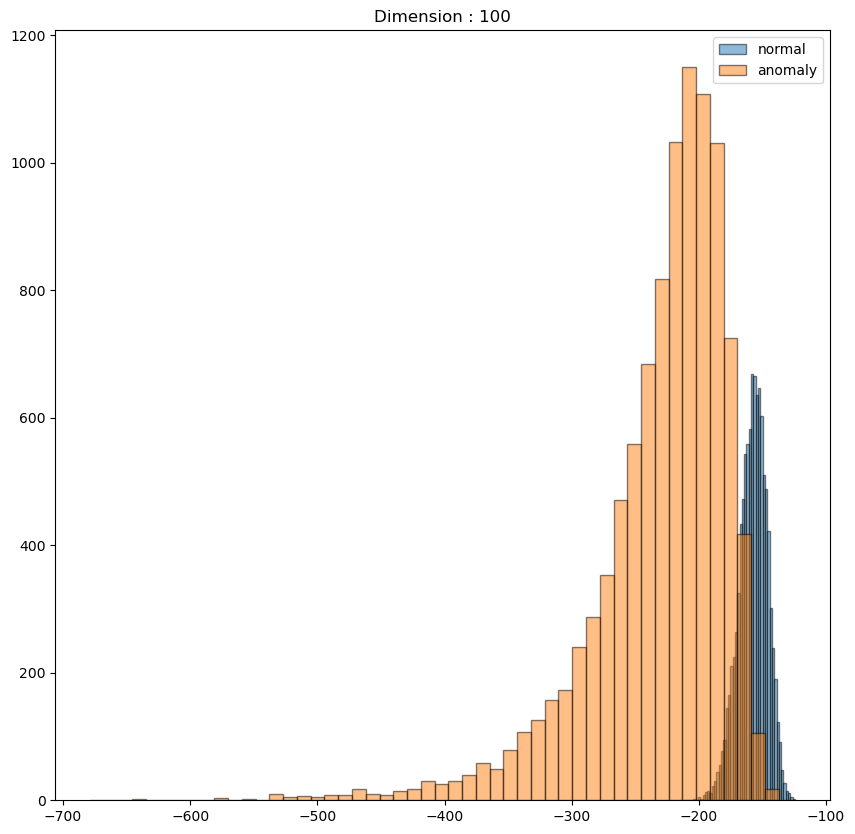}
    \end{subfigure}
    \begin{subfigure}{0.48\textwidth}
        \centering
        \includegraphics[width=8cm]{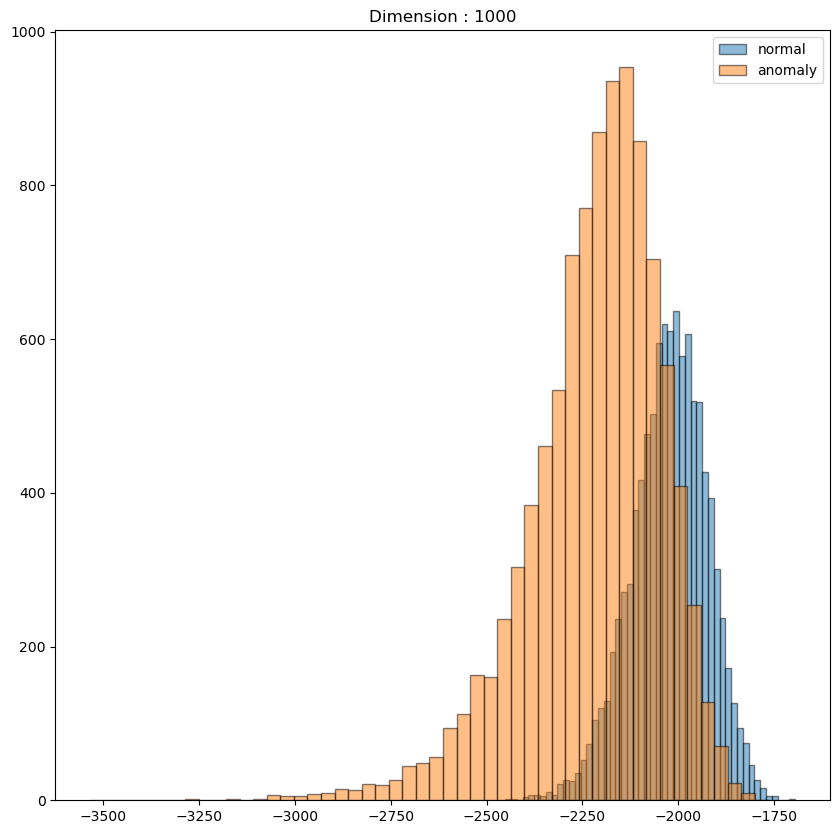}
    \end{subfigure}
    \begin{subfigure}{0.48\textwidth}
        \centering
        \includegraphics[width=8cm]{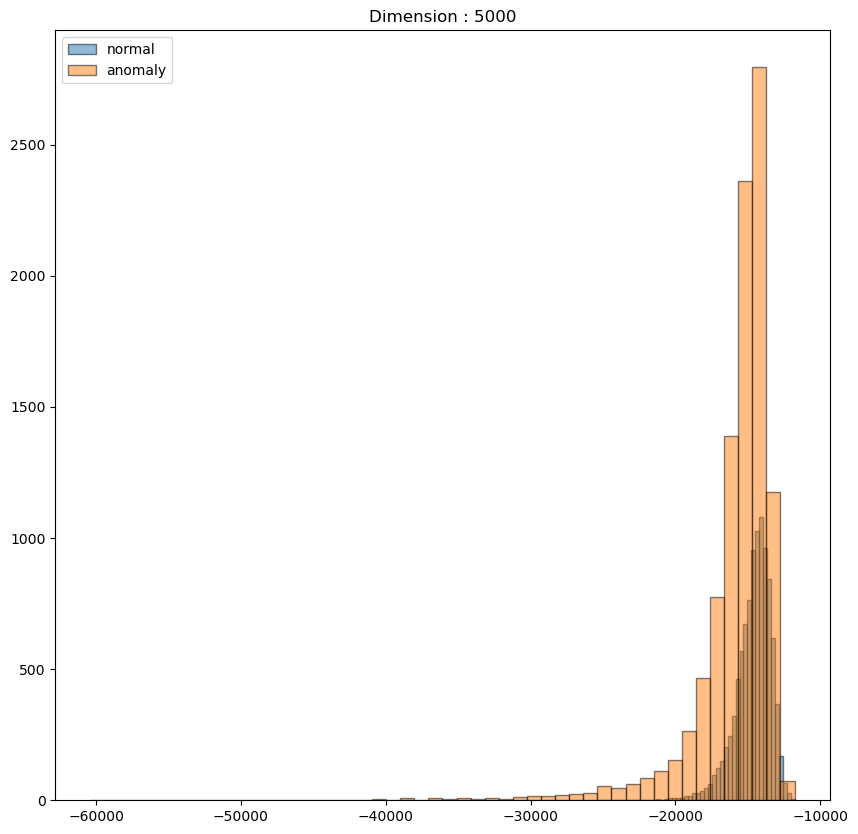}
    \end{subfigure}
    \caption{Histogram of log-likelihood values for the 1st subfigure in Figure~\ref{fig:dimsionality_test}}
    \label{fig:dim_histogram_1}
\end{figure}

\begin{figure}[h!]

    \begin{center}
    \begin{subfigure}{0.48\textwidth}
        \centering
        \includegraphics[width=8cm]{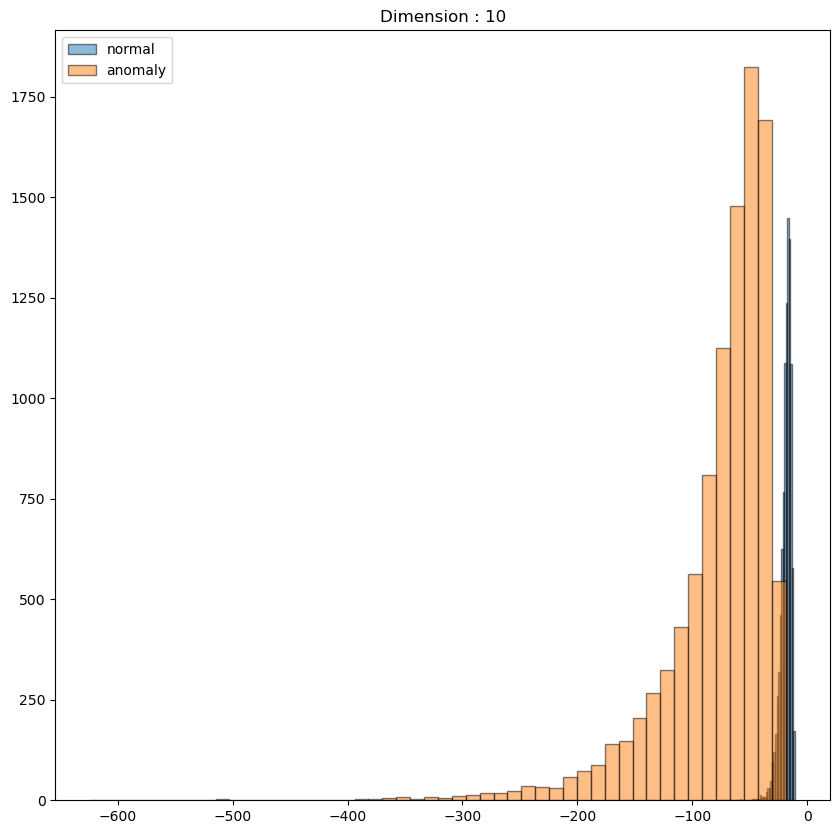}
    \end{subfigure}
    \begin{subfigure}{0.48\textwidth}
        \centering
        \includegraphics[width=8cm]{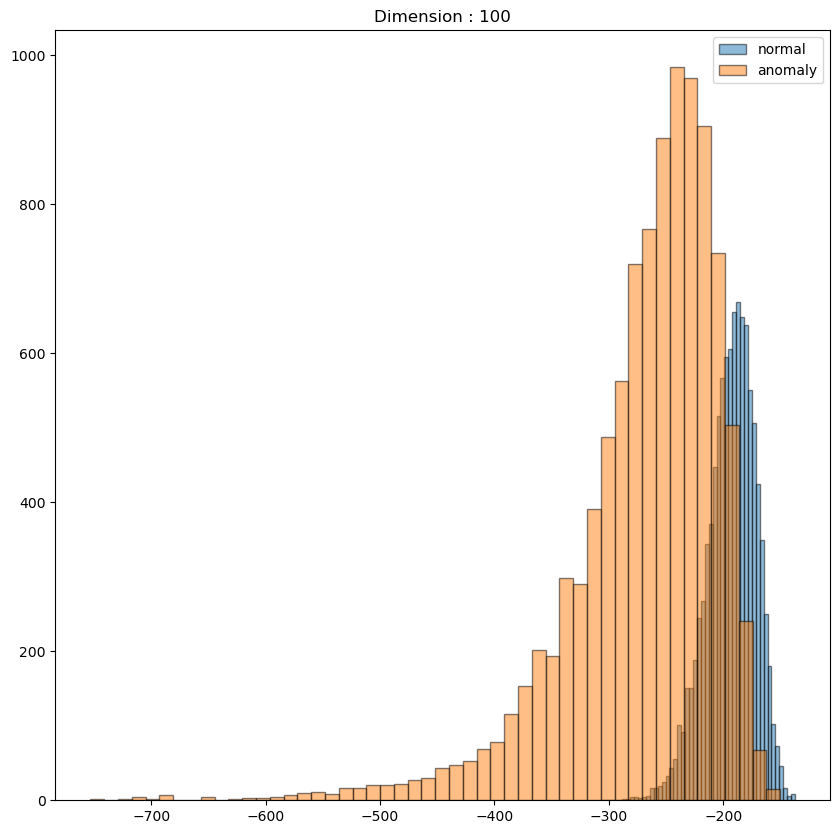}
    \end{subfigure}
    \begin{subfigure}{0.48\textwidth}
        \centering
        \includegraphics[width=8cm]{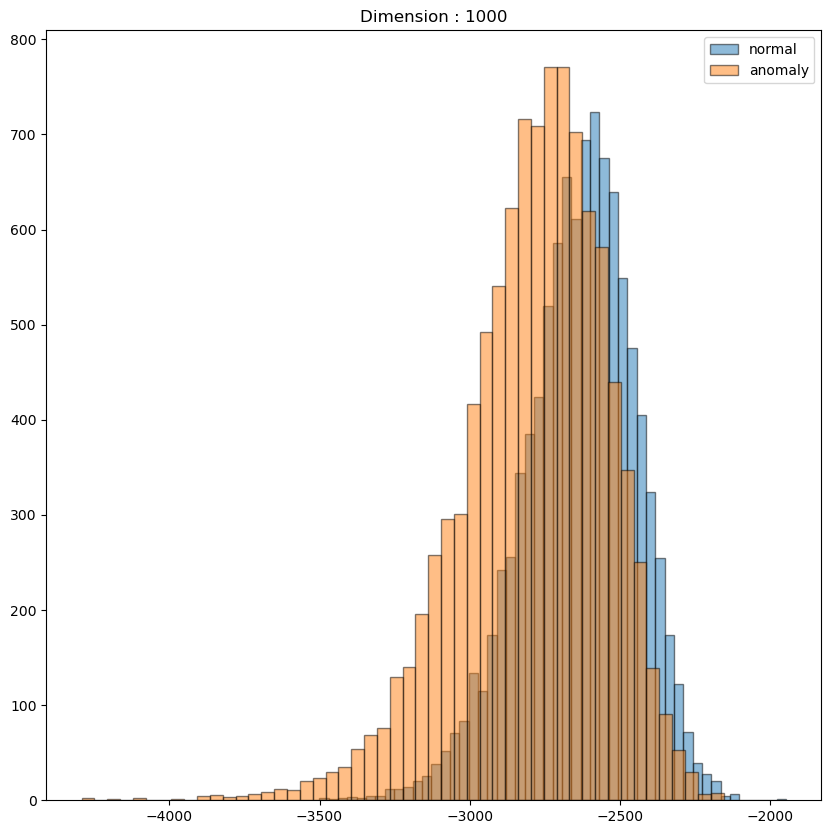}
    \end{subfigure}
    \begin{subfigure}{0.48\textwidth}
        \centering
        \includegraphics[width=8cm]{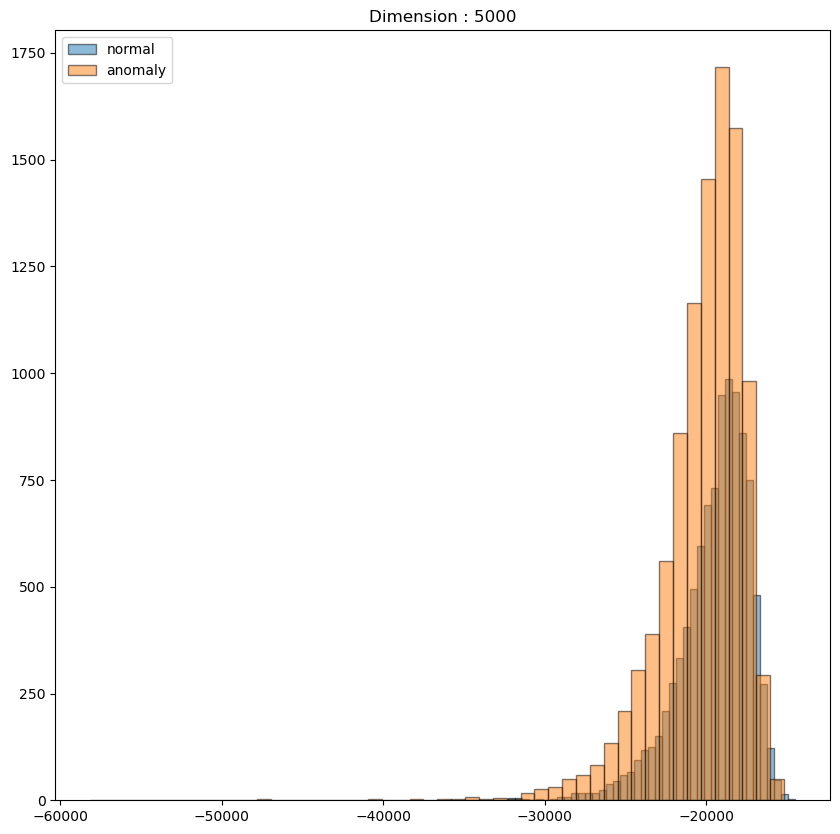}
    \end{subfigure}
    \caption{Histogram of log-likelihood values for the 2nd subfigure in Figure~\ref{fig:dimsionality_test}}
    \label{fig:dim_histogram_2}
    \end{center}
\end{figure}

\begin{figure}[h!]

    \begin{center}
    \begin{subfigure}{0.48\textwidth}
        \centering
        \includegraphics[width=8cm]{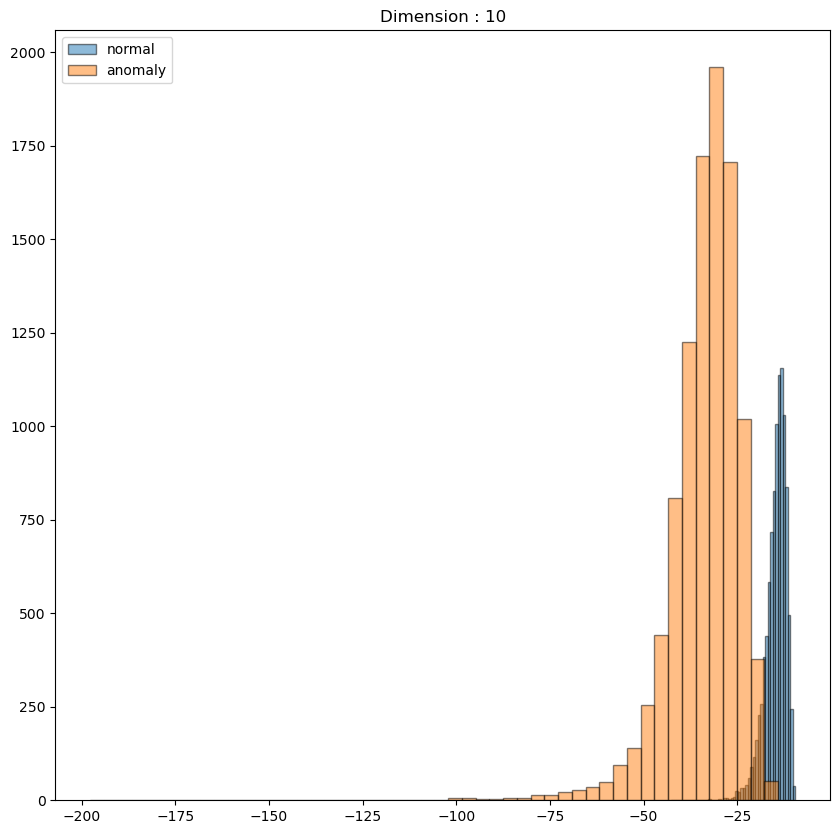}
    \end{subfigure}
    \begin{subfigure}{0.48\textwidth}
        \centering
        \includegraphics[width=8cm]{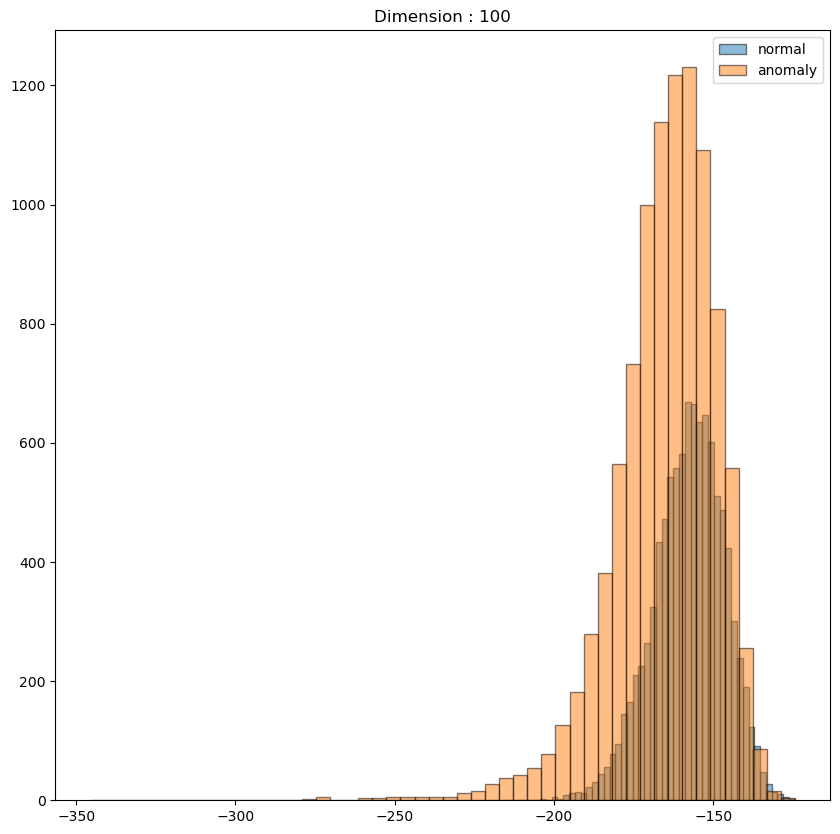}
    \end{subfigure}
    \begin{subfigure}{0.48\textwidth}
        \centering
        \includegraphics[width=8cm]{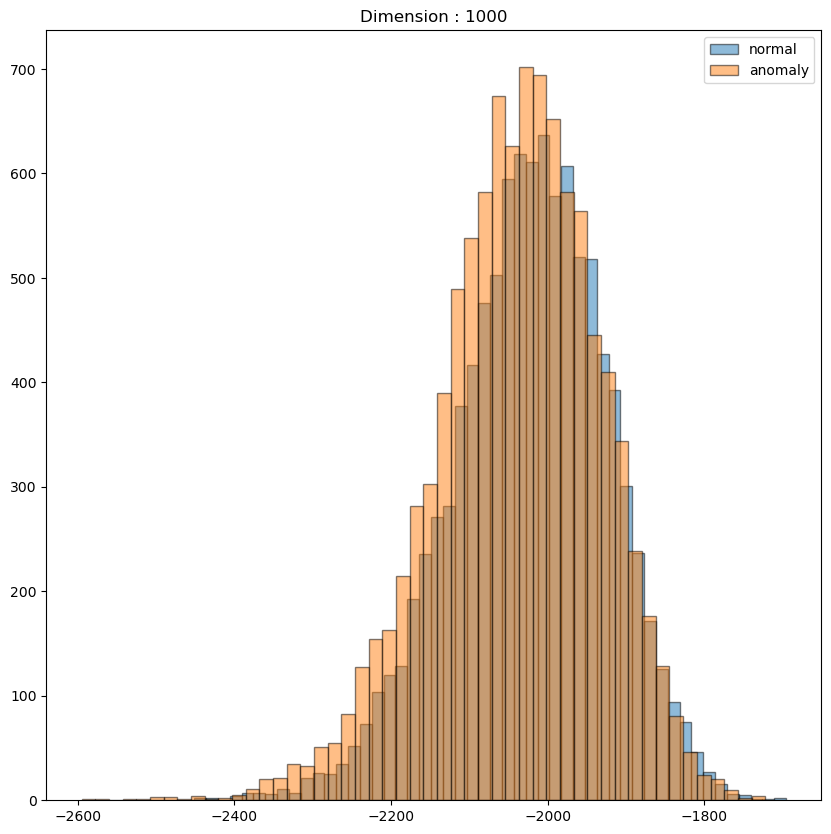}
    \end{subfigure}
    \begin{subfigure}{0.48\textwidth}
        \centering
        \includegraphics[width=8cm]{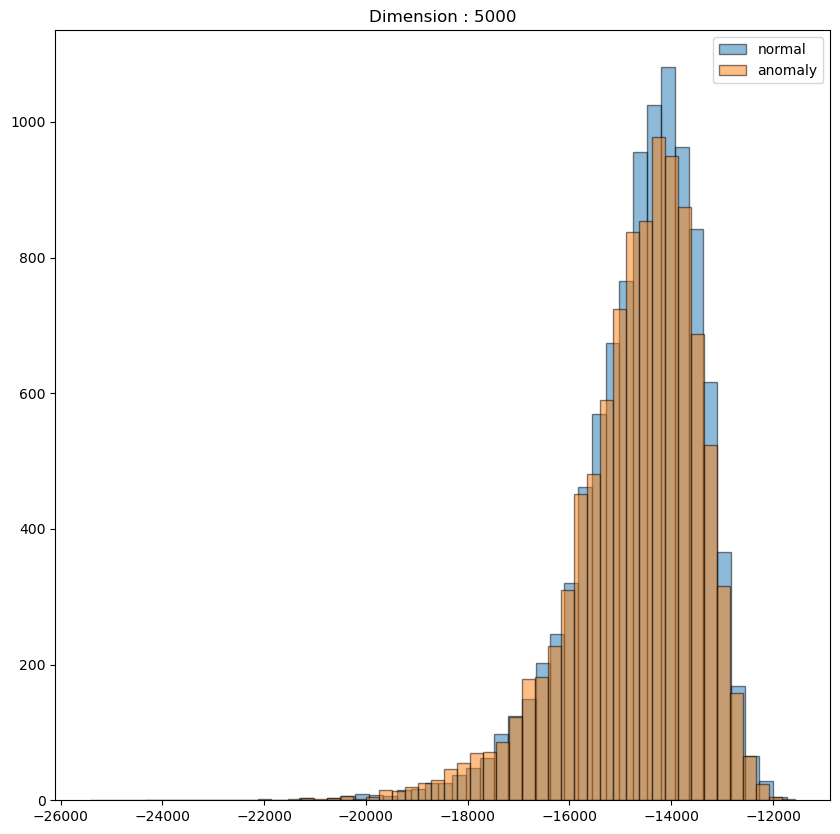}
    \end{subfigure}
    \caption{Histogram of log-likelihood values for the 3rd subfigure in Figure ~\ref{fig:dimsionality_test}}
    \label{fig:dim_histogram_3}
    \end{center}
\end{figure}

\begin{figure}[h!]

    \begin{center}
    \begin{subfigure}{0.48\textwidth}
        \centering
        \includegraphics[width=8cm]{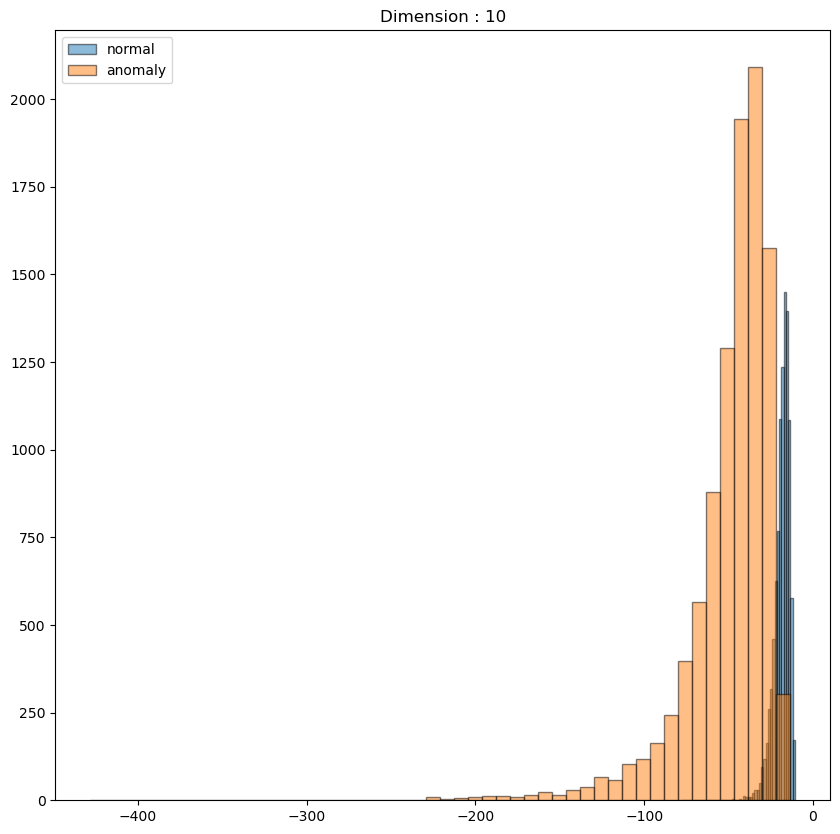}
    \end{subfigure}
    \begin{subfigure}{0.48\textwidth}
        \centering
        \includegraphics[width=8cm]{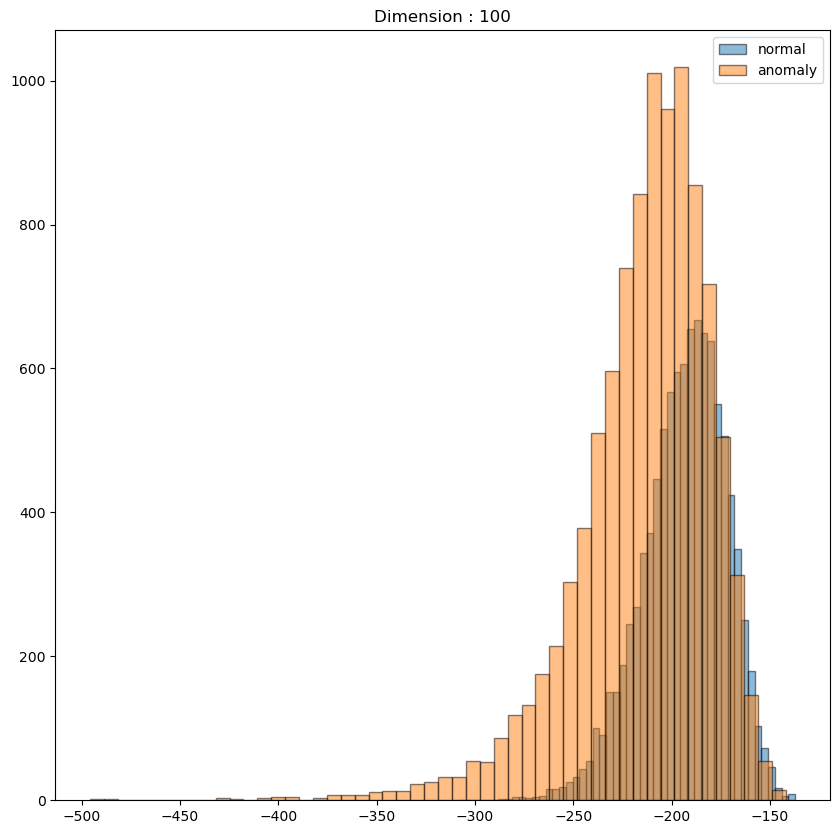}
    \end{subfigure}
    \begin{subfigure}{0.48\textwidth}
        \centering
        \includegraphics[width=8cm]{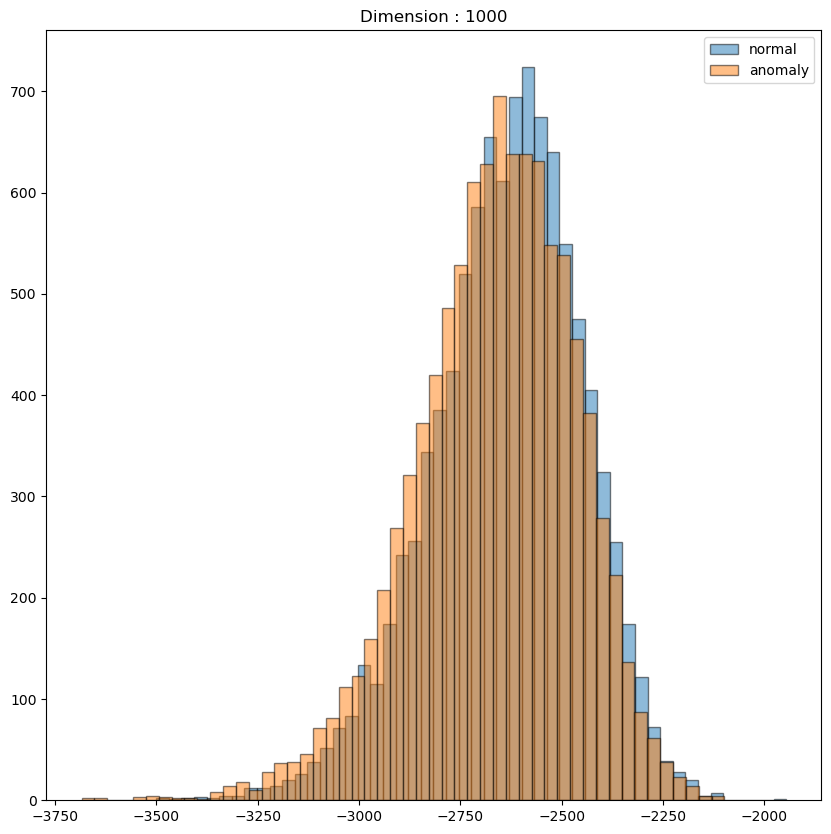}
    \end{subfigure}
    \begin{subfigure}{0.48\textwidth}
        \centering
        \includegraphics[width=8cm]{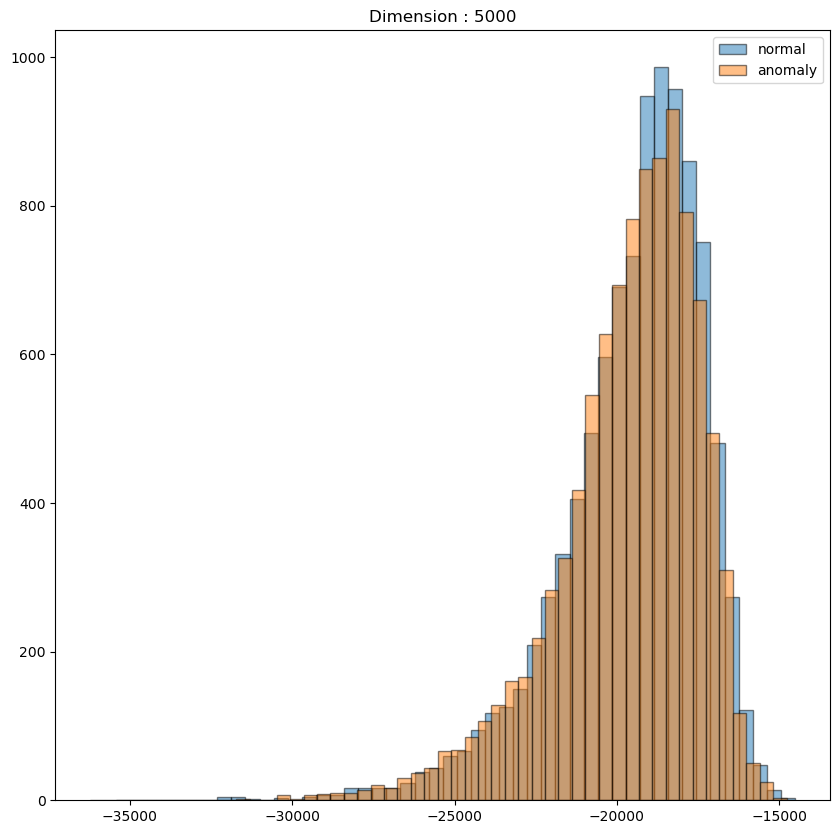}
    \end{subfigure}
    \caption{Histogram of log-likelihood values for the 4th subfigure in Figure~\ref{fig:dimsionality_test}}
    \label{fig:dim_histogram_4}
    \end{center}
\end{figure}

\begin{figure}[h!]

    \begin{center}
    \begin{subfigure}{0.48\textwidth}
        \centering
        \includegraphics[width=8cm]{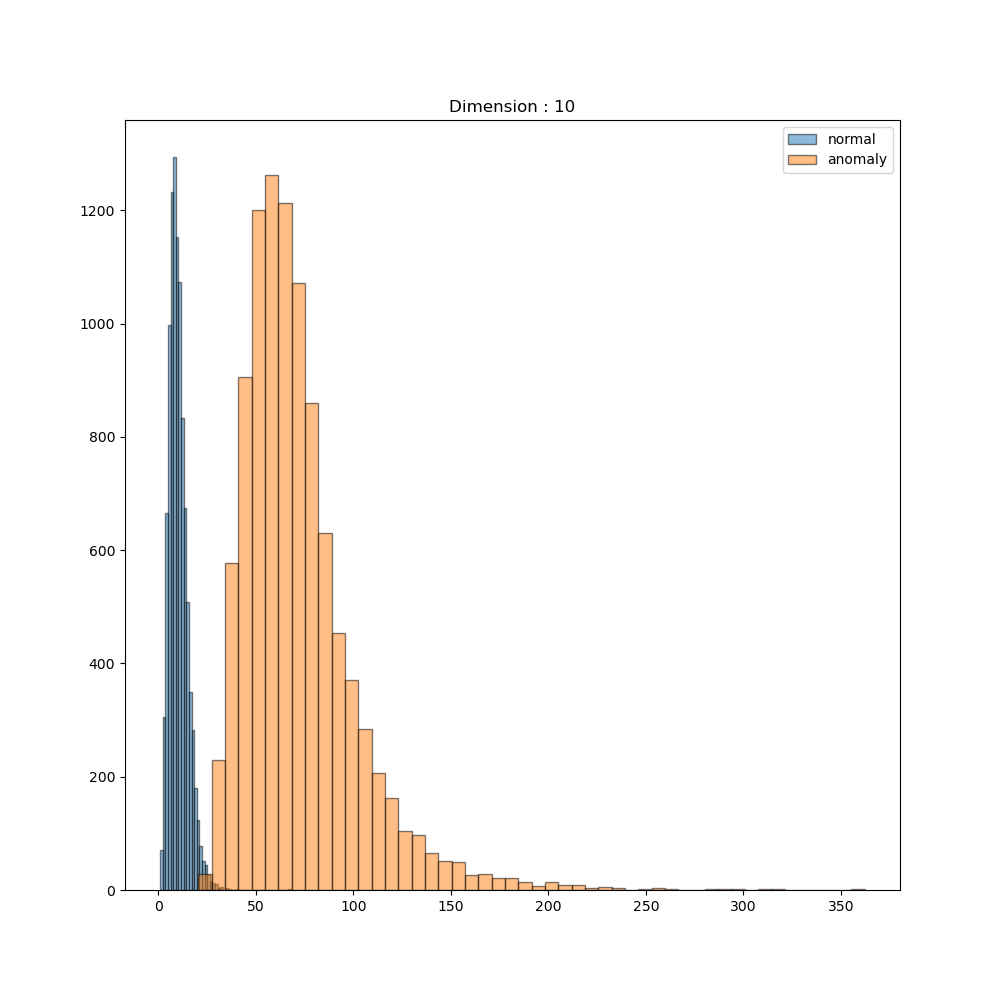}
    \end{subfigure}
    \begin{subfigure}{0.48\textwidth}
        \centering
        \includegraphics[width=8cm]{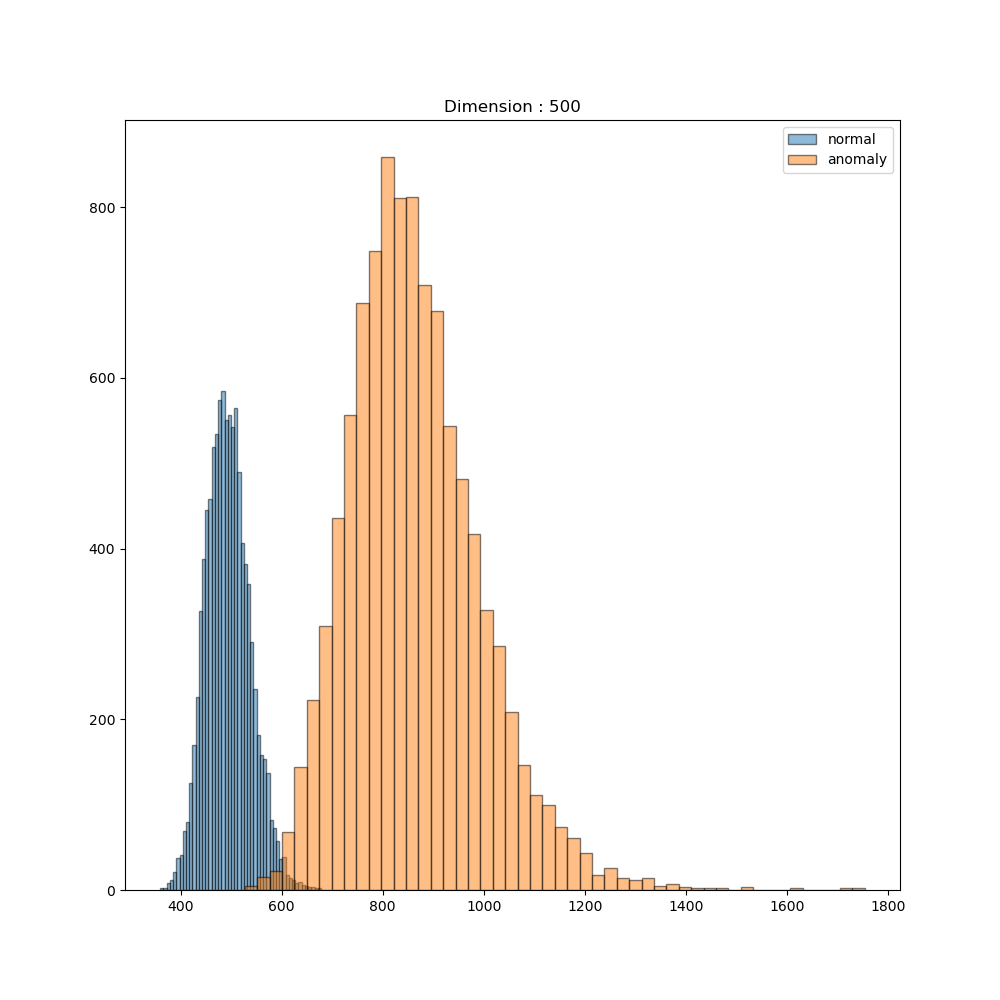}
    \end{subfigure}
    \begin{subfigure}{0.48\textwidth}
        \centering
        \includegraphics[width=8cm]{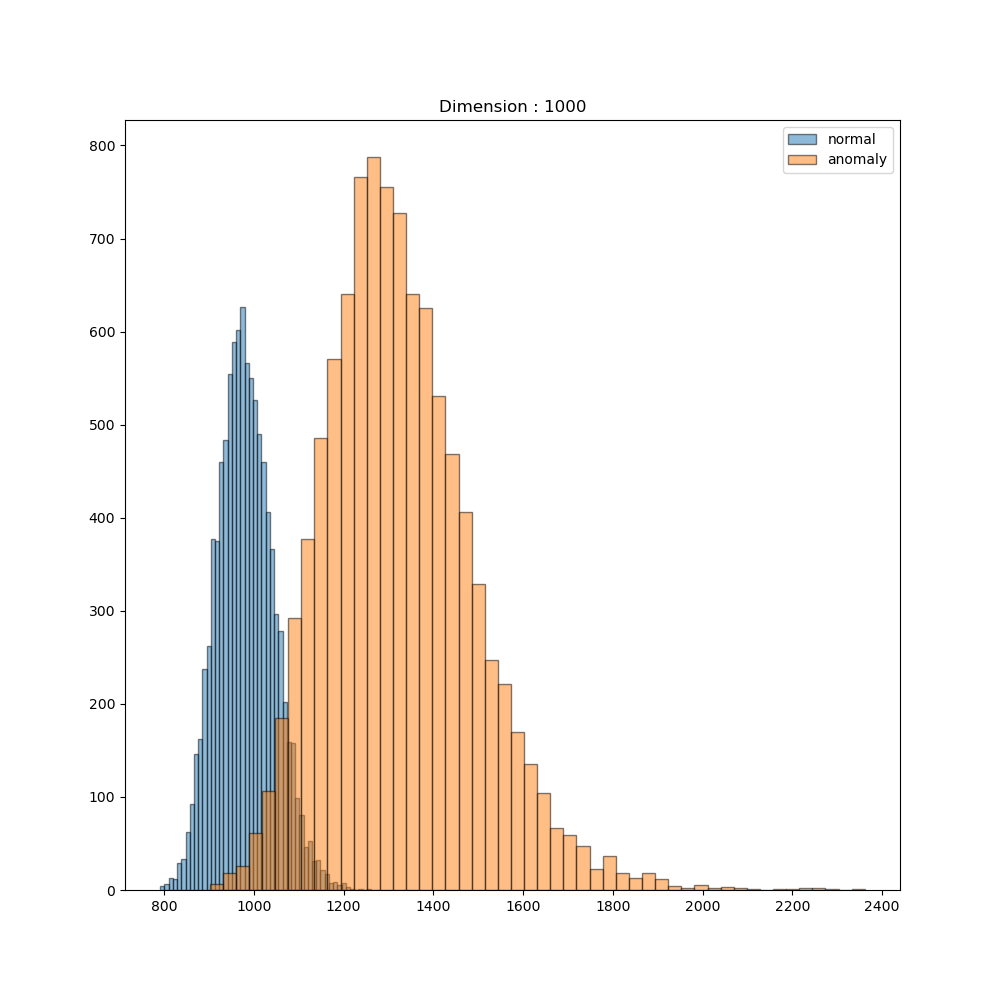}
    \end{subfigure}
    \begin{subfigure}{0.48\textwidth}
        \centering
        \includegraphics[width=8cm]{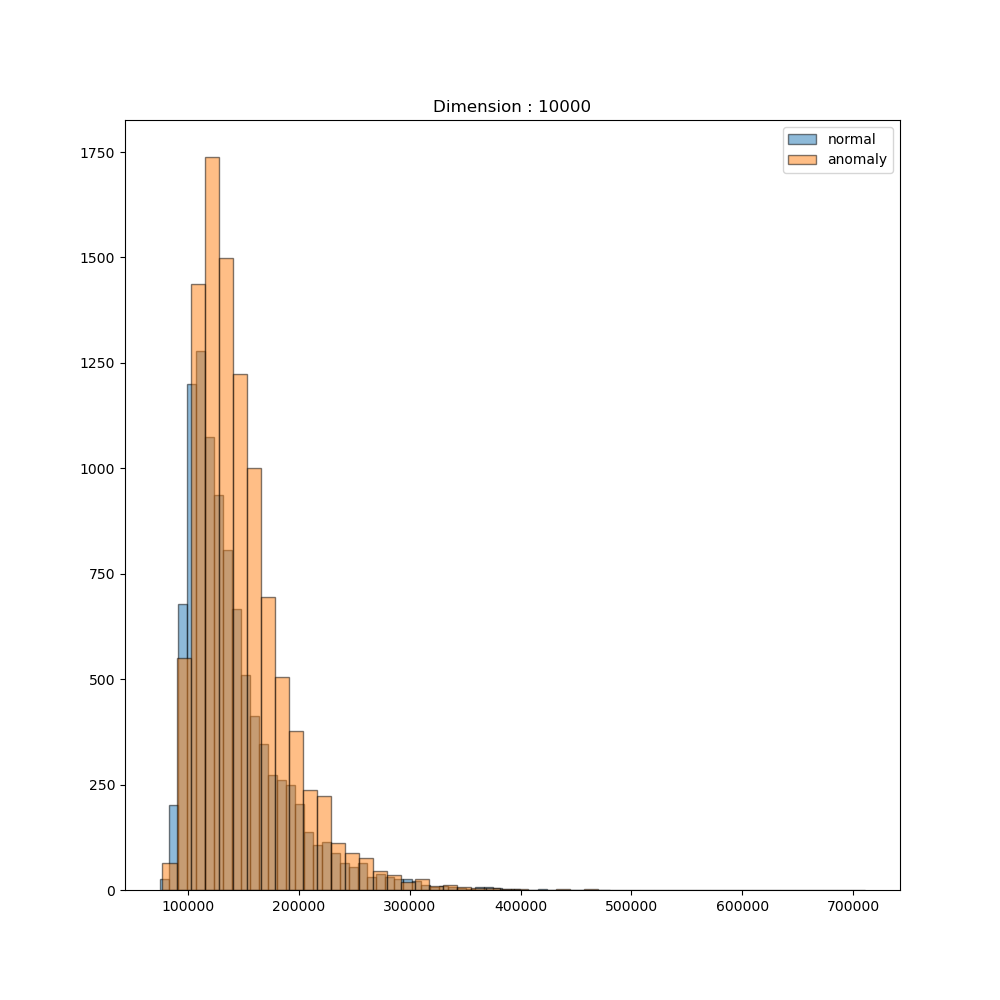}
    \end{subfigure}
    \caption{Histogram of latent norm values for the 1st subfigure in Figure~\ref{fig:dimsionality_test_realnvp}}
    \label{fig:realnvp_normhist_1}
    \end{center}
\end{figure}

\begin{figure}[h!]

    \begin{center}
    \begin{subfigure}{0.48\textwidth}
        \centering
        \includegraphics[width=8cm]{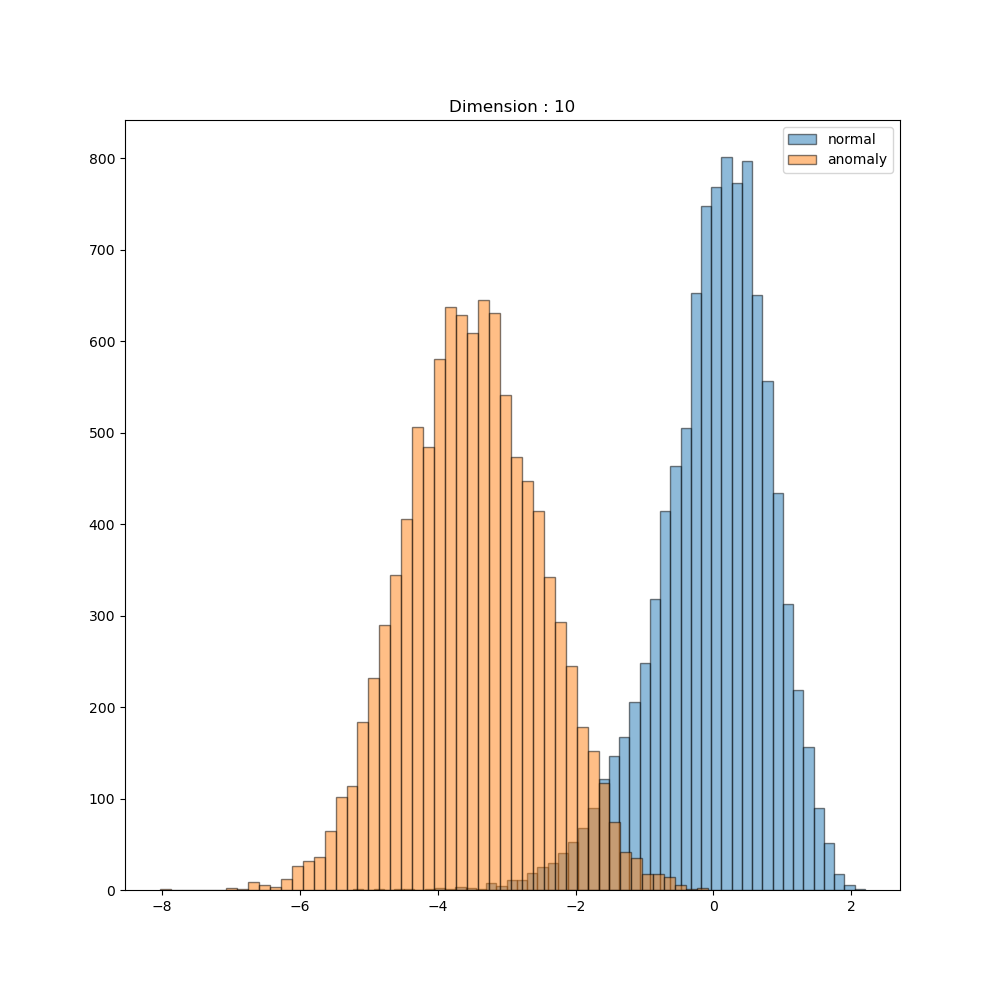}
    \end{subfigure}
    \begin{subfigure}{0.48\textwidth}
        \centering
        \includegraphics[width=8cm]{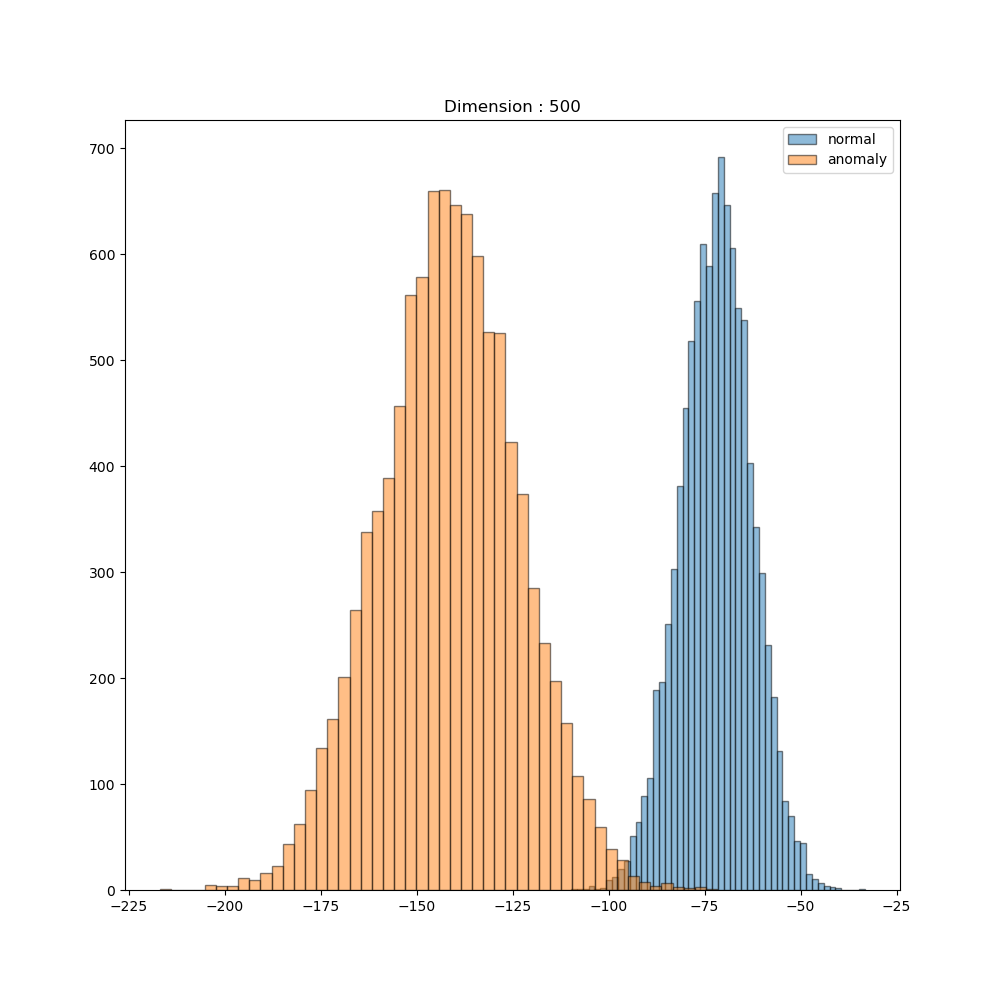}
    \end{subfigure}
    \begin{subfigure}{0.48\textwidth}
        \centering
        \includegraphics[width=8cm]{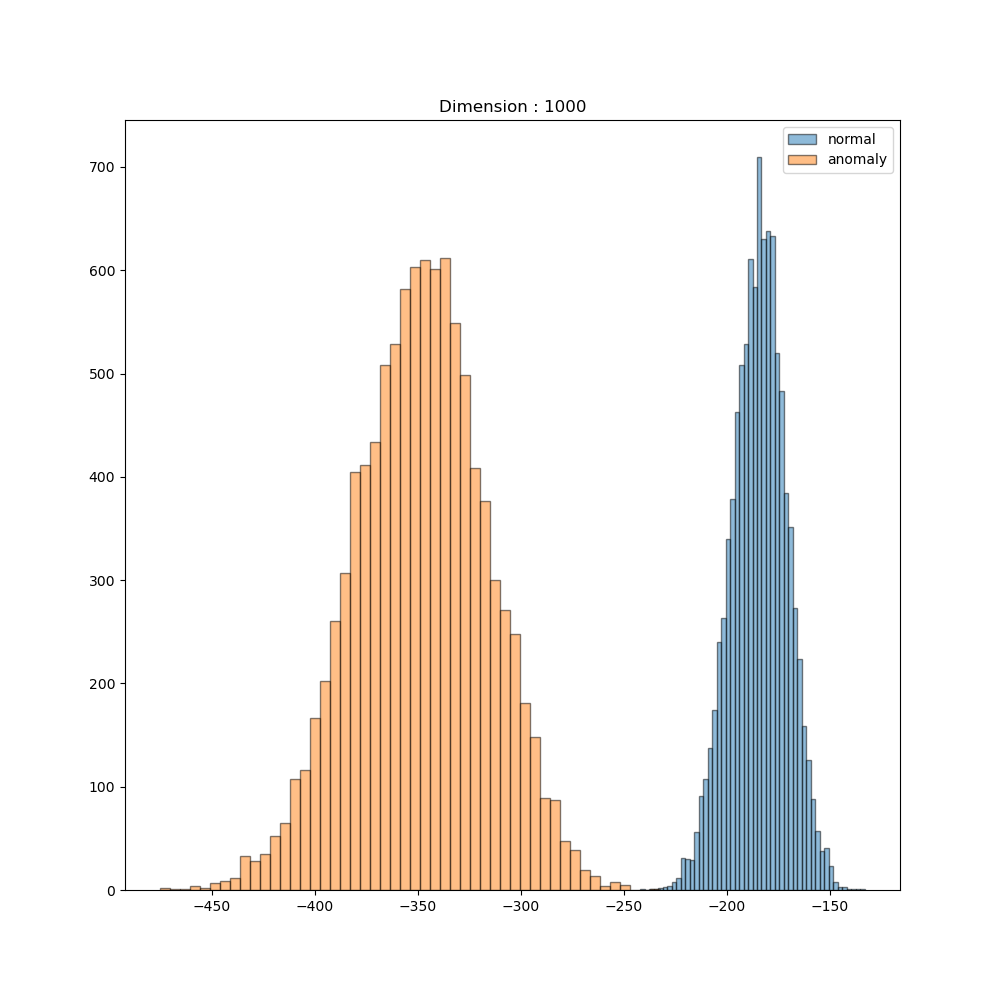}
    \end{subfigure}
    \begin{subfigure}{0.48\textwidth}
        \centering
        \includegraphics[width=8cm]{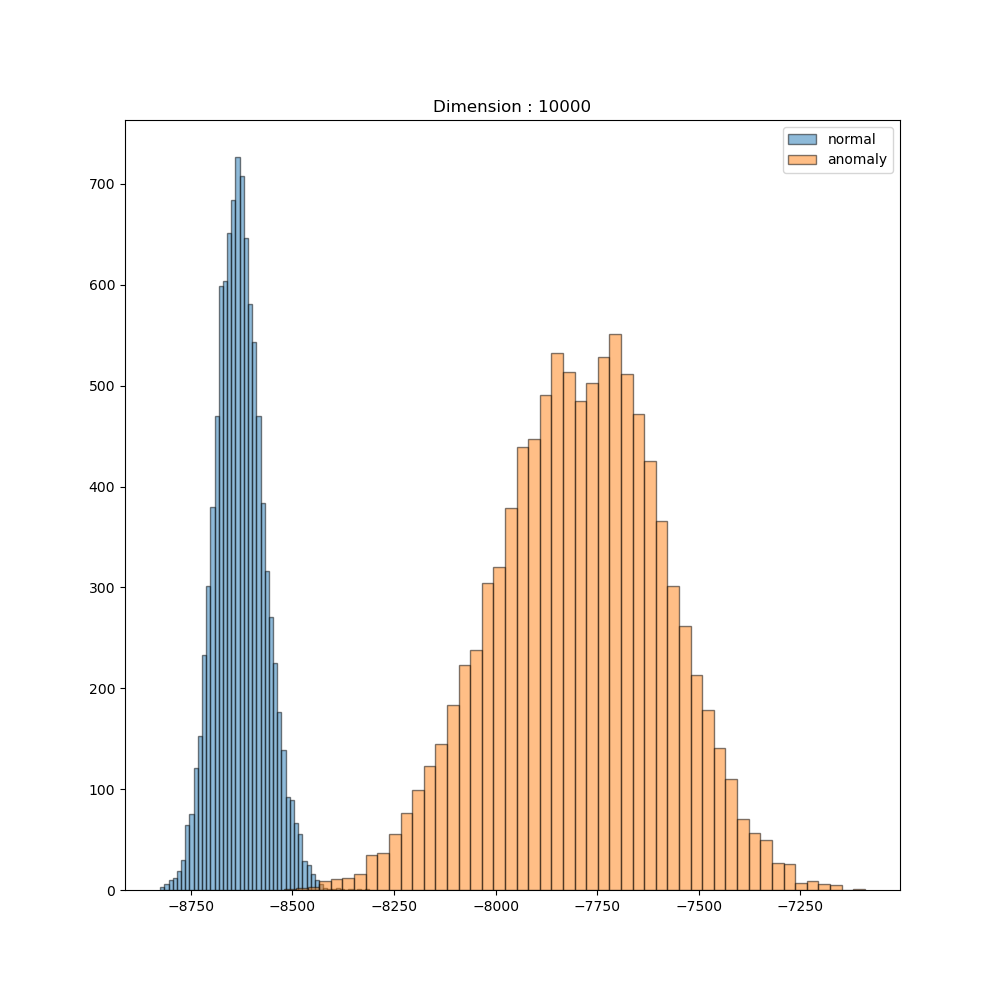}
    \end{subfigure}
    \caption{Histogram of volume values for the 1st subfigure in Figure~\ref{fig:dimsionality_test_realnvp}}
    \label{fig:realnvp_volumehist_1}
    \end{center}
\end{figure}

\begin{figure}[h!]

    \begin{center}
    \begin{subfigure}{0.48\textwidth}
        \centering
        \includegraphics[width=8cm]{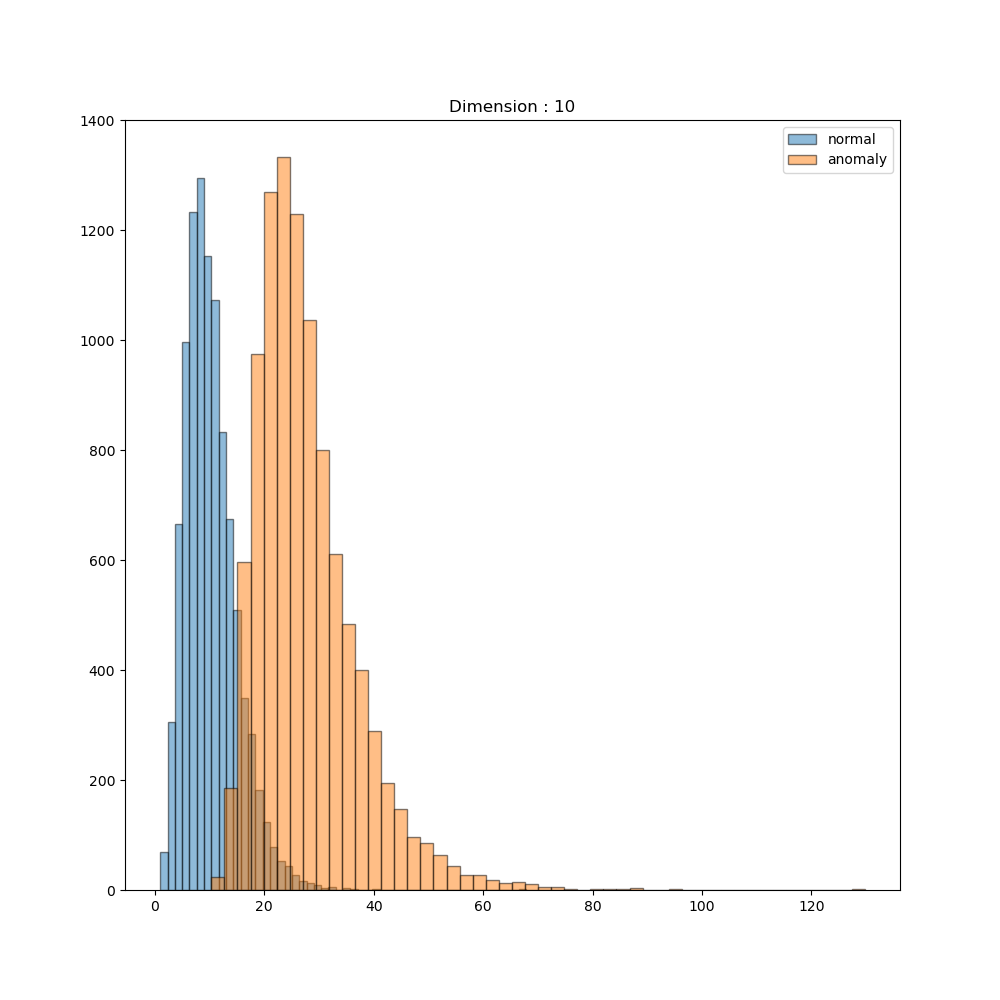}
    \end{subfigure}
    \begin{subfigure}{0.48\textwidth}
        \centering
        \includegraphics[width=8cm]{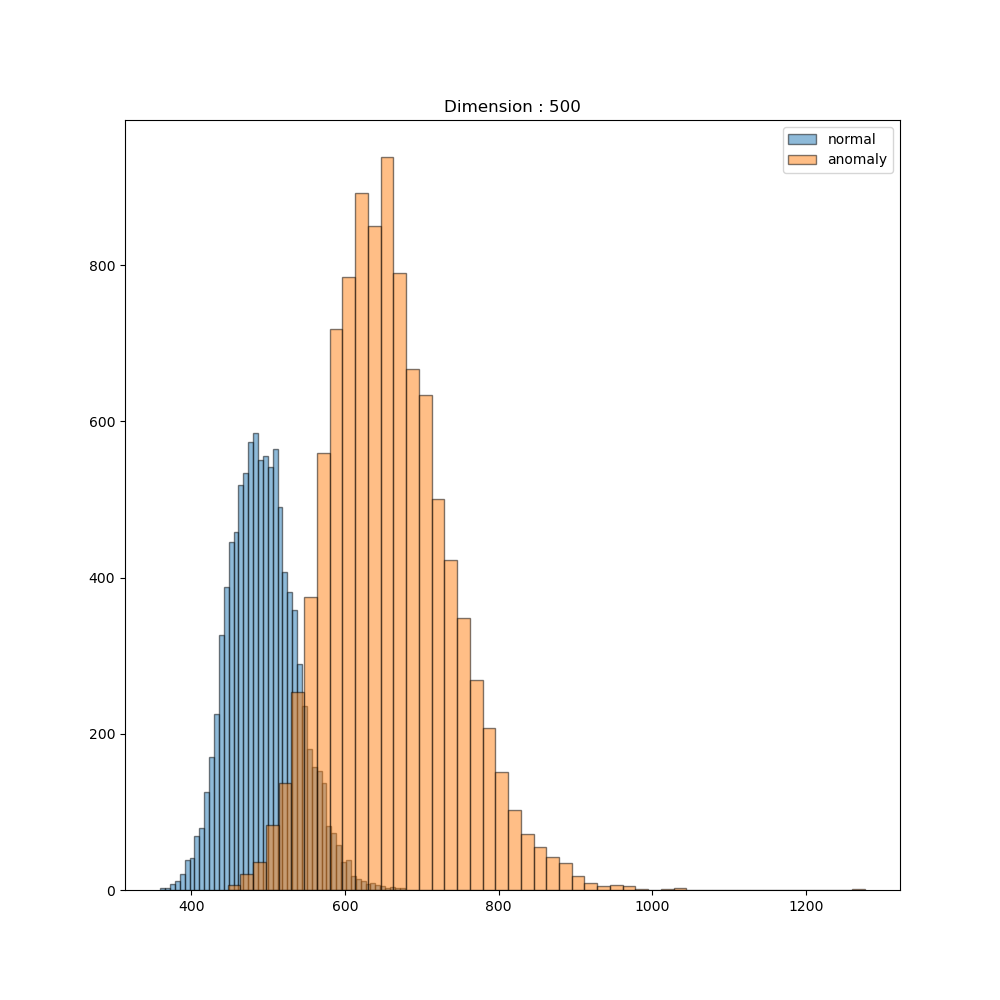}
    \end{subfigure}
    \begin{subfigure}{0.48\textwidth}
        \centering
        \includegraphics[width=8cm]{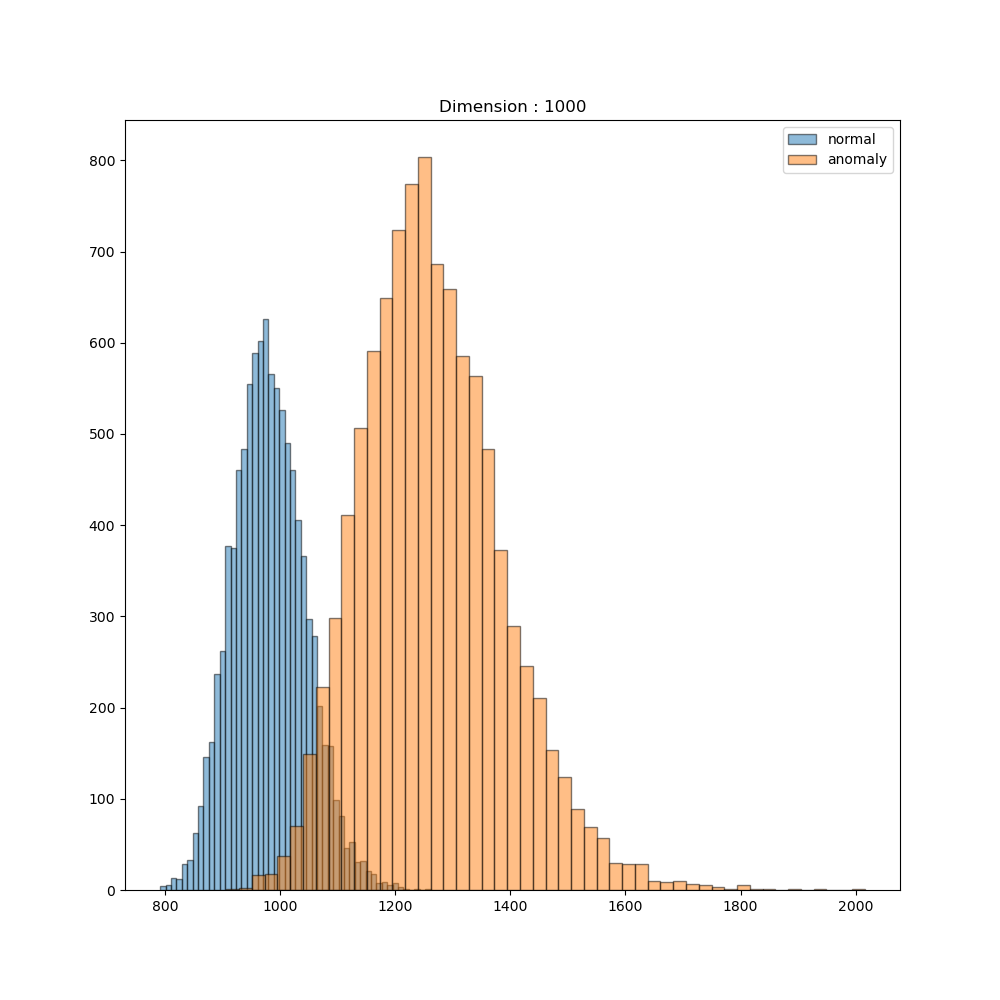}
    \end{subfigure}
    \begin{subfigure}{0.48\textwidth}
        \centering
        \includegraphics[width=8cm]{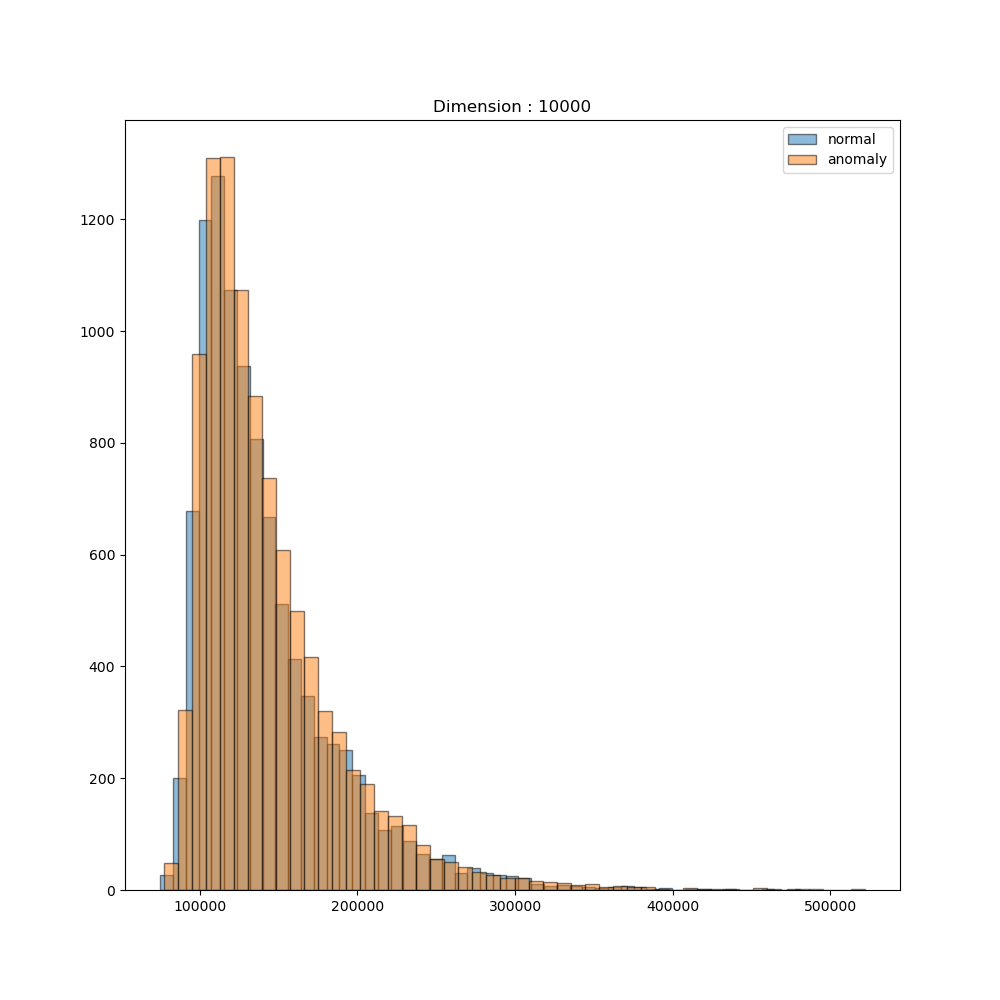}
    \end{subfigure}
    \caption{Histogram of latent norm values for the 3rd subfigure in Figure~\ref{fig:dimsionality_test_realnvp}}
    \label{fig:realnvp_normhist_2}
    \end{center}
\end{figure}

\begin{figure}[h!]

    \begin{center}
    \begin{subfigure}{0.48\textwidth}
        \centering
        \includegraphics[width=8cm]{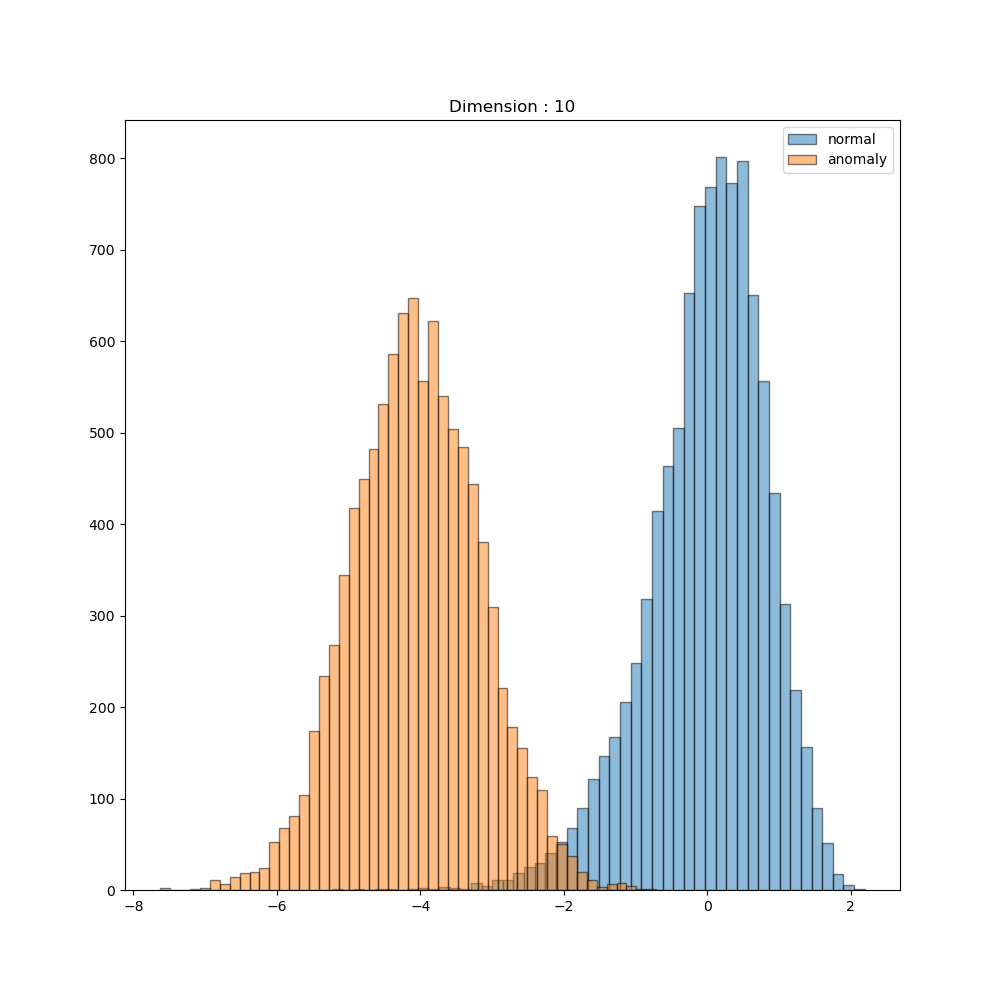}
    \end{subfigure}
    \begin{subfigure}{0.48\textwidth}
        \centering
        \includegraphics[width=8cm]{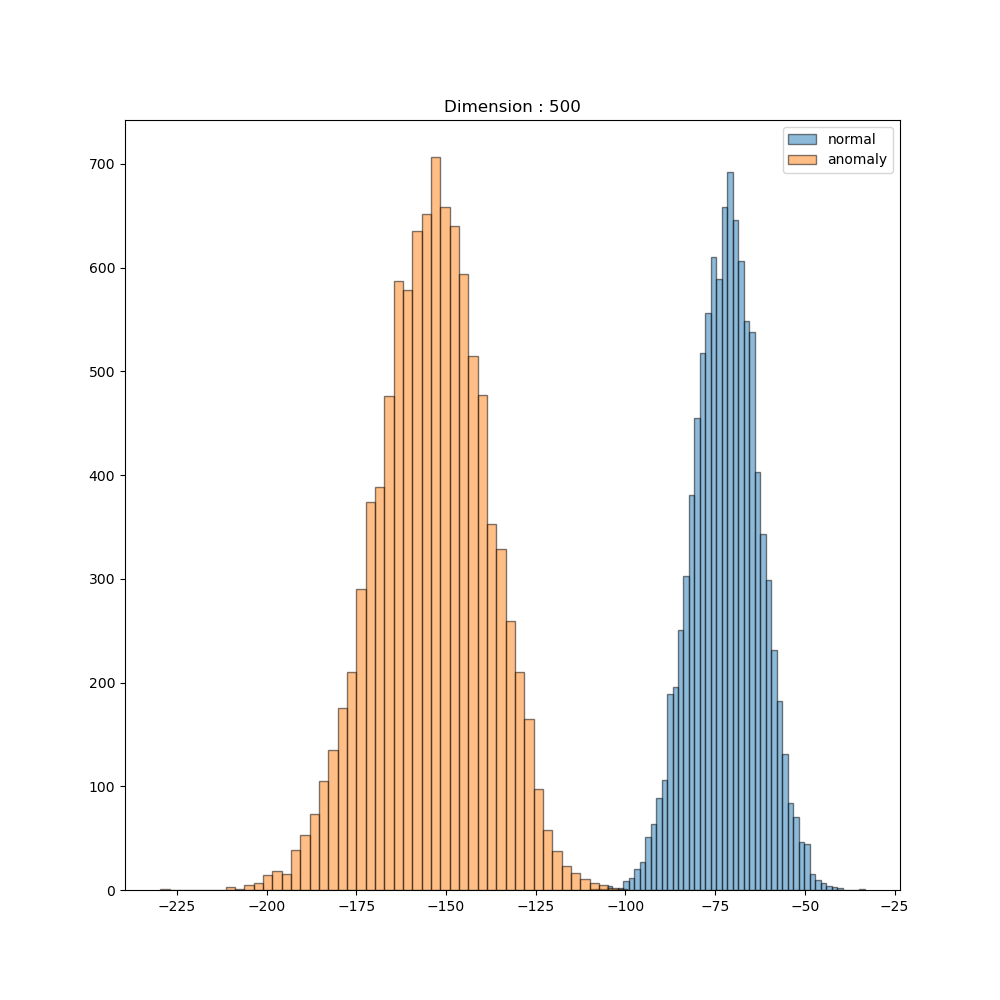}
    \end{subfigure}
    \begin{subfigure}{0.48\textwidth}
        \centering
        \includegraphics[width=8cm]{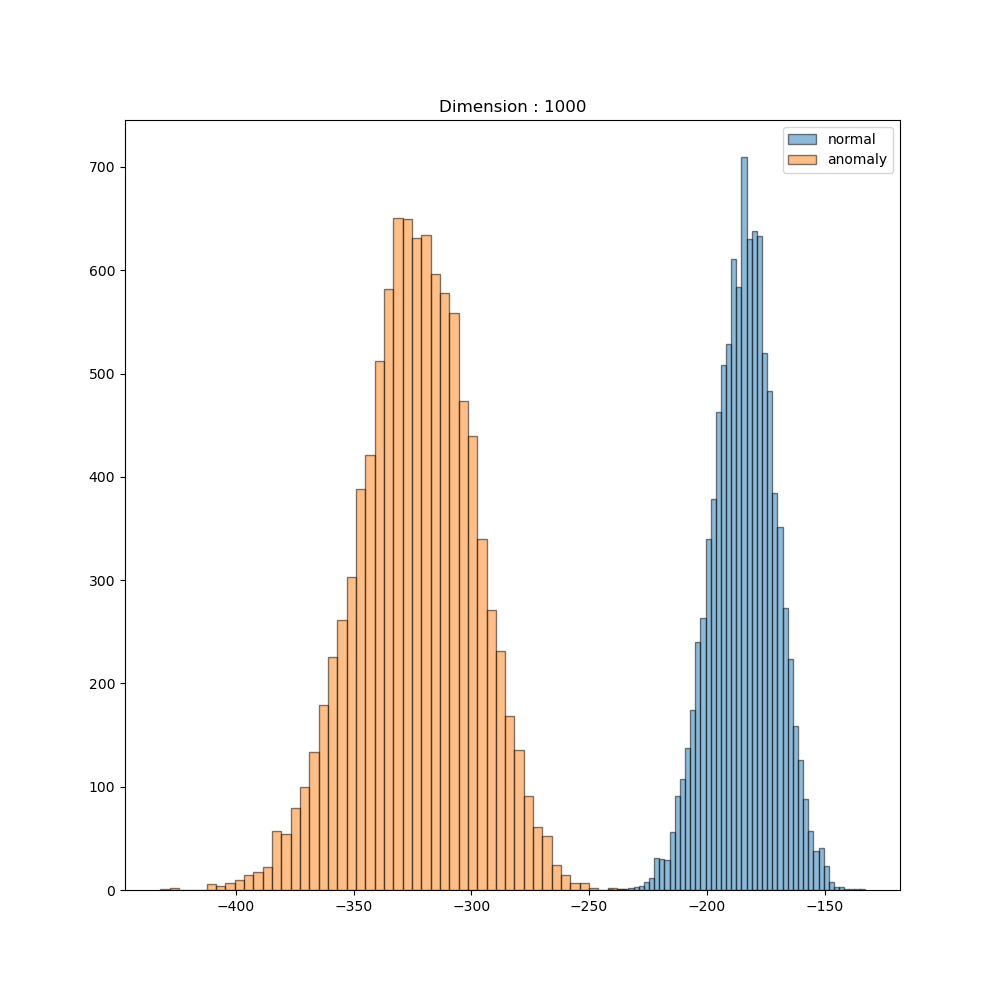}
    \end{subfigure}
    \begin{subfigure}{0.48\textwidth}
        \centering
        \includegraphics[width=8cm]{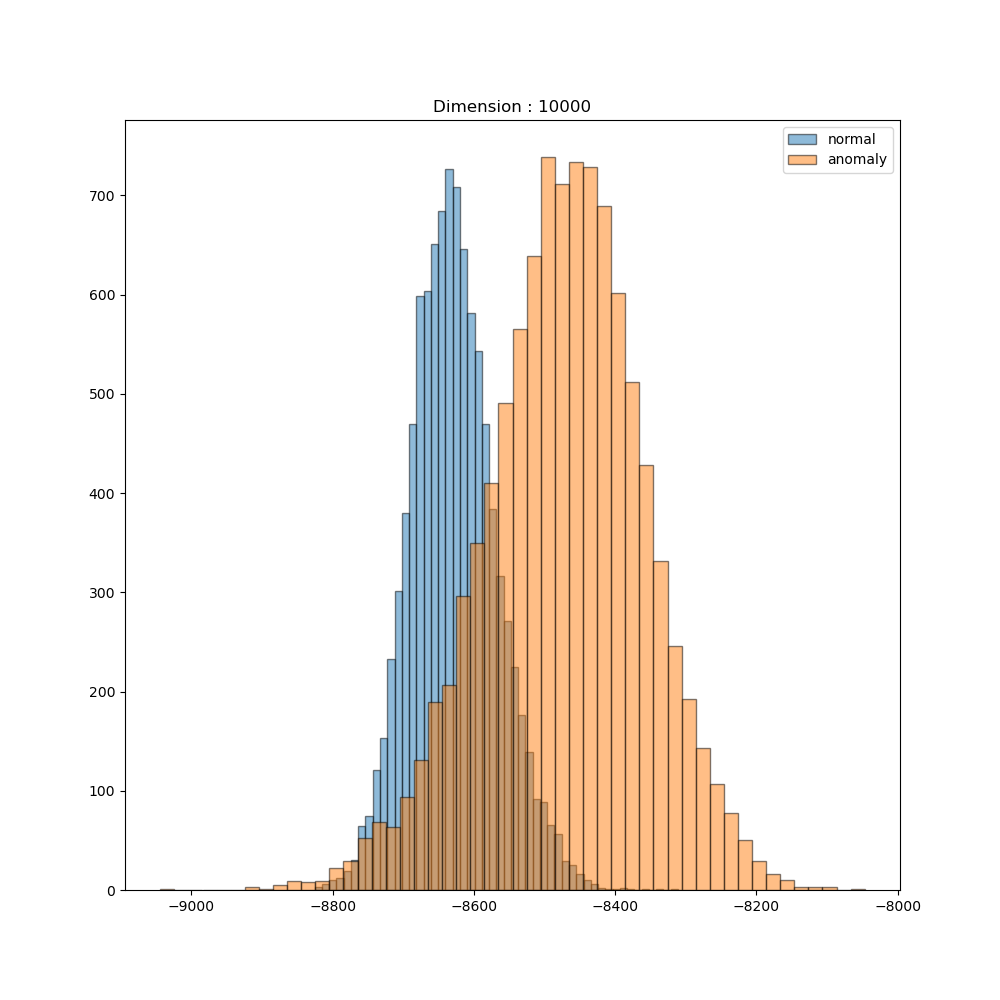}
    \end{subfigure}
    \caption{Histogram of volume values for the 3rd subfigure in Figure~\ref{fig:dimsionality_test_realnvp}}
    \label{fig:realnvp_volumehist_2}
    \end{center}
\end{figure}

\clearpage

\subsection{Experiments of Section \ref{high_dim_perspective} with Real Data}
\label{real_experiment_dimension_appendix}

\begin{table}[!h]
\caption{AUROC scores for likelihood tests as a function of dimensionality (number of PCs) using RealNVP with MLP (image preprocessed by PCA). The left block corresponds to $\mathbb{H}(P) > \mathbb{H}(Q)$, and the right block corresponds to $\mathbb{H}(P) < \mathbb{H}(Q)$.}
\vspace{0.15cm}
\centering
\label{tab:pca_auroc}
\small
\setlength{\tabcolsep}{6pt}
\resizebox{0.98\linewidth}{!}{
\begin{tabular}{lcccc @{\hspace{10pt}} lcccc}
\toprule
In-dist ($P$) / Out-dist ($Q$) & 1024 & 512 & 256 & 30 &
In-dist ($P$) / Out-dist ($Q$) & 1024 & 512 & 256 & 30 \\
\midrule
CIFAR-10 / SVHN   & 0.1262 & 0.0950 & 0.1729 & 0.3547 &
SVHN / CIFAR-10   & 0.9932 & 0.9807 & 0.9460 & 0.8670 \\
CIFAR-100 / SVHN  & 0.0668 & 0.0975 & 0.2015 & 0.3410 &
SVHN / CIFAR-100  & 0.9926 & 0.9783 & 0.9436 & 0.8723 \\
CelebA / SVHN     & 0.1598 & 0.2179 & 0.2988 & 0.4728 &
SVHN / CelebA     & 0.9969 & 0.9857 & 0.9709 & 0.9389 \\
\midrule
CIFAR-100 / CIFAR-10 & 0.4804 & 0.5058 & 0.4982 & 0.5021 &
CIFAR-10 / CIFAR-100 & 0.5415 & 0.5287 & 0.5257 & 0.5630 \\
CelebA / CIFAR-10    & 0.8424 & 0.8511 & 0.8344 & 0.6492 &
CIFAR-10 / CelebA    & 0.6350 & 0.5878 & 0.6747 & 0.7326 \\
\bottomrule
\end{tabular}
}
\end{table}

\begin{table}[!h]
\caption{AUROC scores for likelihood tests as a function of dimensionality (number of PCs) using RealNVP with MLP (image preprocessed by ICA). The left block corresponds to $\mathbb{H}(P) > \mathbb{H}(Q)$, and the right block corresponds to $\mathbb{H}(P) < \mathbb{H}(Q)$.}
\vspace{0.15cm}
\centering
\label{tab:ica_auroc_appendix}
\small
\setlength{\tabcolsep}{6pt}
\resizebox{0.98\linewidth}{!}{
\begin{tabular}{lcccc @{\hspace{10pt}} lcccc}
\toprule
In-dist ($P$) / Out-dist ($Q$) & 1024 & 512 & 256 & 30 &
In-dist ($P$) / Out-dist ($Q$) & 1024 & 512 & 256 & 30 \\
\midrule
CIFAR-10 / SVHN   & 0.2311 & 0.2924 & 0.2984 & 0.3143 &
SVHN / CIFAR-10   & 0.9917 & 0.9843 & 0.9486 & 0.8520 \\
CIFAR-100 / SVHN  & 0.0843 & 0.1160 & 0.2036 & 0.3490 &
SVHN / CIFAR-100  & 0.9933 & 0.9536 & 0.9137 & 0.8622 \\
CelebA / SVHN     & 0.1207 & 0.1782 & 0.2745 & 0.4711 &
SVHN / CelebA     & 0.9976 & 0.9811 & 0.9722 & 0.9481 \\
\midrule
CIFAR-100 / CIFAR-10 & 0.5133 & 0.5169 & 0.5046 & 0.5037 &
CIFAR-10 / CIFAR-100 & 0.4970 & 0.5081 & 0.5146 & 0.5588 \\
CelebA / CIFAR-10    & 0.8511 & 0.8501 & 0.8364 & 0.6515 &
CIFAR-10 / CelebA    & 0.3988 & 0.4041 & 0.4692 & 0.6725 \\
\bottomrule
\end{tabular}
}
\end{table}

First, we will explain the experimental settings in Table \ref{tab:ica_auroc}, \ref{tab:pca_auroc}, \ref{tab:ica_auroc_appendix} for experiment on RealNVP. In both experiments, the model used RealNVP with 16 coupling layers, and each coupling layer consisted of 2 layers, and the hidden dimension of the corresponding layer was set to 256. AdamW was used as the optimizer, and the learning rate was set to 5e-4 and the weight decay was set to 1e-4. The batch size was 256, the scheduler was CosineAnnealingWarmRestarts, and the period was applied as the entire epoch. The epoch was set to 300 for entire cases. The train and test datasets used the train and test datasets specified in advance in torchvision for each dataset. The implementation of the model is referenced from \citet{Stimper2023}.

Although PCA does not satisfy the independence assumption of Theorem \ref{therorem:gap_expectation_likelihood_indep}, as it does not preserve statistical independence across components, it remains a widely adopted dimensionality reduction method. Therefore, we evaluated the impact of dimensionality adjustment using PCA on AUROC, as presented in Tables \ref{tab:pca_auroc}. Additionally, Table \ref{tab:ica_auroc_appendix} shows the additional result of Table \ref{tab:ica_auroc}. In this experiment, 16 Glow layer blocks were used, and a model consisting of 3 multi-scale layers was used. Other experimental settings are the same as Table \ref{tab:ica_auroc}.

In the results, it can be confirmed that the AUROC increases as the dimension decreases for the three case from the top above in Table \ref{tab:pca_auroc} that satisfy the $\mathbb{H}(P) > \mathbb{H}(Q)$ condition. However, the two cases below in Table \ref{tab:pca_auroc} and \ref{tab:ica_auroc_appendix} do not show any tendency, which can be interpreted as not showing any specific tendency because the difference in entropy between the two distributions is not large. The reason why the difference in entropy between the two distributions is not large is based on the fact that the complexity histograms of the two images in Figure 2 of \citet{Serrà2020Input} mostly overlap.

We also adjusted the dimension of the image using the bilinear interpolation resize method provided by torchvision \citep{marcel2010torchvision} for the raw image, and performed a likelihood test after obtaining the likelihood of the image through Glow which is consist of a CNN, the results are included in Table \ref{tab:image_auroc_glow}. Since this experiment uses raw images, independence between pixels is not guaranteed, so the theorem presented in Appendix \ref{proof_theorem} cannot be applied. However, this experiment was conducted to check the performance trend according to dimension in a situation where there is no independence assumption.

\begin{table}[!h]
\caption{AUROC scores for likelihood tests as a function of image size using Glow (images resized by bilinear interpolation). The left block corresponds to $\mathbb{H}(P) > \mathbb{H}(Q)$, and the right block corresponds to $\mathbb{H}(P) < \mathbb{H}(Q)$.}
\vspace{0.15cm}
\centering
\label{tab:image_auroc_glow}
\small
\setlength{\tabcolsep}{7pt}
\resizebox{0.98\linewidth}{!}{
\begin{tabular}{lccc @{\hspace{10pt}} lccc}
\toprule
In-dist ($P$) / Out-dist ($Q$) & $32{\times}32$ & $16{\times}16$ & $8{\times}8$ &
In-dist ($P$) / Out-dist ($Q$) & $32{\times}32$ & $16{\times}16$ & $8{\times}8$ \\
\midrule
CIFAR-10 / SVHN   & 0.0716 & 0.3586 & 0.4512 &
SVHN / CIFAR-10   & 0.9902 & 0.9777 & 0.9195 \\
CIFAR-100 / SVHN  & 0.0846 & 0.4448 & 0.3918 &
SVHN / CIFAR-100  & 0.9900 & 0.9798 & 0.9481 \\
CelebA / SVHN     & 0.1541 & 0.3056 & 0.7037 &
SVHN / CelebA     & 0.9850 & 0.9968 & 0.9982 \\
\midrule
CIFAR-100 / CIFAR-10 & 0.4857 & 0.4933 & 0.5016 &
CIFAR-10 / CIFAR-100 & 0.5259 & 0.5446 & 0.5567 \\
CelebA / CIFAR-10    & 0.7481 & 0.7137 & 0.7557 &
CIFAR-10 / CelebA    & 0.5087 & 0.6181 & 0.6751 \\
\bottomrule
\end{tabular}
}
\end{table}

Surprisingly, we can see that there are cases where the AUROC exceeds 0.5 when reducing the size of the two images in Table \ref{tab:image_auroc_glow} (see CelebA vs SVHN case). In addition, when SVHN is set to in-distribution and CelebA is set to out-of-distribution, the performance tends to increase as the dimension decreases, which is a result that conflicts with the theorems in Appendix \ref{proof_theorem}. We argue that this is because resizing an image via bilinear interpolation strengthens the correlation between image pixels, which significantly reduces the entropy of the image distribution where each texture is complex (i.e., high entropy). Therefore, although it is difficult to confirm the effect of dimension on AUROC through the methodology, it can be seen that not only can the performance be improved by the simple image resize methodology for cases where likelihood inversion occurs, but also it is possible to increase the AUROC to more than 0.5.

Furthermore, we performed dimension reduction using PCA on the InternetAds dataset, which has the largest dimension ($d=1555$) in ADBench, and checked the trend of performance changes. The results are included in Table \ref{tab:tabular_real_dim_test}.

\begin{table}[!t]
\caption{AUROC on the InternetAds dataset for NF-SLT and MCM as a function of the retained dimensionality. The ratio denotes the fraction of principal components retained out of 500 PCs (after PCA reduction to 500 dimensions).}
\vspace{0.15cm}
\label{tab:tabular_real_dim_test}
\centering
\small
\setlength{\tabcolsep}{8pt}
\resizebox{0.8\linewidth}{!}{
\begin{tabular}{lcccccc}
\toprule
Ratio  & 1.00 & 0.95 & 0.90 & 0.85 & 0.80 & 0.75 \\
\midrule
NF-SLT & 0.8180 & 0.8561 & 0.8579 & 0.8377 & 0.8246 & 0.8240 \\
MCM    & 0.7242 & 0.7217 & 0.7189 & 0.7161 & 0.7158 & 0.6978 \\
\bottomrule
\end{tabular}
}
\end{table}

Through the results in Table \ref{tab:tabular_real_dim_test}, we can see that MCM's performance decreases due to information loss as the dimension decreases. However, NF-SLT's performance increases when the dimension is reduced, and even when the ratio is reduced to 0.75, the performance is higher than when the entire PC is used. Surely, it is correct according to the theorem that when the dimension decreases, the performance increases and when the gap of entropy is large, it approaches 0.5. However, since it is possible for entropy to be generated from various distributions that are larger or smaller than the normal sample, we explain in Table \ref{tab:tabular_real_dim_test} that not only the AUROC may exceed 0.5, but the performance trend may not increase steadily. This is the basis for the claim that performance degradation can occur even in tabular datasets when the dimension is large, and that performance can be good and counterintuitive phenomenon can rarely occur because the dimension of the tabular dataset is small. 

In practice, extremely high-dimensional tabular datasets, such as genomic or metagenomic data, often contain tens of thousands of features. However, these datasets present specific challenges for anomaly detection (AD) due to their inherent feature homogeneity and extreme dimensional complexity. Unlike general tabular data, which typically includes heterogeneous features (e.g., categorical, continuous, nominal), genomic data is characterized by a large number of biologically similar variables with high mutual correlation and tightly constrained value distributions. This fundamental difference means that directly applying standard AD methods to raw genomic or metagenomic data is generally impractical without substantial preprocessing. For instance, metagenomic datasets often include tens of thousands of microbial species abundances or gene expression profiles, where each feature can be highly correlated with others due to shared biological pathways or taxonomic hierarchies. This extreme feature homogeneity can lead to severe overfitting and degraded AD performance if handled without prior dimensionality reduction. Therefore, standard bioinformatics workflows commonly incorporate domain-specific dimensionality reduction techniques, such as sparse Canonical Correlation Analysis (CCA) \citep{witten2009penalized}, Gene Set Variation Analysis (GSVA) \citep{hanzelmann2013gsva}, or biologically-informed network modeling \citep{wang2024biologically}, to project these high-dimensional inputs into lower-dimensional, interpretable representations. For example, \citet{hanzelmann2013gsva} demonstrated the use of GSVA to condense over 25,000 genes into biologically meaningful gene sets, significantly reducing the feature space and improving downstream analysis.

Additionally, a recent study by \citet{wang2024biologically} on BioNet highlighted that even deep learning models for high-dimensional biological data often rely on domain-specific preprocessing to improve interpretability and reduce overfitting. This approach integrates biological knowledge to guide feature selection, effectively reducing the dimensional complexity of the input data. Such preprocessing steps are critical for achieving meaningful predictions and robust model performance, as demonstrated in the context of glioblastoma heterogeneity assessment \citep{wang2024biologically}. Given these domain-specific preprocessing requirements, our study focuses on heterogeneous tabular data that include diverse feature types such as categorical, continuous, and nominal variables, rather than directly addressing raw, extremely high-dimensional genomic or metagenomic datasets. This choice aligns with the practical constraints observed in bioinformatics, where specialized preprocessing is essential for effective analysis of such data. Lastly, if there are heterogeneous high-dimensional tabular datasets that do not require such specialized preprocessing, we would be open to exploring them as part of future work, as our current study aims to provide a broadly applicable approach for anomaly detection across diverse real-world datasets.

\section{Proof of Theorems}
\label{proof_theorem}
First, we prove that when the entropy of a normal distribution $P$ is larger than the entropy of an abnormal distribution $Q$ and $P$, $Q$ are $d$-dimensional i.i.d. distributions, the expectation gap of likelihood estimated by the density estimation model $P_\theta$ trained on data sampled from the normal distribution $P$ increases as $d$ increases.

\renewcommand{\thetheorem}{5.\arabic{theorem}}
\setcounter{theorem}{0}

\begin{lemma}
\label{lemma:kldivergence_approx}
Let $P, P_{\theta}, Q$ be continuous probability distributions with its bounded probability distribution function $p, p_{\theta}, q$. If $p_{\theta}(x)$ converges to $p(x)$ pointwisely as $\theta \rightarrow \theta_{0}$, $Q$ has bounded support on the intersection of supports of $P$ and $P_{\theta_{0}}$
,
then $\lim_{\theta \rightarrow \theta_{0}}D_{KL}(Q||P_{\theta}) =  D_{KL}(Q||P)$

\end{lemma}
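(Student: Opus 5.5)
The plan is to split the divergence into a fixed entropy-type term and a cross-entropy term, and reduce the claim to a convergence-of-integrals statement for the latter. Let $K$ denote the bounded support of $Q$. By hypothesis $K$ lies inside both $\{p>0\}$ and $\{p_{\theta_0}>0\}$, so for $\theta$ near $\theta_0$ we can write
\begin{align*}
D_{KL}(Q\|P_\theta) = \int_K q\log q \, dx - \int_K q \log p_\theta \, dx .
\end{align*}
The first term is independent of $\theta$ and finite, since $q$ is bounded and $K$ has finite Lebesgue measure; note $q\log q$ is bounded above by $\sup q \log \sup q$ and below by $-1/e$. It therefore suffices to show $\int_K q\log p_\theta \to \int_K q \log p$ as $\theta\to\theta_0$.

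On $K$, pointwise convergence $p_\theta\to p$ together with $p>0$ gives $\log p_\theta(x)\to\log p(x)$ for every $x\in K$, by continuity of $\log$ on $(0,\infty)$. For the upper side, I will use the boundedness hypothesis, read as a uniform bound $p_\theta\le M$ for $\theta$ in a neighbourhood of $\theta_0$. Then $\log M - \log p_\theta \ge 0$, and Fatou's lemma applied to $q(\log M-\log p_\theta)$ gives
\begin{align*}
\liminf_{\theta\to\theta_0} D_{KL}(Q\|P_\theta) \ge D_{KL}(Q\|P).
\end{align*}

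The reverse inequality, $\limsup \le$, is the main obstacle, because pointwise convergence alone does not stop $p_\theta$ from dipping toward $0$ on sets of vanishing measure. Such dips can push $-\int q\log p_\theta$ up. I would first establish a domination $|\log p_\theta|\le g$ on $K$ with $q g$ integrable, and then invoke dominated convergence.

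The natural route is to show $p_\theta \ge c>0$ on $K$ uniformly for $\theta$ close to $\theta_0$. The support hypothesis gives this for $p$ on the compact set $\overline{K}$, assuming continuity of the densities (the usual situation for flows). To transfer the bound to $p_\theta$, I would upgrade pointwise to uniform convergence on $\overline{K}$ via equicontinuity of $\{p_\theta\}$, which holds for flows with continuous, locally Lipschitz parameter dependence. This yields $p_\theta\ge c/2$ eventually, so $|\log p_\theta|\le \max\{|\log M|,|\log(c/2)|\}$, a constant, and dominated convergence closes the argument.

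If one insists on pure pointwise convergence, a fallback is Egorov's theorem. Since $K$ has finite measure, convergence is uniform off a set $E_\varepsilon$ of measure less than $\varepsilon$. On $E_\varepsilon$ I would then need a uniform integrability bound on $q\log p_\theta$, which is exactly the implicit regularity the lemma's hypotheses are meant to encode. Combining the $\liminf$ and $\limsup$ bounds gives the stated limit.
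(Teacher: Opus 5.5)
Your strategy is the paper's: split off the $\theta$-independent part of $D_{KL}(Q\|P_\theta)$, bound the logarithmic term uniformly in $\theta$ on $\mathrm{supp}\,Q$, and pass to the limit. The paper writes $D_{KL}(Q\|P_\theta)=D_{KL}(Q\|P)+\int q\log(p/p_\theta)\,dx$, asserts a constant $M$ with $e^{-M}p_\theta<p<e^{M}p_\theta$ on $\mathrm{supp}\,Q$ ``since $p,p_\theta$ are bounded'', and then applies Vitali. You correctly identify the crux, which is a $\theta$-uniform lower bound on $p_\theta$ over $\mathrm{supp}\,Q$. The paper's justification of that bound does not hold, because upper bounds on densities say nothing about how small $p_\theta$ can be.

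The genuine gap, in your proposal and in the paper, is that this bound cannot be extracted from the stated hypotheses. Your Egorov fallback therefore cannot be completed, because with only pointwise convergence the lemma is false. Take $p=q=\mathbf{1}_{[0,1]}$ and, for $n\ge2$, $J_n=(\tfrac12+\tfrac1{2n},\tfrac12+\tfrac1n)$. Let $p_n=e^{-n^2}$ on $J_n$, $p_n=c_n$ on $[0,1]\setminus J_n$ with $c_n=\bigl(1-\tfrac{e^{-n^2}}{2n}\bigr)/\bigl(1-\tfrac1{2n}\bigr)\to1$, and $p_n=0$ elsewhere. Each $x$ lies in only finitely many $J_n$, so $p_n\to p$ pointwise everywhere. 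All densities are bounded by $2$, and all supports equal $[0,1]$. Yet
\begin{equation*}
D_{KL}(Q\|P_n)=\tfrac{1}{2n}\cdot n^2+\bigl(1-\tfrac{1}{2n}\bigr)\log\tfrac{1}{c_n}\longrightarrow\infty\neq 0=D_{KL}(Q\|P).
\end{equation*}
These are exactly your narrowing dips. Here $q\log p_n$ is not uniformly integrable, so the regularity you hoped was ``implicit'' in the hypotheses is simply absent. Smoothing gives a counterexample with continuous densities too.

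Your equicontinuity route is therefore the right fix, but it proves a corrected lemma, and the extra hypotheses must be stated explicitly:
\begin{itemize}
\item $p$ is continuous;
\item $K=\mathrm{supp}\,Q$ is a compact subset of the open set $\{p>0\}$;
\item $\{p_\theta\}$ is equicontinuous on $K$ near $\theta_0$, or simply $p_\theta\to p$ uniformly on $K$.
\end{itemize}
The second condition is needed because inclusion in the closed support is not enough: $p(x)=2x$ on $[0,1]$ with $q=\mathbf{1}_{[0,1]}$ gives $\inf_K p=0$. Under these assumptions $\log p_\theta\to\log p$ uniformly on $K$, and the conclusion is immediate.

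Your Fatou step and the reading $p_\theta\le M$ then become unnecessary. The $\liminf$ inequality holds anyway: pointwise convergence of densities gives total-variation convergence by Scheff\'e, and $D_{KL}(Q\|\cdot)$ is lower semicontinuous. Alternatively, you may add the paper's bound $\sup_{\theta\text{ near }\theta_0}\sup_K|\log(p/p_\theta)|<\infty$ as a hypothesis and finish by bounded convergence. Either way, drop the claim that pure pointwise convergence can be handled.
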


\begin{proof}
\begin{align}
\begin{split}
    &D_{KL}(Q||P_{\theta}) = \int q(x) \log\frac{q(x)}{p_{\theta}(x)}dx \\ 
    &= \int q(x) \log\frac{q(x)}{p(x)}dx + \int q(x) \log\frac{p(x)}{p_{\theta}(x)}dx \\
    &=D_{KL}(Q||P) + \int q(x) \log\frac{p(x)}{p_{\theta}(x)}dx \\
\end{split}
\end{align}
Thus, it suffices to show $\int q(x) \log\frac{p(x)}{p_{\theta}(x)}dx$ converges to 0 when $\theta$ converges to $\theta_{0}$.

Since $p, p_{\theta}$ are bounded and $Q$ has support on the intersection of supports of $P$ and $P_{\theta}$, there is constant $M$ such that $e^{-M}p_{\theta}(x) < p(x) < e^{M}p_{\theta}(x)$ for every $x \in Supp(Q)$.
Thus, $\{\log\frac{p(x)}{p_{\theta}(x)}\}$ is uniformly integrable on the support of $Q$ as $|\log\frac{p(x)}{p_{\theta}(x)}| < M$.
Thus, $\lim_{\theta \rightarrow \theta_{0}} \int q(x)\log\frac{p(x)}{p_{\theta}(x)}dx = \int \lim_{\theta \rightarrow \theta_{0}} q(x)\log\frac{p(x)}{p_{\theta}(x)}dx$ by Vitali convergence theorem.
Since $p_{\theta}(x)$ converges to $p(x)$ pointwisely, $\lim_{\theta \rightarrow \theta_{0}} \log\frac{p(x)}{p_{\theta}(x)} = 0$ and $\lim_{\theta \rightarrow \theta_{0}} \int q(x) \log\frac{p(x)}{p_{\theta}(x)}dx = 0$

\end{proof}

\begin{theorem}[Impact of Dimensionality on Likelihood Gap: i.i.d Case]
\label{therorem:gap_expectation_likelihood_iid}
Let $P=\prod^d_{i=1}p(x_i)$ and $Q=\prod^d_{i=1}q(x_i)$ be identical and independent $d$-dimensional continuous probability density models in $\mathbb{R}^d$ with same conditions as \text{Lemma \ref{lemma:kldivergence_approx}}. Let $P_{\theta}$ is well-trained density estimation model approximates $P$ (i.e., $p_{\theta}(x) \rightarrow p(x)$ pointwisely). If $\mathbb{H}(p)-\mathbb{H}(q) > D_{KL}(q||p)$, gap between the expectation of the likelihood for $P$ and $Q$ decreases linearly with respect to $d$.
\end{theorem}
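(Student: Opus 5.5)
The plan is to start from the exact likelihood-gap identity in Equation~\ref{eq:likelihood_gap}, pass to the limit $\theta\to\theta_0$ using Lemma~\ref{lemma:kldivergence_approx}, and then exploit the product structure to make every term scale with $d$.

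First, rearrange Equation~\ref{eq:likelihood_gap} so it does not depend on $D_{KL}(P\|P_\theta)$ through an unknown sign. Write
\begin{align*}
G_d(\theta) &:= \mathbb{E}_{P}[\log p_\theta] - \mathbb{E}_{Q}[\log p_\theta] \\
&= D_{KL}(Q\|P_\theta) - D_{KL}(P\|P_\theta) + \mathbb{H}(Q) - \mathbb{H}(P).
\end{align*}
Since $D_{KL}(P\|P_\theta)\ge 0$, we get the upper bound $G_d(\theta)\le D_{KL}(Q\|P_\theta)+\mathbb{H}(Q)-\mathbb{H}(P)$. For a well-trained model one also expects $D_{KL}(P\|P_\theta)\to 0$. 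Under the boundedness and support conditions of Lemma~\ref{lemma:kldivergence_approx} applied with $Q=P$, this follows by the same Vitali argument. Hence $\lim_{\theta\to\theta_0}G_d(\theta) = D_{KL}(Q\|P)+\mathbb{H}(Q)-\mathbb{H}(P)$.

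The first term uses Lemma~\ref{lemma:kldivergence_approx} directly: $D_{KL}(Q\|P_\theta)\to D_{KL}(Q\|P)$. This gives an exact limiting expression, which also bounds the gap up to an $o(1)$ term as $\theta\to\theta_0$.

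Next, use the i.i.d.\ product structure. Entropy is additive over independent coordinates, so $\mathbb{H}(P)=d\,\mathbb{H}(p)$ and $\mathbb{H}(Q)=d\,\mathbb{H}(q)$. KL divergence between product measures is also additive, so $D_{KL}(Q\|P)=d\,D_{KL}(q\|p)$. Therefore
\begin{align*}
\lim_{\theta\to\theta_0} G_d(\theta) = -d\bigl(\mathbb{H}(p)-\mathbb{H}(q)-D_{KL}(q\|p)\bigr) =: -c\,d,
\end{align*}
and $c>0$ by the hypothesis $\mathbb{H}(p)-\mathbb{H}(q)>D_{KL}(q\|p)$. So the limiting gap is negative and decreases linearly in $d$ with slope $-c$.

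For finite $\theta$, the inequality $G_d(\theta)\le D_{KL}(Q\|P_\theta)+\mathbb{H}(Q)-\mathbb{H}(P)$ converts this into the bound stated in the theorem (the "lower bound of the gap" in the phrasing of Theorem~\ref{therorem:gap_expectation_likelihood_indep}), namely $-c\,d+o(1)$. Since the inequality is read from the $Q$-side, this should be stated carefully.

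The non-i.i.d.\ independent version in Theorem~\ref{therorem:gap_expectation_likelihood_indep} goes the same way with per-coordinate sums. There the linear rate needs a uniform constant $c$ with $\mathbb{H}(p_i)-\mathbb{H}(q_i)-D_{KL}(q_i\|p_i)\ge c>0$ for all $i$.

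The main obstacle is checking that the hypotheses of Lemma~\ref{lemma:kldivergence_approx} transfer to the $d$-fold product. Bounded densities and bounded support of $Q$ inside $\mathrm{supp}(P)\cap\mathrm{supp}(P_{\theta_0})$ are preserved by products. The uniform log-ratio constant $M$, however, is only assumed for $p_\theta$ on $\mathbb{R}^d$. I would simply impose the lemma's conditions on the $d$-dimensional $P,P_\theta,Q$ directly, as the statement's phrase "same conditions as Lemma~\ref{lemma:kldivergence_approx}" permits. The limit is taken at fixed $d$, so no uniformity in $d$ is needed for the linear-in-$d$ conclusion.
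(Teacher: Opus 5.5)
Your argument is correct and is essentially the paper's proof: you use the decomposition of Equation~\ref{eq:likelihood_gap}, discard $D_{KL}(P\|P_\theta)$ by pointwise convergence, replace $D_{KL}(Q\|P_\theta)$ with $D_{KL}(Q\|P)$ via Lemma~\ref{lemma:kldivergence_approx}, and use additivity of entropy and KL over the i.i.d.\ coordinates to reach $-d\bigl(\mathbb{H}(p)-\mathbb{H}(q)-D_{KL}(q\|p)\bigr)$. You are more careful than the paper's chain of $\approx$ steps, but the finite-$\theta$ inequality you derive, $G_d(\theta)\le -c\,d+o(1)$, is an \emph{upper} bound on the gap, not the ``lower bound'' you call it.
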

\begin{proof}
Gap between the expectation of the likelihood for $P$ and $Q$ can be defined as follows as expressed in \citet{caterini2022entropic}.
\begin{align}
    \mathbb{E}_{\textbf{x}\sim P}[\log p_{\theta}(\textbf{x})] - \mathbb{E}_{\textbf{x}\sim Q}[\log p_{\theta}(\textbf{x})]
\end{align}
Let $\mathbb{H}(P)$ be entropy of distribution $P$ and $D_{KL}(P||Q)$ be KL-divergence between $P$ and $Q$. Then, each term can be decomposed as follows:
\begin{align}
\begin{split}
    \mathbb{E}_{\textbf{x}\sim P}[\log P_{\theta}(\textbf{x})] = -D_{KL}(P||P_{\theta})-\mathbb{H}(P)\\
    \mathbb{E}_{\textbf{x}\sim Q}[\log P_{\theta}(\textbf{x})] = -D_{KL}(Q||P_{\theta})-\mathbb{H}(Q)
\end{split}
\end{align}
So, we can derive as follow:
\begin{align}
\begin{split}
        &\mathbb{E}_{\textbf{x}\sim P}[\log P_{\theta}(\textbf{x})] - \mathbb{E}_{\textbf{x}\sim Q}[\log P_{\theta}(\textbf{x})]\\ 
        &= -D_{KL}(P||P_{\theta})-\mathbb{H}(P) + D_{KL}(Q||P_{\theta})+\mathbb{H}(Q) \\
        &= D_{KL}(Q||P_{\theta}) - D_{KL}(P||P_{\theta}) + \mathbb{H}(Q) -\mathbb{H}(P) \\
        &\approx D_{KL}(Q||P_{\theta}) + \mathbb{H}(Q) - \mathbb{H}(P)    \: (\because p_{\theta}(x) \rightarrow p(x) \text{ p.w.})
\end{split}
\end{align}
Then, since P and Q are distributed i.i.d.,
\begin{align}
\begin{split}
        &D_{KL}(Q||P_{\theta}) + \mathbb{H}(Q) - \mathbb{H}(P)\\
        &=D_{KL}(Q||P_{\theta}) + \mathbb{E}_{\textbf{x}\sim Q}[-\log Q(\textbf{x})] - \mathbb{E}_{\textbf{x}\sim P}[-\log P(\textbf{x})] \\
        &=D_{KL}(Q||P_{\theta}) + \mathbb{E}_{\textbf{x}\sim Q}[-\log Q(x_1,x_2,..x_n)] - \mathbb{E}_{\textbf{x}\sim P}[-\log P(x_1, x_2,...x_n)] \\
        &=D_{KL}(Q||P_{\theta}) + \mathbb{E}_{\textbf{x}\sim Q}[-\log\prod^d_{i=1} q(x_i)] - \mathbb{E}_{\textbf{x}\sim P}[-\log\prod^d_{i=1}  p(x_i)] \\
        &=D_{KL}(Q||P_{\theta}) + \mathbb{E}_{\textbf{x}\sim Q}[-\sum^d_{i=1}\log q(x_i)] - \mathbb{E}_{\textbf{x}\sim P}[-\sum^d_{i=1}\log p(x_i)] \\
        &= D_{KL}(Q||P_{\theta}) + d(\mathbb{E}_{x\sim q}[-\log q(x)] - \mathbb{E}_{x\sim p}[-\log p(x)]) \\
        &\approx D_{KL}(Q||P) + d(\mathbb{E}_{x\sim q}[-\log q(x)] - \mathbb{E}_{x\sim p}[-\log p(x)])  \: (\because \text{Lemma \ref{lemma:kldivergence_approx}}) \\
        &= d(D_{KL}(q||p) + \mathbb{E}_{x\sim q}[-\log q(x)] - \mathbb{E}_{x\sim p}[-\log p(x)]) \\
        &= d(D_{KL}(q||p) - (\mathbb{E}_{x\sim p}[-\log p(x)] + \mathbb{E}_{x\sim q}[-\log q(x)])) \\
\end{split}
\end{align}

\end{proof}
Then, if $\mathbb{H}(p)-\mathbb{H}(q) > D_{KL}(q||p)$, gap between the expectation of the likelihood for $P$ and $Q$ decreases linearly with respect to $d$. Through Theorem \ref{therorem:gap_expectation_likelihood_iid}, even if the generative model approximates the actual model well, the likelihood inversion phenomenon can occur if the difference in the entropy of $q$ and $p$ is larger than the KL-divergence of $q$ and $p$, and this issue escalates with an increase in dimensionality. 
\begin{proposition}[$\mathbb{H}(P)-\mathbb{H}(Q) > D_{KL}(Q||P)$ for Gaussian Case]
\label{prop:gaussian_gap}
Let P and Q be $\mathcal{N }(\mu_P,\sigma_P^{2}I), \mathcal{N}(\mu_Q,\sigma_Q^{2}I)$ such that $\sigma_P,\sigma_Q$ are non-zero constant. Then, $\mathbb{H}(P)-\mathbb{H}(Q) > D_{KL}(Q||P)$ holds if $||\mu_P-\mu_Q||^2<d(\sigma_P^2-\sigma^2_Q)$.
\end{proposition}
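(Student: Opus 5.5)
The plan is to compute both sides of the inequality in closed form, using the standard formulas for the entropy of a Gaussian and the KL divergence between two Gaussians with covariances proportional to $I_d$, and then simplify. No earlier result of the paper is needed beyond the definitions.

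First, recall that for $\mathcal{N}(\mu,\sigma^2 I_d)$ the differential entropy is $\frac{d}{2}\log(2\pi e\sigma^2)$. Hence
\begin{align}
\mathbb{H}(P)-\mathbb{H}(Q) = \frac{d}{2}\log\frac{\sigma_P^2}{\sigma_Q^2}.
\end{align}
Second, use the Gaussian KL formula
\begin{align}
D_{KL}(Q\|P)=\frac12\left[\operatorname{tr}(\Sigma_P^{-1}\Sigma_Q)+(\mu_P-\mu_Q)^\top\Sigma_P^{-1}(\mu_P-\mu_Q)-d+\log\frac{\det\Sigma_P}{\det\Sigma_Q}\right].
\end{align}
Specializing to $\Sigma_P=\sigma_P^2 I$ and $\Sigma_Q=\sigma_Q^2 I$ gives
\begin{align}
D_{KL}(Q\|P)=\frac12\left[d\frac{\sigma_Q^2}{\sigma_P^2}+\frac{\|\mu_P-\mu_Q\|^2}{\sigma_P^2}-d\right]+\frac{d}{2}\log\frac{\sigma_P^2}{\sigma_Q^2}.
\end{align}

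The key observation is that the log-determinant term in the KL divergence coincides exactly with the entropy difference, so the two cancel when comparing. The inequality $\mathbb{H}(P)-\mathbb{H}(Q)>D_{KL}(Q\|P)$ therefore reduces to
\begin{align}
0> d\frac{\sigma_Q^2}{\sigma_P^2}+\frac{\|\mu_P-\mu_Q\|^2}{\sigma_P^2}-d .
\end{align}
Multiplying by $\sigma_P^2>0$ (legitimate since $\sigma_P\neq 0$), this is equivalent to $\|\mu_P-\mu_Q\|^2<d(\sigma_P^2-\sigma_Q^2)$. Because every step is an equivalence, the hypothesis of the proposition yields the claim, and in fact the condition is also necessary.

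The only point requiring care is bookkeeping the direction of the KL divergence, which is $Q$ relative to $P$, so that $\Sigma_P^{-1}$ appears in both the trace and the quadratic term. I expect no substantive obstacle beyond this.
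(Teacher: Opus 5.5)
Your proposal is correct and follows essentially the same route as the paper: closed-form Gaussian entropies and KL divergence, cancellation of the log-determinant term against the entropy gap, and reduction to $\frac{1}{2}\left[\|\mu_P-\mu_Q\|^2/\sigma_P^2 + d(\sigma_Q^2/\sigma_P^2-1)\right]<0$. Your observation that every step is reversible, so the condition is also necessary, is a valid small strengthening; the paper states only sufficiency.
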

\begin{proof}
Entropy of $P$ and $Q$ can be expressed as follows:
\begin{align}
    \begin{split}
        \mathbb{H}(P) = \frac{d}{2}\log(2\pi e\sigma_P^{2})\\
        \mathbb{H}(Q) = \frac{d}{2}\log(2\pi e\sigma_Q^{2})
    \end{split}
\end{align}

Therefore, we can derive entropy gap of $P$ and $Q$ below:
\begin{align}
    \begin{split}
        \mathbb{H}(P) -\mathbb{H}(Q) = \frac{d}{2}\log\left(\frac{\sigma_P^{2}}{\sigma_Q^{2}}\right)
    \end{split}
\end{align}

Also, since the KL divergence of $Q$ and $P$ is known to be in closed form in the case of Gaussian, it can be expressed as follows:
\begin{align}
    \begin{split}
        &D_{KL}(Q||P) = \frac{1}{2}\left[\sigma_P^{-2}(\mu_P-\mu_Q)^T(\mu_P-\mu_Q) + \log\left(\frac{\det\sigma_P^{2}I}{\det\sigma_Q^{2}I}\right)+\text{tr}(\sigma_P^{-2}\sigma_Q^{2}I)-d\right] \\
        &=\frac{1}{2}\left[\frac{||\mu_P-\mu_Q||^2}{\sigma_P^{2}} + 2d\log\left(\frac{\sigma_P}{\sigma_Q}\right)+\frac{d\sigma_Q^{2}}{\sigma_P^{2}}-d\right]
    \end{split}
\end{align}

Then, we can derive:
\begin{align}
    \begin{split}
        &D_{KL}(Q||P) - \mathbb{H}(P) + \mathbb{H}(Q)\\
        &=\frac{1}{2}\left[\frac{||\mu_P-\mu_Q||^2}{\sigma_P^{2}} + 2d\log\left(\frac{\sigma_P}{\sigma_Q}\right)+\frac{d\sigma_Q^2}{\sigma_P^2}-d\right] -\frac{d}{2}\log\left(\frac{\sigma_P^2}{\sigma_Q^2}\right) \\
        &=\frac{1}{2}\left[\frac{||\mu_P-\mu_Q||^2}{\sigma_P^{2}} + 2d\log\left(\frac{\sigma_P}{\sigma_Q}\right)+\frac{d\sigma_Q^2}{\sigma_P^2}-d -2d\log\left(\frac{\sigma_P}{\sigma_Q}\right)\right]\\
        &=\frac{1}{2}\left[\frac{||\mu_P-\mu_Q||^2}{\sigma_P^{2}} +\frac{d\sigma_Q^2}{\sigma_P^2}-d \right]\\
        &=\frac{1}{2}\left[\frac{||\mu_P-\mu_Q||^2}{\sigma_P^{2}} +d\left(\frac{\sigma^2_Q}{\sigma^2_P}-1\right)\right]
    \end{split}
\end{align}
Therefore, if $||\mu_P-\mu_Q||^2<d(\sigma_P^2-\sigma^2_Q)$, $D_{KL}(Q||P) - \mathbb{H}(P) + \mathbb{H}(Q) < 0$
\end{proof}
Proposition \ref{prop:gaussian_gap} refers that if the variance of $P$ is sufficiently larger than the variance of $Q$ so that the difference between the variances of $P$ and $Q$ is greater than the norm of the mean difference, the inequality holds and likelihood inversion can happen. This is consistent with experimental results of Figure 5(a) of \citet{nalisnick2018do} because for the CIFAR-10 vs SVHN case where likelihood inversion occurs, we can confirm that SVHN is concentrated on the pixel mean of CIFAR-10. Since it is impossible to accurately estimate entropy and KL divergence for the actual data distribution, it is not possible to confirm whether the inequality really holds, but it can be indirectly confirmed that the inequality is satisfied. Additionally, we extend Theorem \ref{therorem:gap_expectation_likelihood_iid} by relaxing the identically assumption.

\begin{theorem}[Impact of Dimensionality on Likelihood Gap]
\label{therorem:gap_expectation_likelihood_indep_appendix}
Let $P=\prod^d_{i=1}p_i(x_i)$ and $Q=\prod^d_{i=1}q_i(x_i)$ be independent $d$-dimensional continuous probability density models in $\mathbb{R}^d$ with same conditions as \text{Lemma \ref{lemma:kldivergence_approx}}. Let $P_{\theta}$ is well-trained density estimation model approximates $P$ (i.e., $p_{\theta}(x) \rightarrow p(x) $ pointwisely as $\theta \rightarrow \theta_0$). If $\mathbb{H}(P)-\mathbb{H}(Q) > D_{KL}(Q||P)$, the lower bound of gap between the expectation of the likelihood for $P$ and $Q$ decreases linearly with respect to $d$.
\end{theorem}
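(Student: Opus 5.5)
We follow the proof of Theorem~\ref{therorem:gap_expectation_likelihood_iid}, replacing the identical-marginal factorization with per-coordinate sums. The non-identical coordinates no longer give an exact factor of $d$, so we work with the worst coordinate and obtain a lower bound that is linear in $d$.

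\textbf{Step 1: exact decomposition.} We start from the decomposition of \citet{caterini2022entropic}, as in Equation~\ref{eq:likelihood_gap}:
\begin{align*}
\mathbb{E}_{P}[\log p_\theta]-\mathbb{E}_{Q}[\log p_\theta] = D_{KL}(Q\|P_\theta)-D_{KL}(P\|P_\theta)+\mathbb{H}(Q)-\mathbb{H}(P).
\end{align*}
We then let $\theta\to\theta_0$. Lemma~\ref{lemma:kldivergence_approx} gives $D_{KL}(Q\|P_\theta)\to D_{KL}(Q\|P)$. Applying the same lemma with $Q$ replaced by $P$, on the bounded-support region, gives $D_{KL}(P\|P_\theta)\to 0$. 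In the limit the gap equals $G_d := D_{KL}(Q\|P)+\mathbb{H}(Q)-\mathbb{H}(P)$.

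\textbf{Step 2: coordinatewise additivity.} Because both $P$ and $Q$ are products, entropy and KL divergence split over coordinates:
\begin{align*}
\mathbb{H}(P)=\sum_{i=1}^d \mathbb{H}(p_i),\qquad \mathbb{H}(Q)=\sum_{i=1}^d\mathbb{H}(q_i),\qquad D_{KL}(Q\|P)=\sum_{i=1}^d D_{KL}(q_i\|p_i).
\end{align*}
Hence $G_d=\sum_{i=1}^d g_i$ with $g_i := D_{KL}(q_i\|p_i)+\mathbb{H}(q_i)-\mathbb{H}(p_i)$. Writing $g_i$ as a cross-entropy difference, $g_i = \mathbb{E}_{q_i}[-\log p_i]-\mathbb{E}_{p_i}[-\log p_i]$, shows it is finite under the boundedness assumptions.

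\textbf{Step 3: linear-in-$d$ bound.} Define $g_{\min}:=\min_i g_i$ and $g_{\max}:=\max_i g_i$. Then
\begin{align*}
d\,g_{\min}\le G_d\le d\,g_{\max}.
\end{align*}
The hypothesis $\mathbb{H}(P)-\mathbb{H}(Q)>D_{KL}(Q\|P)$ says exactly that $G_d<0$, so some coordinate has $g_i<0$ and therefore $g_{\min}<0$. The lower bound $d\,g_{\min}$ is thus negative and decreases linearly in $d$, which is the statement. When the coordinates are identical, this bound recovers the exact conclusion of Theorem~\ref{therorem:gap_expectation_likelihood_iid}.

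\textbf{Main obstacle.} The difficulty is interpreting ``decreases linearly in $d$'' when $d$ varies, since the coordinate family must be fixed as $d$ grows. We would index the marginals by a sequence $(p_i,q_i)_{i\ge1}$ with $\inf_i g_i$ finite. Then $\min_{i\le d} g_i$ is nonincreasing in $d$, and once some $g_{i_0}<0$ it satisfies $\min_{i\le d} g_i\le g_{i_0}<0$ for all $d\ge i_0$. This makes $d\min_{i\le d}g_i\le d\,g_{i_0}$, a bound that decreases at least linearly. A secondary point is justifying $D_{KL}(P\|P_\theta)\to0$ via Lemma~\ref{lemma:kldivergence_approx}. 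If that limit is unavailable, the term $-D_{KL}(P\|P_\theta)\le 0$ only lowers the gap further, so the conclusion about the lower bound still holds in the direction we need.
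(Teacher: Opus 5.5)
Your argument is correct at the paper's level of rigor. It follows the same skeleton as the paper: the Caterini--Loaiza-Ganem decomposition, passing to $\theta\to\theta_0$, splitting over coordinates, and bounding by $d$ times an extreme coordinate. The difference is which per-coordinate quantity you bound.

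\begin{itemize}
\item The paper first discards $D_{KL}(Q\|P)\ge 0$ and uses only additivity of entropy. This gives the lower bound $d\,\min_i\bigl(\mathbb{H}(q_i)-\mathbb{H}(p_i)\bigr)$, whose negativity needs only $\mathbb{H}(Q)<\mathbb{H}(P)$.
\item You keep the KL term, use its additivity over product measures, and minimize the cross-entropy gap $g_i=\mathbb{E}_{q_i}[-\log p_i]-\mathbb{E}_{p_i}[-\log p_i]$.
\end{itemize}

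Since $g_i\ge \mathbb{H}(q_i)-\mathbb{H}(p_i)$ for every $i$, your lower bound is tighter. Your sandwich $d\,g_{\min}\le G_d\le d\,g_{\max}$ also yields Corollary~\ref{cor:special_case_gap_indep} directly, as the case $g_{\max}<0$. The paper's version, in exchange, states the bound purely through marginal entropies, which suits its entropy narrative. You also track the $d$-dependence of the minimizing coordinate by fixing a sequence of marginals; the paper lets its index $k$ depend on $d$ without comment.

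One remark in your last paragraph is backwards. Suppose $D_{KL}(P\|P_\theta)$ does not tend to $0$. Then the extra term $-D_{KL}(P\|P_\theta)\le 0$ pushes the gap below $G_d$, so $d\,g_{\min}\le G_d$ no longer implies that $d\,g_{\min}$ is below the gap. A nonpositive perturbation preserves upper bounds (the gap stays $\le G_d\le d\,g_{\max}$ and stays negative), not lower bounds.

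Your lower bound therefore genuinely needs $D_{KL}(P\|P_\theta)\to 0$, or a quantitative bound $D_{KL}(P\|P_\theta)\le\varepsilon$, which gives $d\,g_{\min}-\varepsilon$. Obtaining this from Lemma~\ref{lemma:kldivergence_approx} also requires $P$ itself to have bounded support inside the support of $P_{\theta_0}$. The stated hypotheses impose this only on $Q$. The paper simply asserts the limit in its step marked $\approx$, so your main line is on the same footing, but you should drop the fallback claim.
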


\begin{proof}
Similarily proof of Theorem \ref{therorem:gap_expectation_likelihood_iid}, we can derive as follow:
\begin{align}
\begin{split}
    &\mathbb{E}_{\textbf{x}\sim P}[\log P_{\theta}(\textbf{x})] - \mathbb{E}_{\textbf{x}\sim Q}[\log P_{\theta}(\textbf{x})]\\ 
    &= -D_{KL}(P||P_{\theta})-\mathbb{H}(P) + D_{KL}(Q||P_{\theta})+\mathbb{H}(Q) \\
    &= D_{KL}(Q||P_{\theta}) - D_{KL}(P||P_{\theta}) + \mathbb{H}(Q) -\mathbb{H}(P) \\
    &\approx D_{KL}(Q||P_{\theta}) + \mathbb{H}(Q) - \mathbb{H}(P) \: (\because p_{\theta}(x) \rightarrow p(x) \text{ p.w.}) \\
    &=D_{KL}(Q||P_{\theta}) + \mathbb{E}_{\textbf{x}\sim Q}[-\log Q(\textbf{x})] - \mathbb{E}_{\textbf{x}\sim P}[-\log P(\textbf{x})] \\
    &=D_{KL}(Q||P_{\theta}) + \mathbb{E}_{\textbf{x}\sim Q}[-\log Q(x_1,x_2,..x_n)] - \mathbb{E}_{\textbf{x}\sim P}[-\log P(x_1, x_2,...x_n)] \\
    &=D_{KL}(Q||P_{\theta}) + \mathbb{E}_{\textbf{x}\sim Q}[-\log\prod^d_{i=1} q(x_i)] - \mathbb{E}_{\textbf{x}\sim P}[-\log\prod^d_{i=1}  p(x_i)] \\
    &=D_{KL}(Q||P_{\theta}) + \mathbb{E}_{\textbf{x}\sim Q}[-\sum^d_{i=1}\log q(x_i)] - \mathbb{E}_{\textbf{x}\sim P}[-\sum^d_{i=1}\log p(x_i)] \\
    &=D_{KL}(Q||P_{\theta}) + \mathbb{E}_{\textbf{x}\sim Q}[-\sum^d_{i=1}\log q(x_i)] - \mathbb{E}_{\textbf{x}\sim P}[-\sum^d_{i=1}\log p(x_i)] \\
    &=D_{KL}(Q||P_{\theta}) + \sum^d_{i=1}(\mathbb{E}_{x_i\sim q_i}[-\log q_i(x_i)] - \mathbb{E}_{x_i\sim p_i}[-\log p_i(x_i)]) \\
    &\approx D_{KL}(Q||P) + \sum^d_{i=1}(\mathbb{E}_{x_i\sim q_i}[-\log q_i(x_i)] - \mathbb{E}_{x_i\sim p_i}[-\log p_i(x_i)]) \: (\because \text{Lemma \ref{lemma:kldivergence_approx}}) \\
\end{split}
\end{align}
Then, we define $k$ as follows:
\begin{align}
    k=\argmin_{i\in[d]} \mathbb{E}_{x_i\sim q_i}[-\log q_i(x_i)] - \mathbb{E}_{x_i\sim p_i}[-\log p_i(x_i)]
\end{align}
Because of $\mathbb{H}(P)-\mathbb{H}(Q) > D_{KL}(Q||P) \ge 0$,  $\mathbb{H}(Q)-\mathbb{H}(P)$ has negative value. Therefore, there exists $\exists j \in [d]$ such that $\mathbb{E}_{x_j\sim q_j}[-\log q_j(x_j)] - \mathbb{E}_{x_j\sim p_j}[-\log p_j(x_j)]$ is negative. So, 
\begin{align}
\label{eq:negativity}
\begin{split}
    0&>\mathbb{E}_{x_j\sim q_j}[-\log q_j(x_j)] - \mathbb{E}_{x_j\sim p_j}[-\log p_j(x_j)] \\
    &\ge \mathbb{E}_{x_k\sim q_k}[-\log q_k(x_k)] - \mathbb{E}_{x_k\sim p_k}[-\log p_k(x_k)] 
\end{split}
\end{align}
In conclusion, Equation \ref{eq:negativity} gives us the following lower bound:
\begin{align}
\begin{split}
    &0 > D_{KL}(Q||P) + \sum^d_{i=1}(\mathbb{E}_{x_i\sim q_i}[-\log q_i(x_i)] - \mathbb{E}_{x_i\sim p_i}[-\log p_i(x_i)]) \\
    &\ge \sum^d_{i=1}(\mathbb{E}_{x_i\sim q_i}[-\log q_i(x_i)] - \mathbb{E}_{x_i\sim p_i}[-\log p_i(x_i)]) \\
    &\ge d(\min_{i\in[d]}\mathbb{E}_{x_i\sim q_i}[-\log q_i(x_i)] - \mathbb{E}_{x_i\sim p_i}[-\log p_i(x_i)]) \\
    &= d(\mathbb{E}_{x_k\sim q_k}[-\log q_k(x_k)] - \mathbb{E}_{x_k\sim p_k}[-\log p_k(x_k)]) \\
\end{split}
\end{align}
\end{proof}
Through Theorem \ref{therorem:gap_expectation_likelihood_indep_appendix}, if $\mathbb{H}(P)-\mathbb{H}(Q) > D_{KL}(Q||P)$ is satisfied, we can see that the lower bound of the gap of the likelihood estimated by $p_\theta$ for $\textbf{x}$ that follows $P$ and $Q$ decreases linearly with the dimension. Furthermore, we show in Corollary \ref{cor:upper_bound_auroc} that the upper bound of the likelihood gap can be made smaller in dimension proportional to Theorem \ref{therorem:gap_expectation_likelihood_indep_appendix} by making additional assumptions.

\begin{corollary}[Special Case of Theorem \ref{therorem:gap_expectation_likelihood_indep}: Guaranteed Negative Difference]
\label{cor:special_case_gap_indep}
Let $P=\prod^d_{i=1}p_i(x_i)$ and $Q=\prod^d_{i=1}q_i(x_i)$ be independent $d$-dimensional continuous probability density models in $\mathbb{R}^d$  with same conditions as \text{Lemma \ref{lemma:kldivergence_approx}}. Let $P_{\theta}$ is well-trained density estimation model approximates $P$  (i.e., $p_{\theta}(x) \rightarrow p(x) $ pointwisely as $\theta \rightarrow \theta_0$). If $\forall i\in[d] \:\mathbb{H}(p_i)-\mathbb{H}(q_i) > D_{KL}(q_i||p_i)$, the lower and upper bound of gap between the expectation of the likelihood for $P$ and $Q$ decreases linearly with respect to $d$.
\end{corollary}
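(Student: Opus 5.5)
I would start from the decomposition already used in the proof of Theorem~\ref{therorem:gap_expectation_likelihood_indep_appendix}. In the limit $\theta\to\theta_0$, Lemma~\ref{lemma:kldivergence_approx} lets us replace $D_{KL}(Q\|P_\theta)$ by $D_{KL}(Q\|P)$ and drop $D_{KL}(P\|P_\theta)$. The likelihood gap is then approximately
\[
G_d := D_{KL}(Q\|P) + \mathbb{H}(Q) - \mathbb{H}(P).
\]
Because $P$ and $Q$ are product measures, both the entropy and the KL divergence tensorize: $\mathbb{H}(P)=\sum_i \mathbb{H}(p_i)$, $\mathbb{H}(Q)=\sum_i\mathbb{H}(q_i)$ and $D_{KL}(Q\|P)=\sum_i D_{KL}(q_i\|p_i)$. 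The KL identity follows by writing $\log\frac{q}{p}=\sum_i \log\frac{q_i}{p_i}$ and integrating out the other coordinates. This gives the coordinatewise form
\[
G_d=\sum_{i=1}^d a_i,\qquad a_i := D_{KL}(q_i\|p_i)+\mathbb{H}(q_i)-\mathbb{H}(p_i).
\]

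The per-coordinate hypothesis $\mathbb{H}(p_i)-\mathbb{H}(q_i)>D_{KL}(q_i\|p_i)$ says exactly that $a_i<0$ for every $i$. The two bounds then come from comparing the sum with its extreme terms: $d\min_i a_i \le G_d \le d\max_i a_i<0$. The lower bound reproduces Theorem~\ref{therorem:gap_expectation_likelihood_indep_appendix}, now without discarding the KL term, since it is absorbed into each $a_i$. The upper bound is new. It is strictly negative and scales linearly in $d$ with slope $\max_i a_i<0$, so the expected gap is trapped between two lines of negative slope. Note also that the per-coordinate hypothesis implies the global hypothesis $\mathbb{H}(P)-\mathbb{H}(Q)>D_{KL}(Q\|P)$ by summation, so Theorem~\ref{therorem:gap_expectation_likelihood_indep} applies as well.

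The main obstacle is making ``decreases linearly in $d$'' meaningful, because the marginals $p_i,q_i$ may change as $d$ grows. A fixed family of constants does not control $\max_i a_i$ and $\min_i a_i$ uniformly. I would handle this by assuming, or stating explicitly, that there are constants $-\infty< c_1\le c_2<0$ independent of $d$ with $c_1\le a_i\le c_2$ for all $i$; the i.i.d.\ case of Theorem~\ref{therorem:gap_expectation_likelihood_iid} satisfies this trivially. Under that assumption the gap satisfies $c_1 d\le G_d\le c_2 d$.

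The second point to treat carefully is the limit $\theta\to\theta_0$. The approximation is exact only in the limit, so the bounds hold for the limiting gap. For finite $\theta$ they hold up to an $o(1)$ error in $\theta$, namely $D_{KL}(Q\|P_\theta)-D_{KL}(Q\|P)-D_{KL}(P\|P_\theta)$. Controlling the $D_{KL}(P\|P_\theta)$ part needs the same bounded-ratio argument as in Lemma~\ref{lemma:kldivergence_approx}, applied with $P$ in place of $Q$.
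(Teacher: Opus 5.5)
Your proposal is correct and follows essentially the same route as the paper: tensorize $D_{KL}(Q\|P)$ and the entropies into per-coordinate terms $a_i=D_{KL}(q_i\|p_i)+\mathbb{H}(q_i)-\mathbb{H}(p_i)<0$, then bound the sum above by $d\max_i a_i<0$, with the lower bound inherited from Theorem~\ref{therorem:gap_expectation_likelihood_indep_appendix}. Your version is slightly sharper and more careful than the paper's: your lower bound $d\min_i a_i$ keeps the KL term that the paper discards, you maximize the correct quantity $a_i$ where the paper's $\argmax$ omits the KL term, and your uniform constants $c_1\le a_i\le c_2<0$ make ``linear in $d$'' precise, which the paper leaves implicit.
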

\begin{proof}

We define $k$ as follows:
\begin{align}
    k=\argmax_{i\in[d]} \mathbb{E}_{x_i\sim q_i}[-\log q_i(x_i)] - \mathbb{E}_{x_i\sim p_i}[-\log p_i(x_i)]
\end{align}
Because of $\mathbb{H}(p_i)-\mathbb{H}(q_i) > D_{KL}(q_i||p_i) \ge 0$ for $\forall i\in [d]$, $\mathbb{H}(q_i)-\mathbb{H}(p_i)+D_{KL}(q_i||p_i)$ has negative value. So, 

\begin{align}
\label{eq:negativity_argmax}
\begin{split}
    0&>D_{KL}(q_k||p_k)+\mathbb{E}_{x_k\sim q_k}[-\log q_k(x_k)] - \mathbb{E}_{x_k\sim p_k}[-\log p_k(x_k)] \\
    &\ge D_{KL}(q_j||p_j)+\mathbb{E}_{x_j\sim q_j}[-\log q_j(x_j)] - \mathbb{E}_{x_j\sim p_j}[-\log p_j(x_j)] 
\end{split}
\end{align}
Then, we can derive:
\begin{align}
\begin{split}
    &\mathbb{E}_{\textbf{x}\sim P}[\log P_{\theta}(\textbf{x})] - \mathbb{E}_{\textbf{x}\sim Q}[\log P_{\theta}(\textbf{x})]\\ 
    &\approx D_{KL}(Q||P) + \sum^d_{i=1}(\mathbb{E}_{x_i\sim q_i}[-\log q_i(x_i)] - \mathbb{E}_{x_i\sim p_i}[-\log p_i(x_i)]) \\
    &= \sum^d_{i=1}(D_{KL}(q_i||p_i) +\mathbb{E}_{x_i\sim q_i}[-\log q_i(x_i)] - \mathbb{E}_{x_i\sim p_i}[-\log p_i(x_i)]) \\
    &\le d(\max_{i\in[d]} D_{KL}(q_i||p_i) + \mathbb{E}_{x_i\sim q_i}[-\log q_i(x_i)] - \mathbb{E}_{x_i\sim p_i}[-\log p_i(x_i)]) \\
    &= d(D_{KL}(q_k||p_k)+\mathbb{E}_{x_k\sim q_k}[-\log q_k(x_k)] - \mathbb{E}_{x_k\sim p_k}[-\log p_k(x_k)]) \\
    &< 0
\end{split}
\end{align}
\end{proof}
The results of Corollary \ref{cor:special_case_gap_indep} confirm that the lower and upper bounds of the expectation gap decrease with dimension, and we show in Corollary \ref{cor:upper_bound_auroc_appendix} that the upper bound of AUROC can decrease with dimension if we include additional assumption on moment.

\begin{corollary}[Dimensionality and AUROC Upper Bound]
\label{cor:upper_bound_auroc_appendix}
Building on the assumptions of Corollary \ref{cor:special_case_gap_indep}, suppose the $n$-th absolute central moment of the log-likelihood difference, $\log p_\theta(Y) - \log p_\theta(X)$, scales as $\mathcal{O}(d^k)$ for some $n > 1$ and $k < n$. In this case, if the average log-likelihood gap becomes negative, the maximum achievable AUROC for distinguishing samples from $P$ and $Q$ is inversely related to the dimensionality $d$. This indicates that as the dimension increases, the likelihood test becomes fundamentally less effective at separating normal and abnormal samples. 
\end{corollary}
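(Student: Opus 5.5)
Let $X\sim P$ and $Y\sim Q$ be drawn independently, and set $D=\log p_\theta(Y)-\log p_\theta(X)$. The AUROC of the simple likelihood test (normal data should receive the higher likelihood) is $\Pr(\log p_\theta(X)>\log p_\theta(Y))=\Pr(D<0)$. The plan is to bound this probability from above by a generalized Markov (Chebyshev-type) inequality applied to the $n$-th absolute central moment of $D$. The mean of $D$ will be controlled by Corollary \ref{cor:special_case_gap_indep}.

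\emph{Step 1: the mean of $D$ grows linearly in $d$.} We have $\mathbb{E}[D]=\mathbb{E}_{Q}[\log p_\theta]-\mathbb{E}_P[\log p_\theta]$. By Corollary \ref{cor:special_case_gap_indep} (using Lemma \ref{lemma:kldivergence_approx} to replace $P_\theta$ by $P$ in the limit $\theta\to\theta_0$), $-\mathbb{E}[D]\le d\,c_k$ with $c_k=D_{KL}(q_k\|p_k)+\mathbb{H}(q_k)-\mathbb{H}(p_k)<0$. Hence $\mu_d:=\mathbb{E}[D]\ge d\,|c_k|=:c\,d$ with $c>0$. This is exactly the regime in which the average log-likelihood gap is negative.

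\emph{Step 2: a tail bound.} The event $D<0$ implies $D-\mu_d<-\mu_d$, so $|D-\mu_d|>\mu_d$. Markov's inequality applied to $|D-\mu_d|^n$ gives
\begin{equation*}
\mathrm{AUROC}=\Pr(D<0)\le \Pr\bigl(|D-\mu_d|\ge \mu_d\bigr)\le \frac{\mathbb{E}|D-\mu_d|^n}{\mu_d^{\,n}}\le \frac{C d^{k}}{(c\,d)^{n}}=\frac{C}{c^{n}}\,d^{-(n-k)} .
\end{equation*}
The moment hypothesis $\mathbb{E}|D-\mu_d|^n=\mathcal{O}(d^k)$ supplies the constant $C$. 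Since $k<n$, the bound decays polynomially in $d$ and tends to $0$. This gives the claimed inverse relation between the maximum achievable AUROC and the dimension. For the natural choice $n=2$, $k=1$ (independent coordinates, so the variance adds up linearly) the bound is $\mathcal{O}(1/d)$, i.e.\ literally inversely proportional to $d$.

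\emph{Main obstacle.} The delicate point is Step 1: the bound on $\mu_d$ must hold for the trained model $p_\theta$, not only for the true $p$. Corollary \ref{cor:special_case_gap_indep} is stated with an approximation "$\approx$". I would make it precise by taking $\theta$ close enough to $\theta_0$ that the error terms $D_{KL}(P\|P_\theta)$ and $|D_{KL}(Q\|P_\theta)-D_{KL}(Q\|P)|$ are at most $c\,d/2$; Lemma \ref{lemma:kldivergence_approx} and the pointwise convergence allow this for each fixed $d$. Then $\mu_d\ge c\,d/2$, and the argument goes through with a constant changed only by a factor $2^n$. The moment scaling itself is taken as an assumption of the statement, so no further work is needed there.
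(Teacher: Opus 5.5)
Your proposal is correct and follows essentially the same route as the paper: you bound $\mathrm{AUROC}=\Pr(D<0)$ by $\Pr(|D-\mu|\ge\mu)\le \mathbb{E}|D-\mu|^n/\mu^n$ via the polynomial Markov inequality, then lower-bound $\mu$ linearly in $d$ using Corollary \ref{cor:special_case_gap_indep}, which gives the $d^{k-n}$ decay. Your version is slightly cleaner than the paper's, since you take $k$ as the argmax of the full per-coordinate term $D_{KL}(q_i\|p_i)+\mathbb{H}(q_i)-\mathbb{H}(p_i)$ and make the ``$\approx$'' step quantitative; however, like the paper, you tacitly need $|c_k|$ to stay bounded away from zero uniformly in $d$ (e.g.\ a uniform per-coordinate margin, as in the i.i.d.\ case) for the bound to actually decay.
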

\begin{proof}
By \citet{zhang2021understanding}, AUROC of likelihood test is defined by
\begin{align}
    \text{AUROC} = \Pr(\log p_\theta(X) > \log p_\theta(Y)), \: X\sim P, \:Y \sim Q
\end{align}
Let $\mu = \mathbb{E}_{X\sim P, \:Y\sim Q}[\log p_\theta(Y) - \log p_\theta(X)]>0$. Then we can derive:
\begin{align}
    \begin{split}
        &1-\text{AUROC}  \\
        &=\Pr(\log p_\theta(X) < \log p_\theta(Y)) \\
        &=\Pr(\log p_\theta(Y) - \log p_\theta(X) > 0) \\
        &=\Pr(\mu-(\log p_\theta(Y) - \log p_\theta(X)) < \mu) \\
        &\ge\Pr(|\mu-(\log p_\theta(Y) - \log p_\theta(X))| < \mu) \\
        &\ge 1-\frac{\mathbb{E}[|\log p_\theta(Y) - \log p_\theta(X)-\mu|^n]}{\mu^n} \:(\because \text{polynomial Markov inequality})\\
    \end{split}
\end{align}
So, AUROC bounded above:
\begin{align}
    \begin{split}
        \text{AUROC} \le \frac{\mathbb{E}[|\log p_\theta(Y) - \log p_\theta(X)-\mu|^n]}{\mu^n}
    \end{split}
\end{align}
Since we know from Corollary \ref{cor:special_case_gap_indep} that upper bound of $-\mu =\mathbb{E}_{X\sim P, \:Y\sim Q}[\log p_\theta(X) - \log p_\theta(Y)]$ decreases linearly in $d$, we can derive the following inequality:
\begin{align}
    \begin{split}
        \mu &=\mathbb{E}_{X\sim P, \:Y\sim Q}[\log p_\theta(Y) - \log p_\theta(X)] \\
        &\ge - d(\max_{i\in[d]} D_{KL}(q_i||p_i) + \mathbb{E}_{x_i\sim q_i}[-\log q_i(x_i)] - \mathbb{E}_{x_i\sim p_i}[-\log p_i(x_i)])\\
        &= d(\min_{i\in[d]}\mathbb{E}_{x_i\sim p_i}[-\log p_i(x_i)] - \mathbb{E}_{x_i\sim q_i}[-\log q_i(x_i)] -  D_{KL}(q_i||p_i)) \\
        &>0\\
    \end{split}
\end{align}

Then, we can obtain inequality below:`
\begin{align}
    \frac{1}{\mu^n} \le \frac{1}{d^n(\min_{i\in[d]}\mathbb{E}_{x_i\sim p_i}[-\log p_i(x_i)] - \mathbb{E}_{x_i\sim q_i}[-\log q_i(x_i)] -  D_{KL}(q_i||p_i))^n}
\end{align}

Therefore, $\exists M\in \mathbb{R}$ such that 
\begin{align}
\begin{split}
    \text{AUROC}&\le \frac{\mathbb{E}[|\log p_\theta(Y) - \log p_\theta(X)-\mu|^n]}{d^n(\min_{i\in[d]}\mathbb{E}_{x_i\sim p_i}[-\log p_i(x_i)] - \mathbb{E}_{x_i\sim q_i}[-\log q_i(x_i)] -  D_{KL}(q_i||p_i))^n}\\
    &= \frac{\mathcal{O}(d^k)}{d^n(\min_{i\in[d]}\mathbb{E}_{x_i\sim p_i}[-\log p_i(x_i)] - \mathbb{E}_{x_i\sim q_i}[-\log q_i(x_i)] -  D_{KL}(q_i||p_i))^n}\\
    &\le \frac{ d^{k-n}|M|}{(\min_{i\in[d]}\mathbb{E}_{x_i\sim p_i}[-\log p_i(x_i)] - \mathbb{E}_{x_i\sim q_i}[-\log q_i(x_i)] -  D_{KL}(q_i||p_i))^n}\\
\end{split}
\end{align}

Because $k-n<0$, upper bound of AUROC is inversely proportional to dimension.
\end{proof}

We note that the assumption that absolute $n$-th central moment of $\log p_\theta(Y) - \log p_\theta(X)$ is $\mathcal{O}(d^k)$ such that $k<n$ is not a strong assumption. For example, $X$ and $Y$ are isotropic Gaussian, $\log p_\theta(X)$ is a form that adds a constant to a variable that follows a Chi-square distribution  with $d$ degrees of freedom. Then, the second central moment of Chi-square distribution with $d$ degrees of freedom, so second moment of $\log p_\theta(Y) - \log p_\theta(X)$ satisfies $\mathcal{O}(d)$ ($\because$ $\text{Var}(X-Y) = \text{Var}(X)+\text{Var}(Y)$), and most distributions satisfy this condition. In this example, it is possible to satisfy the assumption of  Corollary \ref{cor:upper_bound_auroc_appendix} because the variance increases less than $\mathcal{O}(d^2)$ with respect to dimension. In addition, the assumption is satisfied if at least one of the $n$-th absolute central moments the growth is slower than $\mathcal{O}(d^n)$.  Therefore, using this assumption does not impose many restrictions, since it is satisfied except in extreme cases (e.g., when a term in the distribution function includes $d$).

\section{Ablation Study of Anomaly Detection}
\label{detection_performance_ano_types}

\begin{table}[!h]
\caption{Anomaly detection performance across various anomaly types for deep models.}
\label{tab:ano_type_performance}
\centering
\small
\setlength{\tabcolsep}{6pt}
\resizebox{0.98\linewidth}{!}{
\begin{tabular}{llccccccc}
\toprule
Type & Metric & GOAD & DeepSVDD & NeutraLAD & ICL & MCM & DRL & NF-SLT \\
\midrule
\multirow{2}{*}{Local}
& AUROC & 0.7297 & 0.9605 & 0.9664 & 0.9646 & \textbf{0.9948} & 0.9545 & 0.9919 \\
& AUPRC & 0.7922 & 0.9678 & 0.9702 & 0.9697 & \textbf{0.9953} & 0.9635 & 0.9926 \\
\midrule
\multirow{2}{*}{Clustered}
& AUROC & \textbf{1} & \textbf{1} & 0.9174 & \textbf{1} & \textbf{1} & \textbf{1} & \textbf{1} \\
& AUPRC & \textbf{1} & \textbf{1} & 0.8620 & \textbf{1} & \textbf{1} & \textbf{1} & \textbf{1} \\
\midrule
\multirow{2}{*}{Global}
& AUROC & \textbf{1} & \textbf{1} & \textbf{1} & \textbf{1} & \textbf{1} & \textbf{1} & \textbf{1} \\
& AUPRC & \textbf{1} & \textbf{1} & \textbf{1} & \textbf{1} & \textbf{1} & \textbf{1} & \textbf{1} \\
\midrule
\multirow{2}{*}{Dependency}
& AUROC & 0.9986 & 0.9960 & 0.9974 & 0.9997 & 0.9990 & 0.9997 & \textbf{0.9999} \\
& AUPRC & 0.9983 & 0.9953 & 0.9966 & 0.9996 & 0.9983 & 0.9997 & \textbf{1} \\
\bottomrule
\end{tabular}
}
\end{table}

We apply the protocol for generating four types of anomalies suggested in \citet{han2022adbench} to the real dataset Satellite to verify whether NF-SLT can detect various types of anomalies well. Descriptions of the four types of anomalies and how synthetic anomaly data were generated are as follows: 
\begin{itemize}
    \item \textbf{Local anomalies} are data points that significantly differ from others in their immediate surroundings. After generating normal data using Gaussian Mixture Model (GMM, \citet{milligan1985algorithm, steinbuss2021benchmarking}), local anomalies are generated by multiplying the covariance by the scaling factor $\alpha = 5$.  
    \item \textbf{Global anomalies} are data points distributed widely across the space but remain distinctly separate from the normal data distribution. Normal data is created in the same way as when creating local anomalies, and global anomalies are sampled from uniform distribution $Unif(\alpha\cdot\min(X^k), \alpha\cdot\max(X^k))$ with the scaling factor $\alpha$ set to 1.1. $\bold{X^k}$ refers to the $k$-th feature and creates anomalies by adjusting the maximum and minimum boundary values of the feature.
    \item \textbf{Dependency anomalies} are data points that fail to conform to the usual relationships or dependencies observed in normal data. Create normal data by capturing the dependency between features of normal data using Vine Copula \citep{aas2009pair}. By estimating the probability density function of each feature through Kernel Density Estimation (KDE, \citet{hastie2009elements}) and dependency anomalies with independent features are created.
    \item \textbf{Clustered anomalies} are groups of similar data points that collectively differ greatly from the normal data distribution. Normal data is created in the same way as when creating local anomalies, clustered anomalies are generated by multiplying the mean vector estimated through GMM by the scaling factor $\alpha = 5$.
\end{itemize}

The comparison models were tested using the seven deep models used as comparison models in the main experiment of the paper, and the performance was measured using AUROC and AUPRC. According to the results of Table \ref{tab:ano_type_performance}, it can be confirmed that NF-SLT has the best detection performance among deep models in all anomaly types except for local anomaly types. In addition, since it shows performance next to MCM even for local anomaly types, the results verify the performance of NF-SLT in detecting various anomaly types.

Additionally, we evaluated NF-SLT on datasets with a significant categorical composition using simple preprocessing techniques (e.g., one-hot encoding). We conducted experiments on categorical-rich datasets from ADBench (InternetAds, campaign, census) and ODDs (nsl-kdd; \citet{rayana2016odds}). For the ADBench datasets, the input data were scaled using RobustScaler from Scikit-learn for all models except NeuTraLAD, which was trained without feature scaling due to performance degradation observed when scaling was applied. For the nsl-kdd dataset, no feature scaling was applied for any model, except for DRL, which employed the StandardScaler in accordance with the preprocessing configuration of the official implementation. The results of this experiment are reported in Table \ref{tab:categorical_data}, demonstrating that NF-SLT maintains strong performance even on datasets dominated by categorical features.

\begin{table}[!h]
\centering
\caption{AUROC on datasets with mostly categorical features. The number after the slash indicates the relative rank among the seven deep models.}
\label{tab:categorical_data}
\small
\setlength{\tabcolsep}{6pt}
\resizebox{0.98\linewidth}{!}{
\begin{tabular}{lccccccc}
\toprule
Dataset & DeepSVDD & GOAD & NeuTraLAD & ICL & MCM & DRL & NF-SLT \\
\midrule
InternetAds & 0.6013 / 7 & 0.7356 / 5 & 0.7112 / 6 & \textbf{0.8815 / 1} & 0.7973 / 3 & 0.7603 / 4 & 0.8746 / 2 \\
campaign    & 0.7929 / 3 & 0.2837 / 7 & 0.7814 / 4 & 0.6933 / 6          & 0.7987 / 2 & 0.7525 / 5 & \textbf{0.8059 / 1} \\
census      & 0.6421 / 3 & 0.4140 / 6 & 0.4991 / 5 & 0.3917 / 7          & \textbf{0.7478 / 1} & 0.5646 / 4 & 0.7186 / 2 \\
nsl-kdd     & 0.8951 / 5 & 0.9026 / 4 & 0.9374 / 3 & 0.8798 / 6          & 0.8757 / 7 & 0.9459 / 2 & \textbf{0.9598 / 1} \\
\bottomrule
\end{tabular}
}
\end{table}
We additionally conducted a data contamination sensitivity test for NF-SLT.
For this experiment, we selected the annthyroid, cardio, and letter datasets from ADBench. NF-SLT was trained on datasets consisting primarily of normal samples with a small proportion of anomaly samples intentionally mixed into the training data. After training, anomaly detection was performed, and the resulting performance was evaluated using AUROC, which is reported in Table \ref{tab:contamination}. In Table \ref{tab:contamination}, the contamination ratio denotes the proportion of anomaly samples relative to the number of normal samples in the training set.

\begin{table}[!h]
\centering
\caption{Contamination sensitivity analysis of NF-SLT.}
\label{tab:contamination}
\small
\setlength{\tabcolsep}{8pt}
\begin{tabular}{lccc}
\toprule
Contamination Ratio & annthyroid & cardio & letter \\
\midrule
1\% & 0.9048 & 0.9032 & 0.9195 \\
3\% & 0.8806 & 0.8797 & 0.9126 \\
5\% & 0.8752 & 0.8621 & 0.9005 \\
\bottomrule
\end{tabular}
\end{table}

Despite a relatively high contamination level of up to 5\%, the observed degradation in AUROC remained limited to approximately 2–4\%. These results demonstrate that NF-SLT exhibits strong robustness to data contamination, maintaining stable anomaly detection performance even in the presence of non-negligible anomalous samples during training.

\newpage

\section{Robustness of Intrinsic Dimension Estimator}
\label{sec:id_robustness}

We further assess the robustness of our intrinsic-dimension (ID) analysis along four axes: (i) estimator choice, (ii) sub-sampling, (iii) feature scaling, and (iv) categorical-feature handling.

\textbf{Alternative ID estimator beyond TwoNN and MLE}
To evaluate estimator dependence, we additionally estimate intrinsic dimension using IPCA~\citep{fan2010intrinsic} and compare it with TwoNN and MLE on representative tabular datasets:

\begin{table}[!h]
    \centering
    \small
    \setlength{\tabcolsep}{4.0pt}
    \caption{ID estimates from an alternative estimator IPCA alongside TwoNN and MLE on representative tabular datasets.}
    \label{tab:id_alt_estimator}
    \begin{tabular}{lcccccccccc}
        \toprule
        Method & magicgamma & satellite & landsat & waveform & Wilt & annthyroid & breastw & cover & fault & fraud \\
        \midrule
        TwoNN & 7 & 15 & 14 & 17 & 5 & 3 & 4 & 3 & 5 & 4 \\
        MLE   & 5 &  9 & 11 & 16 & 4 & 3 & 4 & 4 & 5 & 8 \\
        IPCA  & 3 &  2 &  2 &  2 & 1 & 1 & 3 & 1 & 5 & 4 \\
        \bottomrule
    \end{tabular}
\end{table}

IPCA yields systematically smaller ID estimates than TwoNN/MLE, whereas TwoNN and MLE remain broadly consistent with each other. Since our analysis relies on qualitative ID trends rather than the absolute scale of a particular estimator, we use TwoNN as the primary estimator throughout the paper, with the cross-estimator agreement serving as supporting evidence.

\textbf{Robustness to sub-sampling}
We next estimate ID using TwoNN under different sub-sampling ratios:

\begin{table}[!h]
    \centering
    \small
    \setlength{\tabcolsep}{6.0pt}
    \caption{Stability of TwoNN ID estimates under different sub-sampling ratios on representative tabular datasets.}
    \label{tab:id_subsampling}
    \begin{tabular}{lcccccccc}
        \toprule
        Ratio & magicgamma & satellite & landsat & waveform & Wilt & annthyroid & breastw & fault \\
        \midrule
        100\% & 7 & 15 & 14 & 17 & 5 & 3 & 4 & 5 \\
         90\% & 7 & 15 & 14 & 17 & 5 & 3 & 4 & 5 \\
         80\% & 7 & 14 & 14 & 17 & 5 & 3 & 4 & 5 \\
         50\% & 7 & 14 & 13 & 16 & 5 & 4 & 4 & 4 \\
        \bottomrule
    \end{tabular}
\end{table}

The estimated IDs remain nearly unchanged across ratios, indicating that TwoNN is stable under sub-sampling at the levels considered.

\textbf{Robustness to different feature scalings}
We also examine the sensitivity of TwoNN ID estimates to standard feature scaling choices:

\begin{table}[!h]
    \centering
    \small
    \setlength{\tabcolsep}{6.0pt}
    \caption{Sensitivity of TwoNN ID estimates to common feature-scaling choices on representative tabular datasets.}
    \label{tab:id_scaling}
    \begin{tabular}{lcccccccc}
        \toprule
        Ratio & magicgamma & satellite & landsat & waveform & Wilt & annthyroid & breastw & fault \\
        \midrule
        No Scaler       & 7 & 15 & 14 & 17 & 5 & 3 & 4 & 5 \\
        Standard Scaler & 7 & 15 & 14 & 16 & 4 & 5 & 4 & 5 \\
        Minmax Scaler   & 7 & 15 & 14 & 17 & 4 & 4 & 5 & 4 \\
        \bottomrule
    \end{tabular}
\end{table}

TwoNN produces comparable ID estimates across scaling schemes, suggesting that our ID characterization is not driven by a particular normalization choice.

\textbf{Categorical variables and one-hot encoding}
Finally, using the Credit Card Client dataset with categorical features from~\citep{yeh2009comparisons}, we compare label encoding (default) and one-hot encoding. TwoNN returns the same intrinsic dimension (ID $=6$) under both preprocessing choices, indicating that the estimate is not sensitive to the categorical-feature encoding strategy.

Overall, these results suggest that the ID-based characterization used in the paper is stable across estimator choice, sub-sampling, feature scaling, and categorical preprocessing.

\section{Model Details and Hyperparameter Searching Space}
\label{model_detail_hyp}
Anomaly detection in tabular data has garnered significant attention due to its relevance in various real-world applications, such as fraud detection, medical diagnosis, etc. \citep{yin2024mcm}. Anomaly detection tasks often suffer from imbalance problems; hence several unsupervised methods have been worked because they have the advantage of not having from this problem. The learning method of anomaly detection varies depending on the presence of a label; however, this section only introduces unsupervised methods that learn without having information about the label. Because existing approaches can be broadly categorized into shallow models and deep models, it is divided into two categories and explained in the corresponding sections.

\textbf{Shallow Models} Principal Component Analysis (PCA, \citet{wold1987principal}) is a dimension reduction method that projects data to a lower dimension than the original using Singular Value Decomposition (SVD). This algorithm can also be utilized as an anomaly detection method by using the reconstruction error when the latent vector projected in low dimensions is reconstructed to the original data as an anomaly score \citep{shyu2003novel}.The Local Outlier Factor (LOF, \citep{breunig2000lof}) is a method that adopts neighbors to determine anomaly scores from a local perspective. Isolation forest (IF, \citep{liu2008isolation} uses the concept that normal data is easier to isolate than outliers, one-class SVM(OCSVM, \citep{scholkopf1999support}) determines the decision boundary by determining a support vector that can sufficiently explain the given training data well, and determines that data that exist outside this boundary are outliers. COPOD \citep{li2020copod} and ECOD \citep{li2022ecod} are similar anomaly detection algorithms that utilizing the Empirical Cumulative Distribution Function (ECDF), and measure anomaly scores using the extremeness of input data under ECDF. 

\textbf{Deep Models} DAGMM \citep{zong2018deep} utilizes a deep autoencoder to generate a low-dimensional representation and reconstruction error for each input data point, which is further fed into a Gaussian Mixture Model (GMM). DeepSVDD \citep{ruff2018deep} trains to bring the input data representation closer to the predefined center using a neural network, sets the boundary of the hypersphere and determines that data existing outside this are outliers. GOAD \citep{Bergman2020Classification-Based} is a model that generalizes the transformation-based self-supervised method used in the image domain and applies it to the tabular domain. NeuTraLAD \citep{qiu2021neural} and ICL \citep{shenkar2022anomaly} applied the contrastive learning method, one of the self-supervised methods, to the tabular domain to improve performance. DO2HSC \citep{zhang2024deep} improves the limitations of DeepSVDD with a hypersphere assumption through orthogonal projection and double hypersphere decision boundary. MCM \citep{yin2024mcm} performs anomaly detection by extending the masking self-supervised method adopted in the NLP or image domain to the tabular domain. In addition, DO2HSC \citep{zhang2024deep} was excluded from the experiment because, after checking their implementation code, it was confirmed that the model uses the statistics of the test data when performing orthogonal projection,  which was judged to be biased when compared to other models. NPT-AD \citep{thimonier2024beyond} was excluded from the experiments due to its excessively long training time for high-dimensional datasets with numerous data points.

Additionally, we provide details of the 13 models we implemented and the hyperparameter search space for each of them. Because it is impractical to search over every possible hyperparameter, we selected a small set of hyperparameters that we consider important for each model. For each search space, we chose commonly used values, ran all hyperparameter combinations 10 times on the 47 tabular datasets from ADBench, and then selected the combination with the highest average AUROC across datasets as the representative setting for that model. In Table~\ref{tab:hyperparam_searching_space}, we record the hyperparameter search space used in our experiments, and in Table~\ref{tab:optimal_hyperparam_comb}, we record the optimal hyperparameter combination finally selected from the hyperparameter search space. In the description of the deep model where the implementation code resides, we only specify the hyperparameters we modified.

\textbf{PCA} We implemented PCA via the PyOD library, and chose n\_component\_ratio as the hyperparameter to explore. n\_component\_ratio is a hyperparameter for what percentage of the dimensionality of the original input data should be reduced.

\textbf{LOF} We implemented LOF via the PyOD library and chose $N$ as the hyperparameter to explore. $N$ is a hyperparameter for how many neighbors to consider.

\textbf{IF} We implemented IF via the PyOD library and chose $N$ as the hyperparameter to explore. $N$ is the hyperparameter for how many trees to ensemble.

\textbf{OCSVM} We have implemented OCSVM via the scikit-learn library and have chosen kernel as the hyperparameter to explore. Kernel is the hyperparameter for which function to choose as the kernel function of OCSVM.

\textbf{COPOD} We implemented COPOD via the PyOD library and did not set a hyperparameter searching space because the model is hyperparameter-free.

\textbf{ECOD} We implemented ECOD via the PyOD library and did not set a hyperparameter searching space because the model is hyperparameter-free like COPOD.

\textbf{DAGMM} We ran DAGMM experiments based on the model implemented in \url{https://github.com/mperezcarrasco/PyTorch-DAGMM}. We set the batch size to 512, the epoch to 200. We chose $n_{gmm}$, $\lambda_{energy}$, and $\lambda_{cov}$ as hyperparameters to explore. $n_{gmm}$ is the hyperparameter for how many Gaussian mixture components to assume, $\lambda_{energy}$ is the weight of the sample energy term in the loss function, and $\lambda_{cov}$ is the weight of the regularization term on the covariance to avoid the singularity problem. Additionally, we set the hyperparameters that we thought were important in DAGMM as follows, but got nearly the same performance from all combinations.

\textbf{DeepSVDD} We implemented DeepSVDD ourselves using A as a reference. First, we pretrain using Autoencoder for 100 epochs and define the average value of the latent vector obtained by encoding the train data using the corresponding encoder part as the center. Then, we implemented DeepSVDD by training the encoder for 200 epochs with the loss function of the MSE (Mean Squared Error) of the center. The encoder consists of 3 layers with the same hidden size, the batch size is 512, the learning rate is 1e-3, the optimizer is AdamW \citep{loshchilov2018decoupled}, activation function to ReLU, the weight decay is 1e-4, and the learning rate scheduler is CosineAnnealingWarmRestarts \citep{loshchilov2017sgdr}. We set learning rate and latent dimension as the hyperparameters to search for (latent dimension and hidden size are the same).

\textbf{GOAD} We ran GOAD experiments based on the model implemented in \url{https://github.com/lironber/GOAD}. We set the batch size to 512, the epoch to 200. We chose $n_{rot}$, $d_{latent}$, and $d_{out}$ as hyperparameters to explore. $n_{rot}$ is a hyperparameter for how many transformations to perform, and $d_{latent}$ and $d_{out}$ represent the latent dimension and the dimension after transforming the input data, respectively.

\textbf{NeuTraLAD} We ran NeuTraLAD experiments based on the model implemented in \url{https://github.com/boschresearch/NeuTraL-AD}.
We set the batch size to 512, the epoch to 200, number of transformation to 11. The hidden and latent dimensions were selected using the automatic dimension selection implemented in the implementation code. We chose $n_{enclayer}$, $n_{translayer}$, and trans\_methods hyperparameters to explore. $n_{enclayer}$ represents the number of encoder layers, and $n_{translayer}$ represents the number of layers in the block that performs the transformation. Finally, trans\_method is a hyperparameter for how we want to perform the transformation.

\textbf{ICL} We ran ICL experiments based on author's official supplementary at \url{https://openreview.net/forum?id=_hszZbt46bT}. We set 
the batch size 512. We chose $\tau$ and $d_{latent}$ as hyperparameters to explore. Since earlystopping is implemented in the implementation code, epoch used 2000 as it was specified in the implementation code. $\tau$ represents the softmax temperature, $d_{latent}$ represents the latent dimension.

\textbf{MCM} We ran MCM experiments based on author's official supplementary at \url{https://openreview.net/forum?id=lNZJyEDxy4}. According to the paper, the authors tuned only $\lambda$ and learning rate per dataset, so we set these two hyperparameters to explore. $\lambda$ represent the weight of mask diversity loss.

\textbf{DRL} We ran DRL experiments based on author's official supplementary at \url{https://openreview.net/forum?id=CJnceDksRd}. We adopted the hyperparameters used in the loss function directly from the official implementation. In contrast, we tuned the architectural hyperparameters, including the number of predefined bases in the latent space ($n_{basis}$), the number of encoder layers ($n_{enclayer}$), and the dimensionality of the latent representation ($d_{latent}$).

\textbf{NF-SLT} We implemented NICE based on \url{https://github.com/DakshIdnani/pytorch-nice}. The coupling layer block was set to 10, the epoch was set to 200, and the weight decay was set to 1e-4. The optimizer was AdamW, and the learning rate scheduler was CosineAnnealingWarmRestarts. The batch size was set to 512. The prior distribution of the latent vector was set to $\mathcal{N}(0,I_d)$, and then trained to minimize the negative log-likelihood over the training latent vector. Additionally, due to the implementation structure of the coupling layer, it cannot be implemented when an odd-dimensional vector is input, which was solved by simply adding a single zero padding. We chose learning rate, $d_{latent}$, and $n_{layer}$ as hyperparameters to explore. $d_{latent}$ represents the latent dimension of the coupling layer block, and $n_{layer}$ represents the number of layers in the coupling layer block.

In addition, we quantify the hyperparameter sensitivity of NF-SLT by comparing the metrics in Table~\ref{tab:auroc_unfair_full} (dataset-wise tuned) and Table~\ref{tab:auroc_fair} (globally tuned), and report their differences in Table~\ref{tab:hyperparameter_sensitivity}. We adopt this benchmark-level comparison because deep models have heterogeneous hyperparameter search spaces (Table~\ref{tab:hyperparam_searching_space}), which makes sensitivity analysis with respect to a common set of hyperparameters impractical. Under this setup, a larger AUROC difference indicates higher sensitivity, while negative changes in the average-rank index and fail-ratio index indicate a deterioration in relative performance. Table~\ref{tab:hyperparameter_sensitivity} shows that even when using the dataset-wise optimal configuration from Table~\ref{tab:optimal_hyperparam_comb}, NF-SLT exhibits only a marginal AUROC improvement relative to other methods, and its relative indices (average-rank and fail-ratio) in fact worsen.

\begin{table}[!h]
\centering
\caption{Hyperparameter search space for each model. $\eta$ denotes the learning rate.}
\label{tab:hyperparam_searching_space}
\small
\setlength{\tabcolsep}{7pt}
\renewcommand{\arraystretch}{1.05}

\resizebox{0.8\linewidth}{!}{
\begin{tabular}{lp{0.5\linewidth}}
\hline
Model & Hyperparameter search space \\
\hline
PCA &
$n_{\text{component\_ratio}} \in \{0.75, 0.8, 0.85, 0.9, 0.95\}$ \\
\hline
LOF &
$N \in \{1, 3, 5, 10, 20\}$ \\
\hline
IF &
$N \in \{30, 50, 100, 200, 400\}$ \\
\hline
OCSVM &
kernel $\in \{\texttt{linear}, \texttt{rbf}\}$ \\
\hline
COPOD & None \\
\hline
ECOD & None \\
\hline
DAGMM &
\begin{tabular}[t]{@{}l@{}}
$n_{\text{gmm}} \in \{3, 4, 5\}$ \\
$\lambda_{\text{energy}} \in \{0.1, 0.5, 1\}$ \\
$\lambda_{\text{cov}} \in \{5\mathrm{e}{-3}, 1\mathrm{e}{-2}, 5\mathrm{e}{-2}\}$
\end{tabular} \\
\hline
DeepSVDD &
\begin{tabular}[t]{@{}l@{}}
$\eta \in \{1\mathrm{e}{-4}, 5\mathrm{e}{-4}, 1\mathrm{e}{-3}\}$ \\
$d_{\text{latent}} \in \{64, 128, 256\}$
\end{tabular} \\
\hline
GOAD &
\begin{tabular}[t]{@{}l@{}}
$n_{\text{rot}} \in \{16, 32, 64\}$ \\
$d_{\text{latent}} \in \{8, 16, 32\}$ \\
$d_{\text{out}} \in \{4, 8, 16\}$
\end{tabular} \\
\hline
NeuTraLAD &
\begin{tabular}[t]{@{}l@{}}
$n_{\text{enclayer}} \in \{5, 6, 7\}$ \\
$n_{\text{translayer}} \in \{2, 3, 4\}$ \\
trans\_method $\in \{\texttt{mul}, \texttt{residual}\}$
\end{tabular} \\
\hline
ICL &
\begin{tabular}[t]{@{}l@{}}
$\tau \in \{1\mathrm{e}{-2}, 1\mathrm{e}{-1}, 1\}$ \\
$d_{\text{latent}} \in \{100, 200, 300\}$
\end{tabular} \\
\hline
MCM &
\begin{tabular}[t]{@{}l@{}}
$\lambda \in \{1\mathrm{e}{-2}, 1\mathrm{e}{-1}, 1, 10\}$ \\
$\eta \in \{1\mathrm{e}{-4}, 5\mathrm{e}{-4}, 1\mathrm{e}{-3}, 5\mathrm{e}{-3}, 1\mathrm{e}{-2}\}$
\end{tabular} \\
\hline
DRL &
\begin{tabular}[t]{@{}l@{}}
$n_{\text{basis}} \in \{5, 9, 13\}$ \\
$n_{\text{enclayer}} \in \{3, 4, 5\}$ \\
$d_{\text{latent}} \in \{64, 128, 256\}$
\end{tabular} \\
\hline
NF-SLT &
\begin{tabular}[t]{@{}l@{}}
$\eta \in \{1\mathrm{e}{-4}, 5\mathrm{e}{-3}, 1\mathrm{e}{-3}\}$ \\
$d_{\text{latent}} \in \{64, 128, 256\}$ \\
$n_{\text{layer}} \in \{2, 3, 4\}$
\end{tabular} \\
\hline
\end{tabular}
}

\end{table}

\begin{table}[!t]
\centering
\caption{Optimal hyperparameter combination for each model.}
\label{tab:optimal_hyperparam_comb}
\small
\setlength{\tabcolsep}{7pt}
\renewcommand{\arraystretch}{1.05}

\resizebox{0.6\linewidth}{!}{
\begin{tabular}{lp{0.3\linewidth}}
\hline
Model & Optimal hyperparameter \\
\hline
PCA   & $n_{\text{component\_ratio}} = 0.8$ \\
\hline
LOF   & $N = 20$ \\
\hline
IF    & $N = 400$ \\
\hline
OCSVM & kernel $= \texttt{linear}$ \\
\hline
COPOD & None \\
\hline
ECOD  & None \\
\hline
DAGMM &
\begin{tabular}[t]{@{}l@{}}
$n_{\text{gmm}} = 3$ \\
$\lambda_{\text{energy}} = 0.1$ \\
$\lambda_{\text{cov}} = 5\mathrm{e}{-3}$
\end{tabular} \\
\hline
DeepSVDD &
\begin{tabular}[t]{@{}l@{}}
$\eta = 1\mathrm{e}{-4}$ \\
$d_{\text{latent}} = 256$
\end{tabular} \\
\hline
GOAD &
\begin{tabular}[t]{@{}l@{}}
$n_{\text{rot}} = 32$ \\
$d_{\text{latent}} = 32$ \\
$d_{\text{out}} = 4$
\end{tabular} \\
\hline
NeuTraLAD &
\begin{tabular}[t]{@{}l@{}}
$n_{\text{enclayer}} = 5$ \\
$n_{\text{translayer}} = 3$ \\
trans\_method $= \texttt{mul}$
\end{tabular} \\
\hline
ICL &
\begin{tabular}[t]{@{}l@{}}
$\tau = 1\mathrm{e}{-1}$ \\
$d_{\text{latent}} = 300$
\end{tabular} \\
\hline
MCM &
\begin{tabular}[t]{@{}l@{}}
$\lambda = 0.1$ \\
$\eta = 5\mathrm{e}{-3}$
\end{tabular} \\
\hline
DRL &
\begin{tabular}[t]{@{}l@{}}
$n_{\text{basis}} = 9$ \\
$n_{\text{enclayer}} = 3$ \\
$d_{\text{latent}} = 256$
\end{tabular} \\
\hline
NF-SLT &
\begin{tabular}[t]{@{}l@{}}
$\eta = 1\mathrm{e}{-3}$ \\
$d_{\text{latent}} = 256$ \\
$n_{\text{layer}} = 2$
\end{tabular} \\
\hline
\end{tabular}
}

\end{table}
\clearpage

\begin{table}[!h]
\caption{Hyperparameter sensitivity of deep models. The columns that represent differences represent the differences between the indicators in Table \ref{tab:auroc_unfair_full} and Table \ref{tab:auroc_fair}. If it is sensitive to changes in hyperparameters, there will be a large difference in performance when experiments are performed with the optimal hyperparameter combination in the hyperparameter searching space for each dataset, compared to when experiments are performed with the same hyperparameter combination for all datasets.}
\centering
\label{tab:hyperparameter_sensitivity}
\small
\setlength{\tabcolsep}{8pt}
\begin{tabular}{lccc}
\toprule
Method     & AUROC Diff & Avg Rank Diff & Fail Ratio Diff \\
\midrule
DeepSVDD   & 0.0366                          & -1.02                              & -0.21           \\
GOAD       & 0.1124                          & -0.68                              & -0.04           \\
NeuTraL AD & 0.0310                          & -0.53                              & -0.06           \\
ICL        & 0.0284                          & -0.40                              & -0.15           \\
MCM        & 0.0302                          & -0.96                              & -0.04           \\
DRL        & 0.0233                          & -0.32                              & -0.02           \\
NF-SLT     & \textbf{0.0116}                 & \textbf{0.15}                      & \textbf{0.04}   \\
\bottomrule
\end{tabular}
\vspace{-0.4cm}
\end{table}

\section{Performance Comparison of NF-SLT Across Normalizing Flow Models}
\label{nice_realnvp_comp}

In this section, we set the flow model used for NF-SLT to RealNVP instead of NICE, and then compared the performance with NICE. 
RealNVP's implementation method replaced by the additive coupling layer used in NICE with an affine coupling layer. We set the hyperparameters of RealNVP to be the same as those of NICE, which recorded the performance in Table~\ref{tab:auroc_fair}, except for the learning rate. We set the learning rate to 1e-3, 5e-3, and 1e-4, and selected 5e-3, the learning rate that recorded the highest AUROC performance, to record the performance in Table~\ref{tab:nice_realnvp_comparison}. Table~\ref{tab:nice_realnvp_comparison} shows that RealNVP underperformed on all metrics compared to NICE. From this, we can conclude that affine coupling layers, which have higher expressive power than additive coupling layers, are not effective in tabular anomaly detection.

In this section, we replace the flow model in NF-SLT with RealNVP instead of NICE and compare its performance to that of NICE. In RealNVP, the additive coupling layers used in NICE are replaced with affine coupling layers. We match the hyperparameters of RealNVP to those of NICE used in Table \ref{tab:auroc_fair}, except for the learning rate. We try learning rates of 1e-3, 5e-3, and 1e-4, and report in Table Table~\ref{tab:nice_realnvp_comparison} the results for 5e-3, which yields the highest AUROC. Table Table~\ref{tab:nice_realnvp_comparison} shows that RealNVP underperforms NICE on all metrics, suggesting that affine coupling layers, although more expressive than additive ones, do not necessarily provide an advantage for tabular anomaly detection.

We do not include the Glow architecture as an additional comparison model for the following reasons. The main components of Glow are Actnorm and permutations implemented via 1×1 convolutions. First, we do not use batch normalization in our NICE and RealNVP implementations, and Actnorm is primarily designed to address situations where large batch sizes are unavailable, so its benefits are limited in our setting. Second, in our preliminary experiments, applying 1×1 convolution permutations led to worse performance than RealNVP. For these reasons, we omit Glow from our main comparisons.

\begin{table}[!h]
\caption{Performance comparison between NICE and RealNVP}
\label{tab:nice_realnvp_comparison}
\centering
\small
\setlength{\tabcolsep}{8pt}
\begin{tabular}{lccccc}
\toprule
Model   & AUROC $ \uparrow$ & AUPRC $ \uparrow$ & Avg. Rank $ \downarrow$ & Top2 Cum. Ratio $ \uparrow$   & Fail Ratio $ \downarrow$ \\
\midrule
NICE    & 0.8575 & 0.6398 & 3.74      & 0.40            & 0.06 \\
RealNVP & 0.8480 & 0.6385 & 4.45      & 0.30            & 0.09 \\
\bottomrule
\end{tabular}
\end{table}

Surely, \citet{draxler2024universality} showed that a volume-preserving model like NICE is not a universal approximator. However, the biased density estimation of NICE is not the reason for its strong performance (i.e., the architecture itself is not a universal approximator). This is evidenced by the fact that RealNVP generally performs better than most models, although it still performs worse than NICE. This can also be considered as a new research direction in the future, and researchers can determine the future research direction by relating the reason why the performance of NICE slightly decreases compared to RealNVP in the tabular domain despite its higher expressive power to the theoretical background of the flow model.

\begin{table}[!h]
\centering
\caption{Performance comparison between NICE and MAF}
\label{tab:maf_comparison}
\small
\setlength{\tabcolsep}{6pt}
\resizebox{0.8\linewidth}{!}{
\begin{tabular}{lccccccc}
\toprule
Model & Ionosphere & Lymphography & WPBC & thyroid & yeast & glass & pendigits \\
\midrule
NICE  & \textbf{0.9581} & 0.9746          & 0.5051          & \textbf{0.9840} & \textbf{0.4652} & \textbf{0.8867} & \textbf{0.9930} \\
MAF   & 0.8406          & \textbf{0.9908} & \textbf{0.5070} & 0.9752          & 0.4359          & 0.7677          & 0.9926          \\
\bottomrule
\end{tabular}
}
\end{table}

Additionally, we attempted to run anomaly detection experiments with Masked Autoregressive Flow (MAF, \citet{papamakarios2017masked}) on all datasets. However, due to numerical stability issues during MAF training, we were unable to obtain reliable convergence on every benchmark. We therefore report AUROC scores only on the seven datasets where MAF trained stably. Each AUROC value is the mean over 10 independent runs. All experiments followed exactly the same protocol as in our main paper, using a latent dimension of 128, a learning rate of 1e-4, a batch size of 100, and 10 coupling layers. Our implementation was based on the reference code from \citet{stimper2023normflows}. As shown in Table \ref{tab:maf_comparison}, NICE achieves higher AUROC than MAF on the majority of the tested datasets, supporting our observation that a more advanced or expressive architecture does not necessarily yield better anomaly detection performance in the tabular domain.

\section{Typicality Test Performance}
\label{typicality_test_perf}
Anomaly detection via the typicality test builds on the observation that normal data may occupy a small region in input space while still lying in a high-probability typical set \citep{nalisnick2019detecting}. In this approach, we decide whether a given $\bold x\in\mathbb{R}^d$ is anomalous based on whether it belongs to the $(\epsilon, N)$-typical set $\mathcal{A}_\epsilon^N[p(\bold x)]$, where $p(\bold x)$ is a generative model trained on $\bold x_{train} \in \mathcal{D}_{train}$. Following \citet{zhang2021understanding}, the anomaly score $s$ is computed as in Equation \ref{eq:typ}.

\begin{align}
\label{eq:typ}
    \begin{split}
        s = \left|-\log p(\bold x) - \hat H_p\right| \:s.t\:\hat H_p=-\frac{1}{|\mathcal{D}_{train}|}\Sigma_{\bold x\in \mathcal{D}_{train}}\log p(\bold x)
    \end{split}
\end{align}

To assess whether the typicality test provides any benefit in the tabular domain, where counterintuitive phenomena rarely occur, we applied the typicality test using the final selected hyperparameters of NF-SLT in Table \ref{tab:auroc_fair} and report the results in Table~\ref{tab:typicality_test}.

\begin{table}[!h]
\caption{Performance comparison between SLT and the typicality test}
\centering
\label{tab:typicality_test}
\small
\setlength{\tabcolsep}{8pt}
\begin{tabular}{lccccc}
\toprule
Model & AUROC $ \uparrow$ & AUPRC $ \uparrow$ & Avg. Rank $ \downarrow$ & Top2 Cum. Ratio $ \uparrow$ & Fail Ratio $ \downarrow$ \\
\midrule
SLT             & 0.8575 & 0.6398 & 3.74 & 0.40 & 0.06 \\
Typicality Test & 0.8184 & 0.6270 & 5.83 & 0.26 & 0.30 \\
\bottomrule
\end{tabular}
\vspace{-0.4cm}
\end{table}
Table \ref{tab:typicality_test} shows that the typicality test yields lower values than the simple likelihood test on all aggregate performance metrics. For almost all datasets, the performance either changes little or degrades noticeably. The only exception is the “yeast” dataset, where the simple likelihood test fails, on which the AUROC of the typicality test improves by about 8\%. Although this dataset does not satisfy our definition of a counterintuitive case, it illustrates that the typicality test can occasionally outperform simple likelihood testing on specific datasets.

\section{Additional Experiment Results}
\label{add_experiment}
In this section, we reported the additional experimental results. Table \ref{tab:auroc_unfair_full} records the AUROC and AUPRC performance per-dataset optimal hyperparameter selection. Table~\ref{tab:auroc_whole_shallow} records the AUROC for the shallow model and Table~\ref{tab:auroc_whole_deep}  records the AUROC for the deep model. Table~\ref{tab:auprc_whole_shallow} shows the AUPRC for the shallow model and Table~\ref{tab:auprc_whole_deep} shows the AUPRC for the deep model. Table~\ref{tab:auroc_std_whole_shallow} shows standard deviation of the AUROC for the shallow model and Table~\ref{tab:auroc_std_whole_deep} shows standard deviation of the AUROC for the deep model.

\begin{table}[!h]
\caption{AUROC and AUPRC performance per-dataset optimal hyperparameter selection.}
\centering
\label{tab:auroc_unfair_full}
\setlength{\tabcolsep}{6pt}
\begin{tabular}{lccccc}
\toprule
Method    & AUROC $ \uparrow$ & AUPRC $ \uparrow$ & Avg. Rank $ \downarrow$ & Top2 Cum. Ratio $ \uparrow$ & Fail Ratio $ \downarrow$ \\
\midrule
PCA       & 0.7752            & 0.5240            & 7.60                    & 0.09                        & 0.51                     \\
LOF       & 0.8447            & 0.5979            & 6.15                    & 0.23                        & 0.36                     \\
IF        & 0.8036            & 0.5099            & 6.85                    & 0.17                        & 0.36                     \\
OCSVM     & 0.6651            & 0.3895            & 10.51                   & 0.04                        & 0.77                     \\
COPOD     & 0.7471            & 0.4419            & 9.06                    & 0.06                        & 0.66                     \\
ECOD      & 0.7425            & 0.4530            & 9.45                    & 0.04                        & 0.70                     \\
DAGMM     & 0.6468            & 0.3473            & 12.19                   & 0.00                        & 0.96                     \\
DeepSVDD  & 0.8053            & 0.5840            & 6.72                    & 0.15                        & 0.28                     \\
GOAD      & 0.7210            & 0.5225            & 9.91                    & 0.06                        & 0.57                     \\
NeuTralAD & 0.8391            & 0.6262            & 5.64                    & 0.28                        & 0.23                     \\
ICL       & 0.8492            & 0.6551            & 5.30                    & 0.19                        & 0.11                     \\
MCM       & 0.8166            & 0.5988            & 6.38                    & 0.21                        & 0.32                     \\
DRL       & 0.8596            & 0.6607            & 4.72                    & 0.26                        & \textbf{0.06}            \\
NF-SLT    & \textbf{0.8691}   & \textbf{0.6749}   & \textbf{3.89}           & \textbf{0.36}               & 0.11                     \\
\bottomrule
\end{tabular}
\end{table}

\begin{table}[!h]
\caption{AUROC performance of shallow models.}
\centering
\label{tab:auroc_whole_shallow}
\small
\setlength{\tabcolsep}{3.5pt}
\renewcommand{\arraystretch}{0.92}
\resizebox{0.6\linewidth}{!}{
\begin{tabular}{lcccccc}
\toprule
Dataset          & PCA    & LOF    & IF     & OCSVM  & COPOD  & ECOD   \\
\midrule
ALOI             & 0.5494 & 0.7571 & 0.5418 & 0.5038 & 0.5153 & 0.5304 \\
Cardiotocography & 0.8285 & 0.7784 & 0.8150 & 0.7349 & 0.6625 & 0.7856 \\
Hepatitis        & 0.8165 & 0.8249 & 0.7753 & 0.5905 & 0.7986 & 0.7473 \\
InternetAds      & 0.7012 & 0.8975 & 0.4842 & 0.3755 & 0.6764 & 0.6770 \\
Ionosphere       & 0.9059 & 0.9550 & 0.9141 & 0.5439 & 0.7975 & 0.7328 \\
Lymphography     & 0.9923 & 0.9843 & 0.9979 & 0.8761 & 0.9986 & 0.9974 \\
PageBlocks       & 0.9324 & 0.9667 & 0.9284 & 0.8506 & 0.8768 & 0.9149 \\
Pima             & 0.7058 & 0.7083 & 0.7337 & 0.5874 & 0.6564 & 0.5965 \\
SpamBase         & 0.8073 & 0.6875 & 0.8406 & 0.7484 & 0.6896 & 0.6578 \\
Stamps           & 0.9255 & 0.9063 & 0.9312 & 0.8522 & 0.9290 & 0.8717 \\
WBC              & 0.9936 & 0.9737 & 0.9970 & 0.9949 & 0.9959 & 0.9959 \\
WDBC             & 0.9955 & 0.9959 & 0.9951 & 0.8539 & 0.9957 & 0.9748 \\
WPBC             & 0.4914 & 0.5642 & 0.5136 & 0.5852 & 0.5116 & 0.4679 \\
Waveform         & 0.6495 & 0.7558 & 0.7354 & 0.6529 & 0.7312 & 0.6019 \\
Wilt             & 0.2959 & 0.9218 & 0.4776 & 0.9567 & 0.3440 & 0.3935 \\
annthyroid       & 0.8071 & 0.8794 & 0.9151 & 0.7075 & 0.7756 & 0.7888 \\
backdoor         & 0.6440 & 0.6941 & 0.7659 & 0.7588 & 0.7891 & 0.8459 \\
breastw          & 0.9877 & 0.9561 & 0.9946 & 0.9921 & 0.9938 & 0.9904 \\
campaign         & 0.7678 & 0.6713 & 0.7400 & 0.6838 & 0.7828 & 0.7695 \\
cardio           & 0.9649 & 0.9088 & 0.9508 & 0.7003 & 0.9216 & 0.9348 \\
celeba           & 0.7997 & 0.5231 & 0.7177 & 0.5923 & 0.7505 & 0.7569 \\
census           & 0.7073 & 0.5954 & 0.6289 & 0.7198 & 0.6741 & 0.6596 \\
cover            & 0.9595 & 0.9899 & 0.8597 & 0.6924 & 0.8840 & 0.9203 \\
donors           & 0.8947 & 0.9888 & 0.9008 & 0.8762 & 0.8151 & 0.8886 \\
fault            & 0.5463 & 0.7073 & 0.6593 & 0.4821 & 0.4557 & 0.4693 \\
fraud            & 0.9537 & 0.8706 & 0.9502 & 0.9288 & 0.9475 & 0.9496 \\
glass            & 0.7099 & 0.8423 & 0.8041 & 0.2892 & 0.7605 & 0.7136 \\
http             & 0.9994 & 0.9405 & 0.9928 & 0.7980 & 0.9916 & 0.9786 \\
landsat          & 0.4452 & 0.7768 & 0.6139 & 0.4463 & 0.4197 & 0.3670 \\
letter           & 0.5388 & 0.8572 & 0.6419 & 0.4499 & 0.5592 & 0.5718 \\
magicgamma       & 0.7032 & 0.8355 & 0.7749 & 0.8143 & 0.6814 & 0.6387 \\
mammography      & 0.8983 & 0.8508 & 0.8806 & 0.6572 & 0.9056 & 0.9064 \\
mnist            & 0.9060 & 0.7707 & 0.8735 & 0.7149 & 0.7763 & 0.7484 \\
musk             & 1.0000 & 1.0000 & 0.9817 & 0.1856 & 0.9458 & 0.9555 \\
optdigits        & 0.5740 & 0.9601 & 0.8448 & 0.7722 & 0.6819 & 0.6037 \\
pendigits        & 0.9500 & 0.9968 & 0.9706 & 0.6706 & 0.9051 & 0.9274 \\
satellite        & 0.6761 & 0.8614 & 0.8060 & 0.8220 & 0.6339 & 0.5836 \\
satimage-2       & 0.9783 & 0.9958 & 0.9935 & 0.5883 & 0.9745 & 0.9649 \\
shuttle          & 0.9932 & 0.9979 & 0.9967 & 0.9869 & 0.9945 & 0.9929 \\
skin             & 0.6027 & 0.9384 & 0.8911 & 0.4353 & 0.4702 & 0.4881 \\
smtp             & 0.8659 & 0.8541 & 0.9126 & 0.2851 & 0.9117 & 0.8801 \\
speech           & 0.4687 & 0.8048 & 0.4689 & 0.5275 & 0.4901 & 0.4687 \\
thyroid          & 0.9811 & 0.9344 & 0.9898 & 0.6363 & 0.9402 & 0.9777 \\
vertebral        & 0.4698 & 0.5921 & 0.4401 & 0.3849 & 0.3515 & 0.4337 \\
vowels           & 0.6781 & 0.9649 & 0.7857 & 0.3771 & 0.4955 & 0.5919 \\
wine             & 0.9357 & 0.9620 & 0.9175 & 0.8058 & 0.8725 & 0.7415 \\
yeast            & 0.4356 & 0.5004 & 0.4259 & 0.3535 & 0.3828 & 0.4447 \\
\midrule
Avg AUROC        & 0.7752 & 0.8447 & 0.8036 & 0.6562 & 0.7471 & 0.7425 \\
Avg.Rank         & 7.13   & 6.11   & 6.23   & 10.34  & 8.40   & 8.74   \\
Top1 Ratio       & 0.09   & 0.06   & 0.09   & 0.04   & 0.02   & 0.02   \\
Top2 Ratio       & 0.06   & 0.11   & 0.11   & 0.02   & 0.09   & 0.04   \\
Top1,2 Cum Ratio & 0.15   & 0.17   & 0.19   & 0.06   & 0.11   & 0.06   \\
Fail Ratio       & 0.45   & 0.26   & 0.21   & 0.77   & 0.57   & 0.68   \\
\bottomrule
\end{tabular}
}
\end{table}

\begin{table}[!h]
\caption{AUROC performance of deep models.}
\centering
\label{tab:auroc_whole_deep}
\small
\setlength{\tabcolsep}{3.5pt}
\renewcommand{\arraystretch}{0.92}
\resizebox{0.8\linewidth}{!}{
\begin{tabular}{lcccccccc}
\toprule
Dataset           & DAGMM  & DeepSVDD & GOAD   & NeuTralAD & ICL    & MCM    & DRL    & NF-SLT \\
\midrule
ALOI              & 0.5024 & 0.5653   & 0.5319 & 0.5700    & 0.5892 & 0.4831 & 0.5858 & 0.5479 \\
Cardiotocography  & 0.6067 & 0.6759   & 0.3689 & 0.7626    & 0.6221 & 0.5713 & 0.6862 & 0.7558 \\
Hepatitis         & 0.6167 & 0.6844   & 0.6054 & 0.6516    & 0.6070 & 0.7985 & 0.7735 & 0.7475 \\
InternetAds       & 0.5590 & 0.6013   & 0.7356 & 0.7112    & 0.8815 & 0.7973 & 0.7603 & 0.8746 \\
Ionosphere        & 0.6580 & 0.9597   & 0.9459 & 0.9708    & 0.9556 & 0.9574 & 0.9660 & 0.9581 \\
Lymphography      & 0.8643 & 0.9756   & 0.9847 & 0.9779    & 0.9624 & 0.9934 & 0.9847 & 0.9746 \\
PageBlocks        & 0.7966 & 0.8866   & 0.9345 & 0.9803    & 0.9691 & 0.8828 & 0.9556 & 0.9656 \\
Pima              & 0.5759 & 0.5899   & 0.4432 & 0.5414    & 0.6547 & 0.6891 & 0.7036 & 0.7219 \\
SpamBase          & 0.5564 & 0.5074   & 0.3366 & 0.6406    & 0.8037 & 0.7379 & 0.8416 & 0.7724 \\
Stamps            & 0.7417 & 0.8361   & 0.6627 & 0.8836    & 0.8122 & 0.8720 & 0.9037 & 0.9327 \\
WBC               & 0.8804 & 0.9873   & 0.5576 & 0.8944    & 0.9696 & 0.9254 & 0.9751 & 0.9726 \\
WDBC              & 0.8121 & 0.9932   & 0.7814 & 0.9970    & 0.9856 & 0.9680 & 0.9827 & 0.9785 \\
WPBC              & 0.4724 & 0.4579   & 0.3565 & 0.4216    & 0.4328 & 0.5068 & 0.4876 & 0.5051 \\
Waveform          & 0.5135 & 0.6609   & 0.4180 & 0.8186    & 0.6172 & 0.6641 & 0.7146 & 0.7357 \\
Wilt              & 0.4790 & 0.6008   & 0.7298 & 0.7527    & 0.8043 & 0.8655 & 0.8486 & 0.9066 \\
annthyroid        & 0.8361 & 0.7689   & 0.5206 & 0.6685    & 0.8732 & 0.8640 & 0.7896 & 0.9181 \\
backdoor          & 0.4535 & 0.9473   & 0.2805 & 0.9456    & 0.9767 & 0.9222 & 0.9541 & 0.9343 \\
breastw           & 0.8740 & 0.9825   & 0.8402 & 0.9505    & 0.9835 & 0.9900 & 0.9771 & 0.9842 \\
campaign          & 0.5948 & 0.7929   & 0.2837 & 0.7814    & 0.6933 & 0.7987 & 0.7525 & 0.8059 \\
cardio            & 0.6091 & 0.9139   & 0.5964 & 0.7367    & 0.8456 & 0.8093 & 0.8888 & 0.9174 \\
celeba            & 0.5308 & 0.3604   & 0.5204 & 0.6888    & 0.4595 & 0.6400 & 0.7811 & 0.7340 \\
census            & 0.4619 & 0.6421   & 0.4140 & 0.4991    & 0.3917 & 0.7478 & 0.5646 & 0.7186 \\
cover             & 0.7412 & 0.6819   & 0.2571 & 0.9096    & 0.9859 & 0.8949 & 0.6940 & 0.9658 \\
donors            & 0.6243 & 0.9970   & 0.4472 & 0.9985    & 0.9630 & 0.9987 & 0.9439 & 0.9990 \\
fault             & 0.5491 & 0.7146   & 0.7028 & 0.7680    & 0.7991 & 0.5955 & 0.7564 & 0.7518 \\
fraud             & 0.8141 & 0.9517   & 0.4119 & 0.9448    & 0.9524 & 0.8488 & 0.9533 & 0.9564 \\
glass             & 0.5561 & 0.4046   & 0.4495 & 0.8305    & 0.9027 & 0.8783 & 0.9064 & 0.8867 \\
http              & 0.9966 & 0.9996   & 0.7812 & 0.9380    & 0.9999 & 0.9925 & 0.9999 & 1.0000 \\
landsat           & 0.4874 & 0.6759   & 0.6427 & 0.7582    & 0.7075 & 0.5724 & 0.7842 & 0.6543 \\
letter            & 0.5177 & 0.7565   & 0.6528 & 0.8227    & 0.9312 & 0.3674 & 0.9040 & 0.9258 \\
magicgamma        & 0.6099 & 0.7267   & 0.5677 & 0.7317    & 0.7453 & 0.8143 & 0.8307 & 0.8863 \\
mammography       & 0.5903 & 0.7263   & 0.8477 & 0.6377    & 0.8108 & 0.8321 & 0.8818 & 0.8761 \\
mnist             & 0.6168 & 0.5940   & 0.7840 & 0.9700    & 0.8931 & 0.8631 & 0.8095 & 0.9015 \\
musk              & 0.7128 & 0.9868   & 0.9999 & 1.0000    & 1.0000 & 0.9972 & 1.0000 & 1.0000 \\
optdigits         & 0.4249 & 0.7386   & 0.2742 & 0.8566    & 0.8192 & 0.9279 & 0.9142 & 0.9205 \\
pendigits         & 0.6830 & 0.9510   & 0.2513 & 0.9701    & 0.9619 & 0.9913 & 0.9689 & 0.9930 \\
satellite         & 0.6913 & 0.8275   & 0.7789 & 0.8186    & 0.8190 & 0.8008 & 0.8902 & 0.8276 \\
satimage-2        & 0.9234 & 0.9971   & 0.9960 & 0.9987    & 0.9974 & 0.9913 & 0.9957 & 0.9966 \\
shuttle           & 0.6404 & 0.9980   & 0.3801 & 0.9994    & 0.9998 & 0.9979 & 0.9993 & 0.9984 \\
skin              & 0.7063 & 0.7270   & 0.8856 & 0.9128    & 0.8954 & 0.8175 & 0.8827 & 0.9675 \\
smtp              & 0.7976 & 0.8855   & 0.6588 & 0.8583    & 0.8805 & 0.9268 & 0.9124 & 0.9201 \\
speech            & 0.5008 & 0.4688   & 0.5138 & 0.5123    & 0.5888 & 0.4110 & 0.5495 & 0.5795 \\
thyroid           & 0.9408 & 0.9256   & 0.4299 & 0.9619    & 0.9653 & 0.9584 & 0.9523 & 0.9840 \\
vertebral         & 0.4778 & 0.4794   & 0.5692 & 0.6381    & 0.5621 & 0.3544 & 0.5347 & 0.5483 \\
vowels            & 0.5295 & 0.9377   & 0.9316 & 0.7861    & 0.9914 & 0.6873 & 0.9775 & 0.9852 \\
wine              & 0.7212 & 0.9533   & 0.9430 & 0.9680    & 0.9575 & 0.4924 & 0.9415 & 0.9497 \\
yeast             & 0.5442 & 0.6315   & 0.5979 & 0.5438    & 0.5574 & 0.4654 & 0.4431 & 0.4652 \\
\midrule
Avg AUROC         & 0.6467 & 0.7687   & 0.6086 & 0.8081    & 0.8208 & 0.7864 & 0.8363 & 0.8575 \\
Avg.Rank          & 11.45  & 7.74     & 10.60  & 6.17      & 5.70   & 7.34   & 5.04   & 3.74   \\
Top1 Ratio        & 0.00   & 0.02     & 0.00   & 0.19      & 0.17   & 0.06   & 0.11   & 0.21   \\
Top2 Ratio        & 0.00   & 0.00     & 0.04   & 0.06      & 0.13   & 0.04   & 0.11   & 0.19   \\
Top1,2 Cum. Ratio & 0.00   & 0.02     & 0.04   & 0.26      & 0.30   & 0.11   & 0.21   & 0.40   \\
Fail Ratio        & 0.89   & 0.49     & 0.62   & 0.30      & 0.26   & 0.36   & 0.09   & 0.06   \\
\bottomrule
\end{tabular}
}
\end{table}

\begin{table}[!t]
\caption{AUPRC performance of shallow models.}
\centering
\label{tab:auprc_whole_shallow}
\small
\setlength{\tabcolsep}{3.5pt}
\renewcommand{\arraystretch}{0.92}
\resizebox{0.6\linewidth}{!}{
\begin{tabular}{lcccccc}
\toprule
Dataset          & PCA    & LOF    & IF     & OCSVM  & COPOD  & ECOD   \\
\midrule
ALOI             & 0.0711 & 0.1334 & 0.0649 & 0.0592 & 0.0607 & 0.0637 \\
Cardiotocography & 0.7324 & 0.6321 & 0.7085 & 0.5931 & 0.5516 & 0.6598 \\
Hepatitis        & 0.6398 & 0.6612 & 0.5448 & 0.4062 & 0.5783 & 0.4707 \\
InternetAds      & 0.5689 & 0.7937 & 0.2782 & 0.2448 & 0.6206 & 0.6216 \\
Ionosphere       & 0.9223 & 0.9609 & 0.9252 & 0.7024 & 0.7988 & 0.7743 \\
Lymphography     & 0.8923 & 0.8121 & 0.9741 & 0.6735 & 0.9830 & 0.9717 \\
PageBlocks       & 0.7340 & 0.8497 & 0.6965 & 0.7058 & 0.5254 & 0.6602 \\
Pima             & 0.6961 & 0.6776 & 0.7268 & 0.6157 & 0.6852 & 0.6323 \\
SpamBase         & 0.8427 & 0.7184 & 0.8696 & 0.8003 & 0.7039 & 0.6819 \\
Stamps           & 0.5743 & 0.5235 & 0.5807 & 0.4142 & 0.5632 & 0.4716 \\
WBC              & 0.9427 & 0.7288 & 0.9719 & 0.9510 & 0.9556 & 0.9556 \\
WDBC             & 0.9110 & 0.8804 & 0.9002 & 0.4891 & 0.9283 & 0.6830 \\
WPBC             & 0.3681 & 0.4122 & 0.3839 & 0.4560 & 0.3742 & 0.3492 \\
Waveform         & 0.0861 & 0.2901 & 0.1132 & 0.1099 & 0.1047 & 0.0771 \\
Wilt             & 0.0665 & 0.4182 & 0.0876 & 0.5389 & 0.0708 & 0.0799 \\
annthyroid       & 0.5127 & 0.5591 & 0.6231 & 0.3465 & 0.2944 & 0.4058 \\
backdoor         & 0.0783 & 0.0989 & 0.0942 & 0.1223 & 0.1275 & 0.1677 \\
breastw          & 0.9832 & 0.9181 & 0.9944 & 0.9919 & 0.9936 & 0.9906 \\
campaign         & 0.4878 & 0.3429 & 0.4645 & 0.3628 & 0.5141 & 0.4994 \\
cardio           & 0.8454 & 0.6653 & 0.7977 & 0.3625 & 0.7103 & 0.7070 \\
celeba           & 0.2083 & 0.0358 & 0.1306 & 0.0524 & 0.1656 & 0.1697 \\
census           & 0.2012 & 0.1174 & 0.1403 & 0.2093 & 0.1610 & 0.1547 \\
cover            & 0.1925 & 0.8391 & 0.0827 & 0.0534 & 0.1214 & 0.1865 \\
donors           & 0.3618 & 0.7986 & 0.4125 & 0.3834 & 0.3353 & 0.4142 \\
fault            & 0.5726 & 0.6218 & 0.6542 & 0.5049 & 0.4751 & 0.4906 \\
fraud            & 0.2661 & 0.0287 & 0.2209 & 0.3929 & 0.3669 & 0.3339 \\
glass            & 0.1791 & 0.2206 & 0.2177 & 0.0633 & 0.2001 & 0.2535 \\
http             & 0.9156 & 0.1117 & 0.5033 & 0.4461 & 0.4636 & 0.2534 \\
landsat          & 0.3169 & 0.6913 & 0.4300 & 0.3179 & 0.2977 & 0.2799 \\
letter           & 0.1466 & 0.4482 & 0.1661 & 0.1108 & 0.1284 & 0.1432 \\
magicgamma       & 0.7480 & 0.8612 & 0.8051 & 0.8429 & 0.7225 & 0.6807 \\
mammography      & 0.4513 & 0.3563 & 0.3854 & 0.0753 & 0.5459 & 0.5516 \\
mnist            & 0.6753 & 0.3479 & 0.5455 & 0.3116 & 0.3560 & 0.3045 \\
musk             & 1.0000 & 0.9995 & 0.7037 & 0.2154 & 0.4869 & 0.6140 \\
optdigits        & 0.0591 & 0.3212 & 0.1714 & 0.1326 & 0.0807 & 0.0650 \\
pendigits        & 0.4162 & 0.8520 & 0.5394 & 0.0763 & 0.2930 & 0.3924 \\
satellite        & 0.7711 & 0.8815 & 0.8455 & 0.8505 & 0.6957 & 0.6596 \\
satimage-2       & 0.8823 & 0.9305 & 0.9371 & 0.1132 & 0.8349 & 0.7419 \\
shuttle          & 0.9615 & 0.9784 & 0.9863 & 0.9732 & 0.9773 & 0.9474 \\
skin             & 0.3585 & 0.7164 & 0.6368 & 0.3285 & 0.2958 & 0.3030 \\
smtp             & 0.4636 & 0.4138 & 0.0088 & 0.0007 & 0.0081 & 0.5760 \\
speech           & 0.0361 & 0.0440 & 0.0389 & 0.0404 & 0.0377 & 0.0385 \\
thyroid          & 0.7791 & 0.6215 & 0.8265 & 0.3331 & 0.3077 & 0.6362 \\
vertebral        & 0.1922 & 0.2003 & 0.1971 & 0.1892 & 0.1680 & 0.2015 \\
vowels           & 0.1949 & 0.6011 & 0.2576 & 0.0839 & 0.0660 & 0.1421 \\
wine             & 0.7044 & 0.7307 & 0.6605 & 0.5416 & 0.5629 & 0.3364 \\
yeast            & 0.4707 & 0.5043 & 0.4787 & 0.4277 & 0.4696 & 0.4998 \\
\midrule
Avg AUPRC        & 0.5209 & 0.5606 & 0.5060 & 0.3833 & 0.4419 & 0.4530 \\
Avg.Rank         & 7.45   & 6.34   & 6.66   & 10.11  & 9.00   & 8.98   \\
Top1 Ratio       & 0.06   & 0.11   & 0.11   & 0.04   & 0.02   & 0.02   \\
Top2 Ratio       & 0.04   & 0.09   & 0.06   & 0.02   & 0.09   & 0.04   \\
Top1,2 Cum Ratio & 0.11   & 0.19   & 0.17   & 0.06   & 0.11   & 0.06   \\
Fail Ratio       & 0.45   & 0.28   & 0.32   & 0.74   & 0.62   & 0.62   \\
\bottomrule
\end{tabular}
}
\end{table}

\begin{table}[!h]
\caption{AUPRC performance of deep models.}
\centering
\label{tab:auprc_whole_deep}
\small
\setlength{\tabcolsep}{3.5pt}
\renewcommand{\arraystretch}{0.92}
\resizebox{0.8\linewidth}{!}{
\begin{tabular}{lcccccccc}
\toprule
Dataset          & DAGMM  & DeepSVDD & GOAD   & NeuTralAD & ICL    & MCM    & DRL    & NF-SLT \\
\midrule
ALOI             & 0.0589 & 0.1186   & 0.0749 & 0.0937    & 0.1058 & 0.0578 & 0.1163 & 0.0831 \\
Cardiotocography & 0.4520 & 0.6307   & 0.3829 & 0.6679    & 0.5864 & 0.5245 & 0.6272 & 0.6742 \\
Hepatitis        & 0.4622 & 0.5382   & 0.4831 & 0.3848    & 0.3773 & 0.5708 & 0.5664 & 0.5310 \\
InternetAds      & 0.3792 & 0.4077   & 0.5967 & 0.4142    & 0.8345 & 0.7551 & 0.6564 & 0.8399 \\
Ionosphere       & 0.7007 & 0.9702   & 0.9615 & 0.9761    & 0.9673 & 0.9705 & 0.9731 & 0.9676 \\
Lymphography     & 0.5683 & 0.7607   & 0.8225 & 0.7839    & 0.7166 & 0.9187 & 0.8244 & 0.7691 \\
PageBlocks       & 0.5661 & 0.7479   & 0.8474 & 0.9072    & 0.8890 & 0.6263 & 0.8532 & 0.8644 \\
Pima             & 0.5831 & 0.6349   & 0.5237 & 0.5611    & 0.6805 & 0.6922 & 0.6713 & 0.7117 \\
SpamBase         & 0.6143 & 0.6307   & 0.5289 & 0.6962    & 0.8289 & 0.7449 & 0.8710 & 0.8063 \\
Stamps           & 0.3846 & 0.4953   & 0.3601 & 0.5896    & 0.4683 & 0.4703 & 0.5356 & 0.6273 \\
WBC              & 0.6306 & 0.8947   & 0.1882 & 0.5681    & 0.7894 & 0.7778 & 0.8169 & 0.7923 \\
WDBC             & 0.3081 & 0.8617   & 0.4889 & 0.9632    & 0.8215 & 0.7424 & 0.7713 & 0.6770 \\
WPBC             & 0.3795 & 0.3655   & 0.3177 & 0.3349    & 0.3449 & 0.4092 & 0.3802 & 0.3727 \\
Waveform         & 0.0607 & 0.1041   & 0.0460 & 0.2014    & 0.1301 & 0.1196 & 0.3720 & 0.1743 \\
Wilt             & 0.1019 & 0.1150   & 0.2689 & 0.2147    & 0.3039 & 0.3129 & 0.4195 & 0.3903 \\
annthyroid       & 0.5478 & 0.4005   & 0.3897 & 0.3503    & 0.5520 & 0.5679 & 0.4093 & 0.6227 \\
backdoor         & 0.0461 & 0.6203   & 0.0527 & 0.9027    & 0.8968 & 0.3615 & 0.8937 & 0.5688 \\
breastw          & 0.8930 & 0.9821   & 0.8605 & 0.9442    & 0.9790 & 0.9887 & 0.9692 & 0.9797 \\
campaign         & 0.2771 & 0.5130   & 0.1368 & 0.5227    & 0.4218 & 0.5070 & 0.4510 & 0.5213 \\
cardio           & 0.2816 & 0.7979   & 0.4940 & 0.4121    & 0.6640 & 0.6778 & 0.7576 & 0.7557 \\
celeba           & 0.0548 & 0.0317   & 0.0426 & 0.0887    & 0.0400 & 0.0663 & 0.1256 & 0.0955 \\
census           & 0.1036 & 0.1589   & 0.0909 & 0.1163    & 0.0898 & 0.2249 & 0.1368 & 0.2084 \\
cover            & 0.0998 & 0.0483   & 0.0120 & 0.4044    & 0.7338 & 0.4159 & 0.0401 & 0.7204 \\
donors           & 0.1729 & 0.9754   & 0.0994 & 0.9853    & 0.7310 & 0.9783 & 0.5842 & 0.9879 \\
fault            & 0.5743 & 0.7123   & 0.7061 & 0.7222    & 0.7884 & 0.6307 & 0.7388 & 0.7412 \\
fraud            & 0.0515 & 0.7615   & 0.1315 & 0.6194    & 0.6445 & 0.6662 & 0.6138 & 0.7082 \\
glass            & 0.1362 & 0.1954   & 0.1929 & 0.2957    & 0.4677 & 0.4202 & 0.3840 & 0.3655 \\
http             & 0.7535 & 0.9956   & 0.3368 & 0.8768    & 0.9894 & 0.5124 & 0.9846 & 0.9917 \\
landsat          & 0.3407 & 0.5269   & 0.3988 & 0.5465    & 0.6609 & 0.3967 & 0.5849 & 0.4795 \\
letter           & 0.1360 & 0.2934   & 0.2592 & 0.3964    & 0.6395 & 0.0893 & 0.6073 & 0.6673 \\
magicgamma       & 0.6468 & 0.7545   & 0.6289 & 0.7665    & 0.8042 & 0.8549 & 0.8631 & 0.9094 \\
mammography      & 0.0888 & 0.0964   & 0.3818 & 0.1382    & 0.3841 & 0.3291 & 0.5107 & 0.3885 \\
mnist            & 0.2500 & 0.2424   & 0.5855 & 0.8695    & 0.6798 & 0.5856 & 0.5094 & 0.7080 \\
musk             & 0.3674 & 0.7550   & 0.9980 & 1.0000    & 1.0000 & 0.9918 & 1.0000 & 1.0000 \\
optdigits        & 0.0485 & 0.1621   & 0.0360 & 0.2009    & 0.1563 & 0.2755 & 0.3410 & 0.2725 \\
pendigits        & 0.1507 & 0.3583   & 0.0283 & 0.5856    & 0.6152 & 0.8148 & 0.6390 & 0.8073 \\
satellite        & 0.7369 & 0.8576   & 0.7973 & 0.8667    & 0.8632 & 0.8446 & 0.8943 & 0.8658 \\
satimage-2       & 0.4356 & 0.9706   & 0.9641 & 0.9735    & 0.9586 & 0.8117 & 0.9284 & 0.8984 \\
shuttle          & 0.2613 & 0.9818   & 0.2937 & 0.9944    & 0.9972 & 0.9608 & 0.9868 & 0.9695 \\
skin             & 0.5066 & 0.4875   & 0.6741 & 0.7835    & 0.6725 & 0.5557 & 0.7323 & 0.9226 \\
smtp             & 0.1050 & 0.6110   & 0.2985 & 0.4615    & 0.5455 & 0.3936 & 0.6302 & 0.4675 \\
speech           & 0.0409 & 0.0312   & 0.0395 & 0.0362    & 0.0507 & 0.0316 & 0.0432 & 0.0529 \\
thyroid          & 0.6545 & 0.6129   & 0.2536 & 0.6164    & 0.5694 & 0.6664 & 0.6526 & 0.7108 \\
vertebral        & 0.2437 & 0.2224   & 0.2879 & 0.3645    & 0.3090 & 0.1731 & 0.2733 & 0.2617 \\
vowels           & 0.0798 & 0.5534   & 0.6197 & 0.1764    & 0.9254 & 0.1425 & 0.8255 & 0.8755 \\
wine             & 0.4028 & 0.7368   & 0.7695 & 0.8384    & 0.7650 & 0.1736 & 0.7375 & 0.7601 \\
yeast            & 0.5627 & 0.5992   & 0.5863 & 0.5513    & 0.5573 & 0.4973 & 0.4796 & 0.5052 \\
\midrule
Avg AUPRC        & 0.3468 & 0.5388   & 0.4114 & 0.5694    & 0.6170 & 0.5383 & 0.6124 & 0.6398 \\
Avg.Rank         & 11.28  & 7.23     & 9.98   & 5.96      & 5.72   & 7.17   & 4.87   & 4.15   \\
Top1 Ratio       & 0.00   & 0.06     & 0.00   & 0.21      & 0.09   & 0.02   & 0.11   & 0.17   \\
Top2 Ratio       & 0.00   & 0.09     & 0.04   & 0.09      & 0.17   & 0.04   & 0.06   & 0.17   \\
Top1,2 Cum Ratio & 0.00   & 0.15     & 0.04   & 0.30      & 0.26   & 0.06   & 0.17   & 0.34   \\
Fail Ratio       & 0.85   & 0.40     & 0.60   & 0.32      & 0.26   & 0.38   & 0.11   & 0.06   \\
\bottomrule
\end{tabular}
}
\end{table}

\begin{table}[!h]
\caption{Standard deviation of the AUROC for shallow models.}
\centering
\label{tab:auroc_std_whole_shallow}
\small
\setlength{\tabcolsep}{3.5pt}
\renewcommand{\arraystretch}{0.92}
\resizebox{0.6\linewidth}{!}{
\begin{tabular}{lcccccc}
\toprule
Dataset          & PCA    & LOF    & IF     & OCSVM  & COPOD  & ECOD   \\
\midrule
ALOI             & 0.0012 & 0.0031 & 0.0025 & 0.0007 & 0.0009 & 0.0008 \\
Cardiotocography & 0.0055 & 0.0208 & 0.0114 & 0.0106 & 0.0046 & 0.0051 \\
Hepatitis        & 0.0137 & 0.0284 & 0.0341 & 0.0431 & 0.0275 & 0.0373 \\
InternetAds      & 0.0334 & 0.0070 & 0.0526 & 0.0094 & 0.0045 & 0.0044 \\
Ionosphere       & 0.0153 & 0.0106 & 0.0162 & 0.0667 & 0.0217 & 0.0161 \\
Lymphography     & 0.0061 & 0.0072 & 0.0036 & 0.1011 & 0.0028 & 0.0031 \\
PageBlocks       & 0.0032 & 0.0037 & 0.0030 & 0.0346 & 0.0032 & 0.0026 \\
Pima             & 0.0143 & 0.0119 & 0.0147 & 0.0381 & 0.0151 & 0.0163 \\
SpamBase         & 0.0079 & 0.0183 & 0.0063 & 0.0126 & 0.0062 & 0.0060 \\
Stamps           & 0.0251 & 0.0237 & 0.0210 & 0.0500 & 0.0159 & 0.0175 \\
WBC              & 0.0034 & 0.0074 & 0.0015 & 0.0014 & 0.0023 & 0.0023 \\
WDBC             & 0.0038 & 0.0036 & 0.0035 & 0.1023 & 0.0020 & 0.0042 \\
WPBC             & 0.0319 & 0.0283 & 0.0371 & 0.0563 & 0.0332 & 0.0323 \\
Waveform         & 0.0043 & 0.0116 & 0.0082 & 0.1012 & 0.0057 & 0.0067 \\
Wilt             & 0.0508 & 0.0070 & 0.0154 & 0.0069 & 0.0029 & 0.0033 \\
annthyroid       & 0.0192 & 0.0221 & 0.0078 & 0.0752 & 0.0024 & 0.0025 \\
backdoor         & 0.0031 & 0.0268 & 0.0115 & 0.0013 & 0.0009 & 0.0009 \\
breastw          & 0.0045 & 0.0080 & 0.0016 & 0.0021 & 0.0016 & 0.0019 \\
campaign         & 0.0022 & 0.0058 & 0.0105 & 0.0029 & 0.0008 & 0.0008 \\
cardio           & 0.0021 & 0.0083 & 0.0063 & 0.0175 & 0.0045 & 0.0041 \\
celeba           & 0.0004 & 0.0040 & 0.0103 & 0.0008 & 0.0004 & 0.0005 \\
census           & 0.0009 & 0.0025 & 0.0122 & 0.0009 & 0.0007 & 0.0007 \\
cover            & 0.0003 & 0.0006 & 0.0107 & 0.0030 & 0.0004 & 0.0002 \\
donors           & 0.0088 & 0.0006 & 0.0110 & 0.0182 & 0.0003 & 0.0002 \\
fault            & 0.0186 & 0.0243 & 0.0151 & 0.0147 & 0.0059 & 0.0052 \\
fraud            & 0.0003 & 0.0230 & 0.0013 & 0.0009 & 0.0001 & 0.0001 \\
glass            & 0.0466 & 0.0287 & 0.0263 & 0.1472 & 0.0228 & 0.0248 \\
http             & 0.0000 & 0.0060 & 0.0012 & 0.3882 & 0.0002 & 0.0003 \\
landsat          & 0.0067 & 0.0056 & 0.0121 & 0.0139 & 0.0061 & 0.0049 \\
letter           & 0.0062 & 0.0082 & 0.0148 & 0.0167 & 0.0106 & 0.0096 \\
magicgamma       & 0.0019 & 0.0026 & 0.0054 & 0.0015 & 0.0025 & 0.0025 \\
mammography      & 0.0046 & 0.0143 & 0.0025 & 0.0134 & 0.0012 & 0.0012 \\
mnist            & 0.0027 & 0.0160 & 0.0051 & 0.0109 & 0.0016 & 0.0019 \\
musk             & 0.0000 & 0.0000 & 0.0126 & 0.0000 & 0.0026 & 0.0021 \\
optdigits        & 0.0089 & 0.0077 & 0.0188 & 0.0124 & 0.0028 & 0.0028 \\
pendigits        & 0.0025 & 0.0015 & 0.0037 & 0.0137 & 0.0029 & 0.0023 \\
satellite        & 0.0022 & 0.0051 & 0.0082 & 0.0296 & 0.0033 & 0.0026 \\
satimage-2       & 0.0004 & 0.0008 & 0.0007 & 0.1826 & 0.0006 & 0.0009 \\
shuttle          & 0.0006 & 0.0005 & 0.0002 & 0.0013 & 0.0001 & 0.0001 \\
skin             & 0.0006 & 0.0048 & 0.0029 & 0.2456 & 0.0007 & 0.0008 \\
smtp             & 0.0421 & 0.0454 & 0.0019 & 0.0136 & 0.0005 & 0.0003 \\
speech           & 0.0042 & 0.0036 & 0.0075 & 0.0368 & 0.0052 & 0.0044 \\
thyroid          & 0.0012 & 0.0130 & 0.0015 & 0.2706 & 0.0023 & 0.0012 \\
vertebral        & 0.0315 & 0.0250 & 0.0453 & 0.0832 & 0.0189 & 0.0209 \\
vowels           & 0.0268 & 0.0039 & 0.0174 & 0.1791 & 0.0068 & 0.0073 \\
wine             & 0.0196 & 0.0184 & 0.0336 & 0.1725 & 0.0100 & 0.0095 \\
yeast            & 0.0177 & 0.0130 & 0.0125 & 0.0116 & 0.0098 & 0.0102 \\
\bottomrule
\end{tabular}
}
\end{table}

\begin{table}[!h]
\caption{Standard Deviation of the AUROC for deep Models}
\centering
\label{tab:auroc_std_whole_deep}
\small
\setlength{\tabcolsep}{3.5pt}
\renewcommand{\arraystretch}{0.92}
\resizebox{0.8\linewidth}{!}{
\begin{tabular}{lcccccccc}
\toprule
Dataset          & DAGMM  & DeepSVDD & GOAD   & NeuTralAD & ICL    & MCM    & DRL    & NF-SLT \\
\midrule
ALOI             & 0.0182 & 0.0085   & 0.0030 & 0.0096    & 0.0158 & 0.0063 & 0.0131 & 0.0017 \\
Cardiotocography & 0.0939 & 0.0109   & 0.0144 & 0.0157    & 0.0314 & 0.0141 & 0.0336 & 0.0169 \\
Hepatitis        & 0.0835 & 0.0753   & 0.0582 & 0.0815    & 0.0963 & 0.0200 & 0.0664 & 0.0787 \\
InternetAds      & 0.0430 & 0.0213   & 0.0442 & 0.0179    & 0.0150 & 0.0013 & 0.0181 & 0.0114 \\
Ionosphere       & 0.0788 & 0.0075   & 0.0164 & 0.0054    & 0.0082 & 0.0011 & 0.0072 & 0.0085 \\
Lymphography     & 0.1437 & 0.0111   & 0.0087 & 0.0210    & 0.0150 & 0.0020 & 0.0099 & 0.0120 \\
PageBlocks       & 0.0454 & 0.0105   & 0.0123 & 0.0044    & 0.0049 & 0.0056 & 0.0204 & 0.0027 \\
Pima             & 0.0464 & 0.0191   & 0.0349 & 0.0311    & 0.0203 & 0.0131 & 0.0136 & 0.0163 \\
SpamBase         & 0.0646 & 0.0191   & 0.0228 & 0.0157    & 0.0232 & 0.0055 & 0.0172 & 0.0200 \\
Stamps           & 0.1480 & 0.0776   & 0.1259 & 0.0495    & 0.0875 & 0.0089 & 0.0225 & 0.0243 \\
WBC              & 0.0968 & 0.0047   & 0.1129 & 0.0460    & 0.0142 & 0.0091 & 0.0106 & 0.0118 \\
WDBC             & 0.1513 & 0.0031   & 0.1144 & 0.0027    & 0.0095 & 0.0095 & 0.0123 & 0.0107 \\
WPBC             & 0.0589 & 0.0314   & 0.0347 & 0.0475    & 0.0217 & 0.0144 & 0.0508 & 0.0270 \\
Waveform         & 0.0560 & 0.0173   & 0.0395 & 0.0061    & 0.0186 & 0.0213 & 0.0814 & 0.0224 \\
Wilt             & 0.1216 & 0.0185   & 0.0418 & 0.1231    & 0.0649 & 0.0185 & 0.0265 & 0.0140 \\
annthyroid       & 0.1246 & 0.0202   & 0.0940 & 0.0251    & 0.0332 & 0.0090 & 0.0251 & 0.0097 \\
backdoor         & 0.1457 & 0.0035   & 0.0149 & 0.0050    & 0.0044 & 0.0070 & 0.0142 & 0.0070 \\
breastw          & 0.0886 & 0.0049   & 0.1070 & 0.0260    & 0.0041 & 0.0019 & 0.0079 & 0.0033 \\
campaign         & 0.0401 & 0.0283   & 0.0628 & 0.0134    & 0.0157 & 0.0028 & 0.0288 & 0.0024 \\
cardio           & 0.1023 & 0.0114   & 0.0385 & 0.0518    & 0.0326 & 0.0123 & 0.0221 & 0.0102 \\
celeba           & 0.0658 & 0.0786   & 0.0742 & 0.0140    & 0.0869 & 0.0349 & 0.0344 & 0.0284 \\
census           & 0.0499 & 0.0463   & 0.0455 & 0.0064    & 0.0146 & 0.0034 & 0.0882 & 0.0094 \\
cover            & 0.1598 & 0.0197   & 0.0697 & 0.1051    & 0.0051 & 0.0187 & 0.0286 & 0.0067 \\
donors           & 0.1132 & 0.0025   & 0.0824 & 0.0023    & 0.0398 & 0.0004 & 0.0473 & 0.0010 \\
fault            & 0.0606 & 0.0136   & 0.0299 & 0.0160    & 0.0104 & 0.0084 & 0.0086 & 0.0167 \\
fraud            & 0.0656 & 0.0045   & 0.1054 & 0.0069    & 0.0024 & 0.0110 & 0.0088 & 0.0011 \\
glass            & 0.1455 & 0.0382   & 0.0865 & 0.0594    & 0.0378 & 0.0161 & 0.0194 & 0.0532 \\
http             & 0.0029 & 0.0002   & 0.3625 & 0.1784    & 0.0001 & 0.0030 & 0.0001 & 0.0000 \\
landsat          & 0.0855 & 0.0105   & 0.0193 & 0.0085    & 0.0054 & 0.0058 & 0.0259 & 0.0127 \\
letter           & 0.0548 & 0.0087   & 0.0121 & 0.0143    & 0.0102 & 0.0038 & 0.0248 & 0.0083 \\
magicgamma       & 0.0627 & 0.0049   & 0.0233 & 0.0148    & 0.0177 & 0.0035 & 0.0157 & 0.0027 \\
mammography      & 0.1788 & 0.0139   & 0.0181 & 0.0321    & 0.0435 & 0.0059 & 0.0094 & 0.0067 \\
mnist            & 0.0637 & 0.0095   & 0.0318 & 0.0022    & 0.0103 & 0.0277 & 0.0592 & 0.0042 \\
musk             & 0.2378 & 0.0046   & 0.0002 & 0.0000    & 0.0000 & 0.0005 & 0.0000 & 0.0000 \\
optdigits        & 0.1469 & 0.0255   & 0.0434 & 0.0225    & 0.0681 & 0.0091 & 0.0701 & 0.0147 \\
pendigits        & 0.1647 & 0.0104   & 0.0807 & 0.0124    & 0.0541 & 0.0029 & 0.0235 & 0.0034 \\
satellite        & 0.0745 & 0.0044   & 0.0185 & 0.0030    & 0.0032 & 0.0022 & 0.0074 & 0.0034 \\
satimage-2       & 0.0296 & 0.0004   & 0.0012 & 0.0002    & 0.0008 & 0.0026 & 0.0020 & 0.0009 \\
shuttle          & 0.2304 & 0.0003   & 0.0570 & 0.0004    & 0.0001 & 0.0005 & 0.0003 & 0.0009 \\
skin             & 0.1689 & 0.0095   & 0.0143 & 0.0220    & 0.0509 & 0.0215 & 0.0124 & 0.0113 \\
smtp             & 0.0649 & 0.0039   & 0.0525 & 0.0584    & 0.0185 & 0.0195 & 0.0070 & 0.0108 \\
speech           & 0.0241 & 0.0221   & 0.0375 & 0.0467    & 0.0387 & 0.0057 & 0.0399 & 0.0228 \\
thyroid          & 0.0434 & 0.0132   & 0.1317 & 0.0121    & 0.0075 & 0.0038 & 0.0101 & 0.0028 \\
vertebral        & 0.0745 & 0.0316   & 0.0817 & 0.1008    & 0.0590 & 0.0471 & 0.0576 & 0.0484 \\
vowels           & 0.0895 & 0.0066   & 0.0149 & 0.0502    & 0.0055 & 0.0510 & 0.0119 & 0.0049 \\
wine             & 0.2031 & 0.0288   & 0.0493 & 0.0223    & 0.0264 & 0.0693 & 0.0482 & 0.0544 \\
yeast            & 0.0241 & 0.0219   & 0.0217 & 0.0190    & 0.0179 & 0.0124 & 0.0283 & 0.0150 \\
\bottomrule
\end{tabular}
}
\end{table}

\end{document}